\documentclass{article} % For LaTeX2e
\usepackage{arxiv}
\usepackage[utf8]{inputenc}
\usepackage[T1]{fontenc}

% Optional math commands from https://github.com/goodfeli/dlbook_notation.
%%%%% NEW MATH DEFINITIONS %%%%%

\usepackage{amsmath,amsfonts,bm}

% Mark sections of captions for referring to divisions of figures

% Highlight a newly defined term

% Figure reference, lower-case.

% Figure reference, capital. For start of sentence

% Section reference, lower-case.

% Section reference, capital.

% Reference to two sections.

% Reference to three sections.

% Reference to an equation, lower-case.
\def\eqref#1{equation~\ref{#1}}
% Reference to an equation, upper case

% A raw reference to an equation---avoid using if possible

% Reference to a chapter, lower-case.

% Reference to an equation, upper case.

% Reference to a range of chapters

% Reference to an algorithm, lower-case.

% Reference to an algorithm, upper case.

% Reference to a part, lower case

% Reference to a part, upper case

\def\1{\bm{1}}

% Random variables

% rm is already a command, just don't name any random variables m

% Random vectors

% Elements of random vectors

% Random matrices

% Elements of random matrices

% Vectors

% Elements of vectors

% Matrix

% Tensor
\DeclareMathAlphabet{\mathsfit}{\encodingdefault}{\sfdefault}{m}{sl}
\SetMathAlphabet{\mathsfit}{bold}{\encodingdefault}{\sfdefault}{bx}{n}

% Graph

% Sets

% Don't use a set called E, because this would be the same as our symbol
% for expectation.

% Entries of a matrix

% entries of a tensor
% Same font as tensor, without \bm wrapper

% The true underlying data generating distribution

% The empirical distribution defined by the training set

% The model distribution

% Stochastic autoencoder distributions

 % Laplace distribution

% \newcommand{\E}{\mathbb{E}}

% \newcommand{\R}{\mathbb{R}}

% Wolfram Mathworld says $L^2$ is for function spaces and $\ell^2$ is for vectors
% But then they seem to use $L^2$ for vectors throughout the site, and so does
% wikipedia.

 % See usage in notation.tex. Chosen to match Daphne's book.

\usepackage{adjustbox}
\usepackage{amssymb}
\usepackage{amsthm}
\usepackage{booktabs}
\usepackage{caption}
\usepackage{color}
\usepackage{enumitem}
\usepackage{graphicx}
\usepackage{hyperref}
\usepackage{minitoc}
\usepackage{multirow}
\usepackage{subcaption}
\usepackage{url}
\usepackage{wrapfig}
\usepackage{xcolor}
\usepackage{xspace}

\newtheorem{proposition}{Proposition}

    \title{\emph{LiDAR}: Sensing Linear Probing Performance in Joint Embedding SSL Architectures}

\author{
Vimal Thilak \and Chen Huang \and Omid Saremi \and Laurent Dinh \and 
Hanlin Goh \and Preetum Nakkiran \and Joshua M. Susskind \and Etai Littwin 
}
\date{Apple}  % Abuse data for affiliation

\newcommand{\chapquote}[2]{\begin{quotation} \textit{#1} \end{quotation} \begin{flushright} - #2\end{flushright} }

\begin{document}

\maketitle

\doparttoc % Tell to minitoc to generate a toc for the parts
\faketableofcontents % Run a fake tableofcontents command for the partocs

% \part{} % Start the document part
% \parttoc % Insert the document TOC

\begin{abstract}
Joint embedding (JE) architectures have emerged as a promising avenue for acquiring transferable data representations. A key obstacle to using JE methods, however, is the inherent challenge of evaluating learned representations without access to a downstream task, and an annotated dataset. Without efficient and reliable evaluation, it is difficult to iterate on architectural and training choices for JE methods. In this paper, we introduce \emph{LiDAR} (\textbf{Linear Discriminant Analysis Rank}), a metric designed to measure the quality of representations within JE architectures. Our metric addresses several shortcomings of recent approaches based on feature covariance rank by discriminating between informative and uninformative features. In essence, \emph{LiDAR} quantifies the rank of the Linear Discriminant Analysis (LDA) matrix associated with the surrogate SSL task—a measure that intuitively captures the information content as it pertains to solving the SSL task. We empirically demonstrate that \emph{LiDAR} significantly surpasses naive rank based approaches in its predictive power of optimal hyperparameters.
Our proposed criterion presents a more robust and intuitive means of assessing the quality of representations within JE architectures, which we hope facilitates broader adoption of these powerful techniques in various domains. 

\end{abstract}

\chapquote{``Measure what is measurable, and make measurable what is not so.."}{Galileo Galilei}

\section{Introduction}
In recent years, self-supervised learning (SSL) has emerged as a pivotal technique for pretraining representations on extensive, unlabeled datasets, thus effectively alleviating the often burdensome labeling requirements \citep{simclr,ijepa,Siamese,dino,vicreg,Caron2018DeepCF,Caron2020UnsupervisedLO,data2vec,BarlowTwins,mae,HaoChen2021ProvableGF,byol}. However, despite the remarkable progress made in SSL, assessing the quality of representations acquired through this method remains an open problem. This challenge is further amplified when considering \emph{Joint Embedding} (JE) architectures, which notoriously suffer from uninterpretable loss curves that offer little clue to assess the progression of training. The conventional and widely adopted approach involves evaluating model performance by employing these representations in downstream tasks. Nevertheless, when the objective is to learn versatile representations applicable across diverse domains, this method demands substantial investments of time and resources to comprehensively evaluate performance across a multitude of tasks and datasets. Alternatively, limiting assessments to only a few datasets and tasks undermines confidence in the evaluation process.

As a result, a fundamental question arises: Can we effectively evaluate the quality of learned representations without relying on explicit downstream task evaluations? Addressing this inquiry necessitates the precise definition of ``quality'' in the context of representations. Subsequently, we must explore statistical estimators capable of quantifying this quality without depending on downstream evaluations. Beyond its theoretical implications, such a metric holds significant practical value, as it aids in model selection and the development of novel SSL algorithms.

Recent literature has introduced metrics that share a common foundation, relying on statistics derived from the empirical covariance matrices taken at different layers of the model \citep{rankme,alpha}. Notably, the recently introduced \emph{RankMe} \citep{rankme} method shows that the rank of the feature covariance correlates surprisingly well with downstream performance, demonstrating SOTA results in label free hyperparameter selection. In this paper, we build upon \emph{RankMe} by proposing a simple modification to the feature covariance matrix on which the rank is calculated, which significantly and consistently improves its predictive power of downstream performance. Our method, which we refer to as \emph{LiDAR}, is motivated by a simple observation: the covariance spectrum can be easily and arbitrarily manipulated, and this manipulation is often encouraged by implicit or explicit regularizations in the SSL objectives. Consequently, a representation with a full rank covariance matrix may result from spurious factors rather than representing rich semantic features. As a result, even though methods such as \emph{RankMe} demonstrate impressive results, we show that non-trivial additional gains can be had rather effortlessly. 

We summarize our contributions below:
\begin{enumerate}
    \item We introduce \emph{LiDAR}, a method for assessing representation quality of JE SSL objectives, and theoretically motivate it. \emph{LiDAR} uses the SSL objective in its definition, providing a more intuitive and robust metric for assessing SSL representations.
    \item We conduct a comprehensive set of experiments spanning multiple JE architectures, both transformer and resnet based. These include contrastive and regularized methods, as well as newer models such as I-JEPA and data2vec, which leverage masking techniques. We demonstrate that the \emph{LiDAR} metric correlates significantly and consistently higher with downstream linear probing performance than \emph{RankMe} as measured by both the Spearman Rank and Kendall rank correlation coefficient or Kendall's $\tau$.
    
    \item We show that \emph{LiDAR} demonstrates consistently strong performance in hyperparameter selection, outperforming \emph{RankMe}. We further demonstrate this capability in randomized trials, where multiple hyperparameters are varied simultaneously.   
\end{enumerate}

\section{Preliminaries}

\subsection{Self Supervised Learning}
The main goal of self supervised pre-training is to learn a general purpose data representation $f(x)$ that is transferable to a large variety of downstream tasks. Informally, the degree of transferability of $f$ can be measured by how easy it is to learn a downstream task given $f$. In practice, a set of downstream tasks $\{T_j\}$ are used to assess the transferability of $f$ by training additional readout networks $\phi_j$ on top of $f$. That is, a successful pre-training scheme involves finding $f$ such that there exists "simple" functions $\phi_j$, such that $\phi_j \circ f$ solves $T_j$ to a reasonable degree, and $\phi_j$ are "easy" to learn \footnote{We use the terms "simple" and "easy to learn" here loosely as requiring few samples}. In the popular linear probing protocol, a linear readout functions $\phi_j$ is used to assess the quality of $f$. For the remainder of the paper we restrict our investigations to linear probing.

\subsection{Joint Embedding Architectures}
A common paradigm in the current practice of SSL is the multi-view setting. In its simple form, a pair of inputs are processed by a JE architecture, where each input is encoded separately by a (possibly shared) encoder. The SSL objective is then tasked with learning a representation such that ``compatible'' pairs that share semantic information are easily predictive of the each other, perhaps with the help of a latent variable. Various methods in the literature fall under this general category, which mostly differ by how compatible inputs are sampled, and how they avoid representational collapse.
We can formally define a JE architecture with an encoder \footnote{The encoder function need not be shared between views, however for simplicity and without loss of generality we assume it is} $f(x):\mathcal{X} \to \R^d$ and a projector function \footnote{Some SSL methods do not use a projector function, in which we can assume it is the identity function} $\psi(f): \R^d \to \R^p$. We adopt the terminology in \citep{rankme}, and refer to the output of the encoder $f$ as a \emph{representation}, and the output of the composed encoder and projector $e = \psi \circ f$ as an \emph{embedding}. A common theme among most JE SSL objectives is that they seek to learn embedding functions that produce similar embeddings for compatible views, which are typically generated by various forms of data augmentation. However, more recent JE architecture somewhat depart from this paradigm by using input masking, and introducing latent variables in the projector function $\psi$. 
For example, in I-JEPA \citep{ijepa} and data2vec \citep{data2vec}, $\tilde{x}$ and $x$ represent partially masked and unmasked inputs respectively, and $\psi(f;z)$ is tasked with predicting the parts of the representation $f(x)$ given spatial locations provided by $z$. For simplicity and without loss of generality, we remove the explicit notation $z$ in the definition of $\psi$. Note that, absent any input reconstruction loss, JE architectures potentially promote more abstract representations by filtering fine grained pixel level information. However, without an input reconstruction term, the loss used is often uninformative of the actual metric of interest, which is the zero or few shot transfer of the learned representation to downstream tasks. Compounding this challenge, extended training durations can significantly deteriorate representation quality, even when employing a set of hyperparameters with proven performance, as illustrated in Figure \ref{fig:main1}. In such cases, finding an appropriate early stopping criterion is critical.

\paragraph{Dimensional Collapse}
JE architectures are prone to various forms of representation dimension collapse, which can manifest in different flavors \citep{Ziyin2022WhatST,Jing2021UnderstandingDC,Hua2021OnFD}. In contrastive methods, dimensional collapse occurs when learning results in an excessively low dimensional representations in a way that hinders downstream performance. Regularized methods, when insufficiently regularized either implicitly or explicitly, can theoretically suffer complete collapse, where the learned representation trivializes to a constant. However, beyond such a rough categorization of the training process to partial or complete collapse, recent literature alludes to a more nuanced observation that, in some settings, the feature covariance eigenspectrum can be used to accurately gauge the quality of the learned representation as it pertains to downstream performance.   

\subsubsection{Embeddings and Feature Covariance Eigenspectrum}
Recent developments in the evaluation of JE-SSL  representations have introduced methods that leverage information derived from the spectrum of either the representations or the embeddings (RankMe, \citet{rankme} or from the spectrum of the covariance matrix, $\alpha$-Req, \citet{agrawal2022alphareq}). ~\citet{rankme} who propose a new measure named \emph{RankMe} and demonstrate that this measure, when applied across different hyperparameter configurations, exhibits a strong correlation with downstream performance across various settings and tasks. In order to define \emph{RankMe}, we start with a dataset $\{x_i\}_{i=1}^n$ drawn iid from some input distribution $\mathcal{D}$, and an embedding function $e(x) \in \R^d$ that produces an embedding matrix $\mathbf{Z}$ with dimensions $\left( n, d\right)$ whose singular values are denoted as $\sigma=\sigma_{1}, \sigma_{2}, ... \sigma_{\min(n, d)}$. These singular values reveal insights into the nature of the mapping function $f(x)$. RankMe \citep{rankme} introduced a soft measure of effective rank expressed as $\exp(-\sum_{i=1}^p p_i \log p_i)$, where $p_i = \frac{\sigma_{i}}{\|\sigma\|_1} + \epsilon$, and $\epsilon$ represents a small constant.
% (assumed to be centered), the empirical covariance matrix is defined as $\Sigma = \frac{1}{n-1}\sum_i[e(x_i)e(x_i)^\top]$.

In a related work, $\alpha$-Req \citep{agrawal2022alphareq} uses insights from infinite dimensional spaces to argue that the eigenspectrum of representations, i.e., eigenspectrum of the feature covariance should decay at an ideal rate of $\lambda_i \sim O(i^{-1})$ where $\lambda_{i}$ is the $i^{th}$ eigenvalue of the covariance matrix. However, it's essential to note that any condition imposed on the eigenspectrum alone may not be sufficient to ascertain the representational power of the mapping function $f$. This limitation arises from the fact that a simple linear mapping, such as $f(x) = Wx$ with a weight matrix $W$, can manipulate the covariance matrix's spectrum arbitrarily. Additionally, even a random mapping could exhibit a high effective rank without necessarily translating into significant downstream performance improvements.
Notably, \citep{Li2022UnderstandingCI} showed that the loss value and covariance spectrum can be used in tandem to predict performance. However, their method requires training a classifier on offline data, making it highly inefficient as an unsupervised method. \emph{LiDAR} follows a similar intuition with a more efficient and effective formulation: A representation with a high effective rank, when coupled with a low objective loss, points to successful training. To balance the two terms we leverage a discriminative method from classical statistics, repurposed to SSL settings.

\section{Method}
Our method is based on \emph{linear discriminant analysis}~\citep{bishop2007}, a classical label-aware dimensionality reduction and classification algorithm which we adopt to the multi-view setting common in the SSL literature. Absent downstream task and labels, we use clean samples as surrogate classes. For an input distribution $\mathcal{D}$, consider a generic JE architecture with an embedding function $e$. For any clean input $x$, let $\mathcal{D}_x$ denote a conditional distribution over all transformed \footnote{Data augmentations, or otherwise data points which are treated as positive samples} inputs given $x$. Define $\mu_x = \mathbb{E}_{\tilde{x}\sim \mathcal{D}_x}[e(\tilde{x})]$ and $\mu = \mathbb{E}_{x \sim \mathcal{D}}[\mu_x]$.
% inputs $\tilde{x}$ given $x$. Define $\mu_x = \mathbb{E}_{\tilde{x}\sim \mathcal{D}_x}[e(\tilde{x})]$ and $\mu = \mathbb{E}_{x \sim \mathcal{D}}[\mu_x]$. Given datasets of embeddings $\textbf{X} = \{e(x_i)\}_{i=1}^n,~ x_i\sim \mathcal{D}$, $\forall_{x_i \in \textbf{X}}, \textbf{X}_i = \{e(\tilde{x}_i^j)\}_{j=1}^q,~ \tilde{x}_i^j \in \mathcal{D}_x.$ such that $\mu_i = \frac{1}{q}\sum_{j}e(\tilde{x}_i^j)$, and $\mu = \frac{1}{n}\sum_i\mu_i$, 
we define the generalized covariance $\Sigma_{\text{lidar}}$ by:
%  \begin{align}
%     \Sigma_b(e) &= \frac{1}{n-1}\sum_{i=1}^n\Big[\Big(\mu_i - \mu \Big)\Big(\mu_i - \mu \Big)^\top \Big]\\
%     \Sigma_{\text{w}}(e) &= \frac{1}{(n-1)(q-1)}\sum_{i=1}^n\sum_{j=1}^q\Big[\Big(e(\tilde{x}_i^j) - \mu_i \Big)\Big(e(\tilde{x}_i^j) - \mu_i \Big)^\top \Big] + \delta I\\
%     \Sigma_{\text{lidar}}(e) &= \Sigma_{\text{w}}(e)^{-\frac{1}{2}}\Sigma_b(e) \Sigma_{\text{w}}(e)^{-\frac{1}{2}}
% \end{align}
\begin{align}
    \Sigma_b(e) &= \mathbb{E}_{x \sim \mathcal{D}}\Big[\Big(\mu_x - \mu \Big)\Big(\mu_x - \mu \Big)^\top \Big]\\
    \Sigma_{\text{w}}(e) &= \mathbb{E}_{x \sim \mathcal{D}}\mathbb{E}_{\tilde{x} \sim \mathcal{D}_x}\Big[\Big(e(\tilde{x}) - \mu_x \Big)\Big(e(\tilde{x}) - \mu_x \Big)^\top \Big] + \delta I_p\\
    \Sigma_{\text{lidar}}(e) &= \Sigma_{\text{w}}(e)^{-\frac{1}{2}}\Sigma_b(e) \Sigma_{\text{w}}(e)^{-\frac{1}{2}}
\end{align}
where $\delta$ is a small positive constant, and $I_p$ is the identity matrix of dimension $p$. Let $\lambda = \lambda_1,...,\lambda_p$ be the eigenvalues of $\Sigma_{\text{lidar}}$, then \emph{LiDAR} is defined by applying the smooth rank measure introduced in \citep{Roy2007TheER} on $\Sigma_{\text{lidar}}$ :
\begin{align}
    \text{\emph{LiDAR}}(e) = \text{exp}\Big(-\sum_i p_i \log p_i\Big),~~~ p_i = \frac{\lambda_i}{\|\lambda\|_1} + \epsilon
\end{align}
where $\epsilon$ is a small positive constant. In practice, we use unbiased estimates of $\Sigma_w,\Sigma_b$ using hyperparameters $n,q$ where $n$ is the numbers of surrogate classes (clean samples), and $q$ is the number of transformed samples per class. Note that, as in classical LDA, the eigenvalues $\lambda_1,...,\lambda_p$ measure variance along discriminative directions. Hence, $\emph{LiDAR}(e)$ takes into account the SSL objective that is being optimized, and ignores directions of variability in $e$ that are useless in solving the SSL task.
We postulate the existence of such directions, as JE techniques implicitly or explicitly incorporate measures to preserve representations from collapsing. In simpler terms, some variations in $e$ may not necessarily arise from high-level semantic features driven by the SSL objective, but rather from arbitrary attributes influenced by regularization. This phenomenon can clearly be seen in I-JEPA, where the rank of a representation inflates early on in training, only to rapidly diminish, with peak downstream performance occurring far from the point of maximal rank (see Figure \ref{fig:main1}). This makes any standard data covariance rank based measures extremely limited in their ability to spot the point of optimal performance in a training run. In contrast, the \emph{LiDAR} method initiates at a much lower rank and steadily ascends, more closely aligning with downstream linear probing performance, as depicted in Figure ~\ref{fig:main1}. We delve deeper into the theoretical rationale behind \emph{LiDAR} in Appendix Section \ref{sec:motivation}, where we prove a proposition which illustrates the effect of uninformative information on the \emph{LiDAR} metric.

\begin{figure}
     \centering
     \begin{subfigure}[b]{0.32\textwidth}
         \centering
         \includegraphics[width=\textwidth]{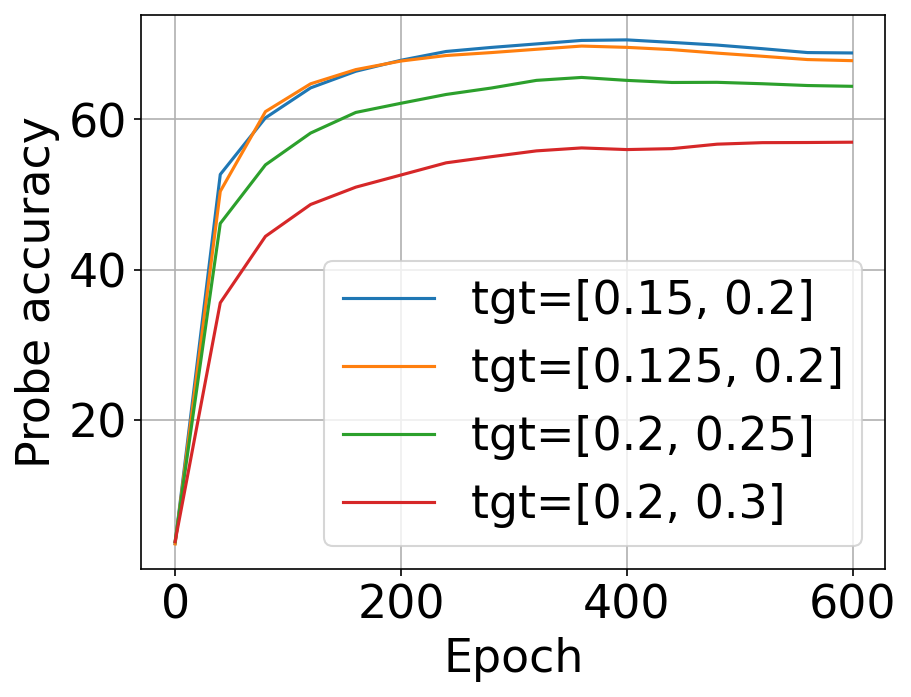}
         \label{fig:main1:ijepa:tgt:probe}
     \end{subfigure}
     \hfill
     \begin{subfigure}[b]{0.32\textwidth}
         \centering
         \includegraphics[width=\textwidth]{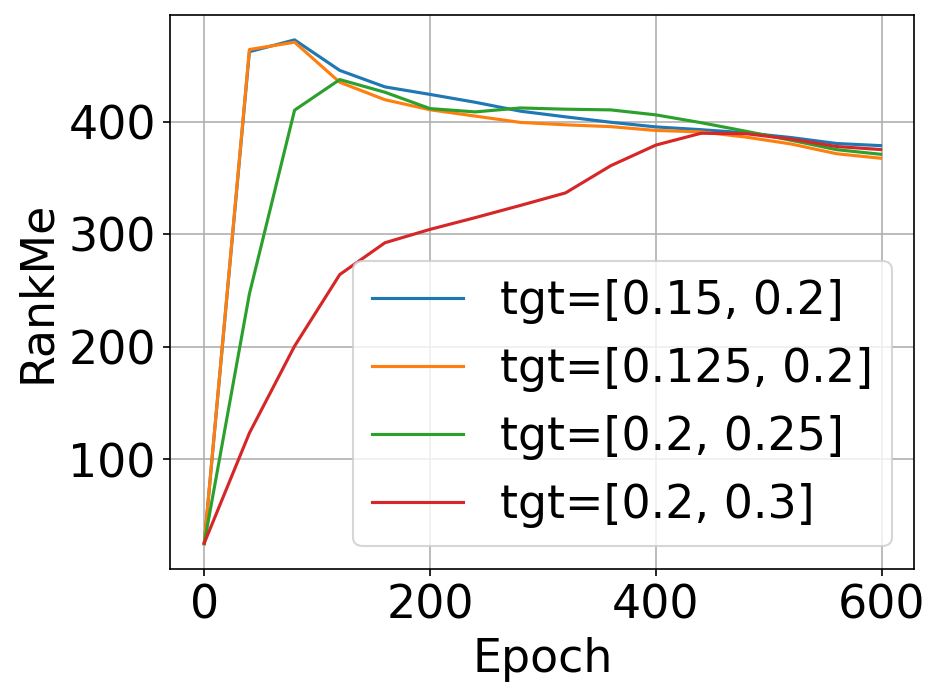}
         \label{fig:main1:ijepa:tgt:rankme}
     \end{subfigure}
     \hfill
     \begin{subfigure}[b]{0.32\textwidth}
         \centering
         \includegraphics[width=\textwidth]{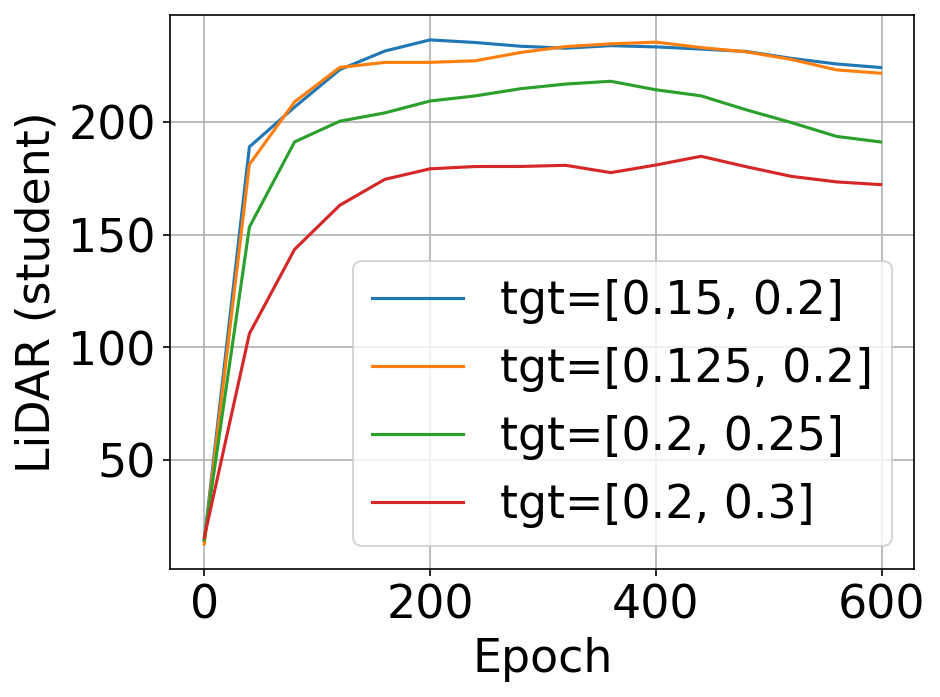}
         \label{fig:main1:ijepa:tgt:lidar_s}
     \end{subfigure}\\

     \caption{ViT-Base architecture trained with I-JEPA~\citep{ijepa} by varying the target mask scale hyperparametr. We observe that (1) \emph{RankMe} correlates poorly with downstream performance for most models, with the exception of the worst performing model, and (2) \emph{LiDAR} correlates highly with downstream performance for all models.}
     \label{fig:main1}
\end{figure}

\section{Experimental Details}
% \subsection{Methodoligy}
\emph{LiDAR} is intended to serve as a proxy metric to compare the quality of representation as they relate to downstream performance. This task is challenging due to the inherent uncertainty of the nature of the downstream task. In this paper, we focus on downstream classification tasks, employing the widely adopted linear probing protocol. A robust metric in this sense is one that consistently replicates the downstream performance ranking observed with a ground truth oracle across a set of pre-trained models.
As in existing solutions, we only compare models from the same class, each one pre-trained using a different set of predefined hyperparameters. It is important to emphasize that a linear correlation between the metric and the oracle is not implied, hence we resort to established statistical tests which assess the strength and direction of the monotonic relationship between two variables. In other words, we seek to measure how well the relationship between two variables, the metric and the oracle, can be described using a monotonic function. We note that such evaluations go beyond simply picking the best model according to the metric of interest out of a set of models, due to possible outliers in the set.

\paragraph{Spearman's rank correlation coefficient}
The Spearman correlation coefficient \citep{spearman, spearman1961proof} is essentially the Pearson correlation coefficient computed on the ranked values of two variables. While Pearson's correlation assesses linear relationships, Spearman's correlation evaluates monotonic relationships, which can be linear or non-linear in nature. In cases where there are no duplicate data values, a perfect Spearman correlation of either +1 or -1 indicates that each of the variables exhibits a flawless monotonic relationship with the other, forming a perfect monotone function. Given two sequences of real numbers $X = x_1,...,x_n$ and $Y = y_1,...,y_n$ and their corresponding ranking $R(X),R(Y)$, the Spearman correlation coefficient is given by:
\begin{align}
    r_s = 1 - \frac{d\sum_i (R(x_i) - R(y_i))^2}{n(n^2-1)}
\end{align}

\paragraph{Kendall's Tau rank correlation coefficient} The Kendall's Tau correlation coefficient \citep{kendall1938new, kendel} is another popular alternative the to Spearman rank correlation, and is known to be superior when the sample size is small and has many tied ranks. The Kendall's Tau coefficient uses concordant and discordant pairs in its measure. For indices $j>i$, the pairs $\{x_i,x_j\} \in X,\{y_i,y_j\} \in Y$ are said to be concordant if the sort order of both pairs agree, otherwise they are said to be discordant.
Let $C$ and $D$ denote the number of concordant and discordant pairs in $X,Y$ then the Kendall's $\tau$ is given by $|C-D|/(C + D)$.\\

\paragraph{Top ranking model} In similar spirit to \citep{rankme}, as an additional evaluation we report the top ranking model given a metric of interest, and its downstream performance, and compare it to the optimal model according to the oracle. This protocol can be seen as a noisy estimate of Kendall's Tau rank correlation coefficient. 

\subsubsection{Models, Hyperparameters and Data}
We use 5 different multiview JE SSL methods, spanning contrastive and regularized methods, as well as ResNet and transformer based. We use  I-JEPA~\citep{ijepa} and data2vec~\citep{data2vec} as representative of more recent masking based approaches that are not reliant on domain specific data augmentation. We use SimCLR~\citep{simclr} as a representative of contrastive methods, while we use DINO~\citep{dino} as an example of self-distillation method and VICReg~\citep{vicreg} as representatives of regularized methods. Note that I-JEPA and data2vec use transformer based encoders by design. We use a vision transformer (ViT)~\citep{dosovitskiy2021an} based encoder for DINO as well, and ResNet-50~\citep{he2016deep} based encoders for SimCLR and VICReg. We vary different hyperparameters per method. The varied hyperparameters range from  optimization related ones such as learning rate, and weight decay, architecture specific hyperparameters such as softmax temperature, and data augmentation and masking based hyperparameters. We stress that some SSL objectives such as I-JEPA and data2vec, which rely on specific forms of input masking, are extremely sensitive to the the masking hyperparameters, hence providing an important testbed for LiDAR. We highlight the drawback of conducting a grid search over a single hyperparameter, due to the fact that all the remaining frozen ones are typically highly optimized to the task, offering a somewhat contrived testbed. Hence, in addition to a standard grid search, we use random search over all hyperparameters. This is done by uniformly sampling each hyperparameter from a fixed range, providing a better cover for the space.  
We use the Imagenet-1k dataset~\citep{ILSVRC15} for all experiments. We use the train split as the source dataset for pretraining and linear probing, and use the test split as the target dataset. For each pretrained checkpoint, we train a linear probe on the train split, which we denote as the oracle, and record its test performance on the test split. 

\begin{table}[t]
\centering
\begin{tabular}{c c c c c c}
\toprule
Hyperparameter & RankMe    & RankMe (aug.) & LiDAR     & RankMe (aug.) & LiDAR \\
               &           & (Student)     & (Student) & (Teacher)     & (Teacher) \\
\midrule
Learning rate      & 0.6835 & 0.7829 & \textbf{0.8443} & 0.6734 & 0.7506 \\
Weight decay       & 0.5040 & 0.7034 & \textbf{0.7331} & 0.4792 & 0.5278 \\
Target mask scale  & 0.2867 & 0.6786 & \textbf{0.7937} & 0.2956 & 0.2927 \\
Context mask scale & 0.4246 & 0.7867 & \textbf{0.8482} & 0.3929 & 0.5556 \\
\midrule
Overall & 0.5830 & 0.7513 & 0.8159 & 0.5713 & 0.6828 \\
\bottomrule
\end{tabular}
\caption{\label{tab:ijepa:kendalltau}I-JEPA: Kendall's $\tau$ coefficient between effective ranks of RankMe, RankMe (aug.) and LiDAR and linear probe accuracy. Hyperparameters are varied via grid search.}
\end{table}

\begin{table}
\centering
\begin{tabular}{c c c c c c}
\toprule
              & RankMe    & RankMe (aug.)   & LiDAR     & RankMe (aug.) & LiDAR \\
              &           & (Student)       & (Student) & (Teacher)     & (Teacher) \\
\midrule
Random search & 0.8314	& \textbf{0.8989} & 0.8434	 & 0.8487 & 0.8616 \\
\bottomrule
\end{tabular}
\caption{\label{tab:ijepa:rand_search:kendalltau}I-JEPA: Kendall's $\tau$ coefficient between effective ranks of \emph{RankMe}, \emph{RankMe (aug.)} and \emph{LiDAR} and linear probe accuracy. Hyperparameters are varied via random sampling.}
\end{table}

\begin{table}
\centering
\begin{tabular}{c c c c c c}
\toprule
Metric & LR & WD & Target mask & Context mask  & Overall \\
       &    &    & scale       & scale         & \\
\midrule
\textcolor{gray}{ImageNet Oracle}         & \textcolor{gray}{70.5800} & \textcolor{gray}{70.5800} & \textcolor{gray}{70.5800} & \textcolor{gray}{70.5800} & \textcolor{gray}{70.5800} \\
RankMe                  & 56.7580  & 60.2040 & 60.2040 & 59.8960 & 56.7580 \\
RankMe (aug.) (Student) & \textbf{67.8680} & \textbf{67.8680} & \textbf{67.8680} & \textbf{67.8680} & \textbf{67.8680} \\
RankMe (aug.) (Teacher) & 56.7580 & 52.6720 & 50.4200 & 59.8960 & 56.7580 \\
LiDAR (Student)         & \textbf{67.8680} & \textbf{67.8680} & \textbf{67.8680} & \textbf{67.8680} & \textbf{67.8680} \\
LiDAR (Teacher)         & 65.1080 & 64.1920 & 65.5820 & 67.5420 & 65.1080 \\
\bottomrule
\end{tabular}
\caption{\label{tab:ijepa:ckpt_selection}I-JEPA: Linear probe accuracy recovered by \emph{RankMe} and \emph{LiDAR} on ImageNet-1K dataset. Hyperparameters set via grid search.}
\end{table}

\begin{table}
\centering
\begin{tabular}{c c c c c c}
\toprule
\textcolor{gray}{ImageNet Oracle} & RankMe  & RankMe (aug.)  & RankMe (aug.) & LiDAR & LiDAR   \\
& & (Student) & (Teacher) & (Student) & (Teacher) \\
\midrule
\textcolor{gray}{68.9840} & 53.3700 & 53.3700 & 53.3700 & 61.1760 & \textbf{66.8880} \\
\bottomrule
\end{tabular}
\caption{\label{tab:ijepa:ckpt_selection:rand_hparams}I-JEPA: Linear probe accuracy recovered by \emph{RankMe} and \emph{LiDAR} on ImageNet-1K dataset. Hyperparameters set via random sampling.}
\end{table}

\subsubsection{Implementation Considerations}
\label{subsubsec:impl_considerations}
The implementation of LiDAR entails computing empirical approximations to $\Sigma_w,\Sigma_b$, which differs from one SSL method to another due to the inherent differences in the way input pairs are sampled. As a general rule, for each SSL method we use its own input transformations without alterations. In asymmetrical architectures, we compare both branches, denoted as the "student" and the "teacher" for evaluation. In transformer based architectures (as employed by design in I-JEPA, data2vec) we pool the final embedding/representation to produce vectors. 
For methods such as I-JEPA and data2vec, computing the \emph{RankMe} score on the embeddings is not a straightforward task. This is due to the fact that the projector function's task in both does not serve as a representation expander, rather it serves as a conditional predictor, predicting masked representations, conditioned on the spacial locations of the masked patches. For these methods, we evaluate \emph{RankMe} on the student or teacher encoder instead. For SimCLR and VICReg, we copy the implementation details from \citep{rankme}. The hyperparameters used to train and evaluate the models are listed in Appendix \ref{sec:details}. Finally, in our analysis within the context of \emph{RankMe}, we have noticed that utilizing the same data augmentations for both training and feature covariance matrix computation consistently leads to improved performance. In our experiments, we provide empirical results for both "vanilla" and augmented \emph{RankMe} as baselines.

\subsection{Compute Considerations}
\label{subsubsec:compute_considerations}
While calculating the \emph{LiDAR} score is a straightforward and computationally efficient procedure, it does introduce an additional computational cost when compared to standard covariance estimation. This additional cost arises from the need to perform matrix inversion in $\Sigma_w^{-0.5}$. It can be, in principle, time-consuming when dealing with high-dimensional embeddings. In our experiments, we have observed that this computational overhead is generally inconsequential for all tested models. The cost tends to be dominated by the computational demand of the feature extraction's forward process. Nonetheless, we note the following trivial bound on the rank of $\Sigma_{\text{lidar}}$:
\begin{align}
    \text{Rank}(\Sigma_{\text{lidar}}) \leq \min\big(\text{Rank}(\Sigma_w),\text{Rank}(\Sigma_b)\big)
    \label{eqn:lidar_rank_boun}
\end{align}
Since $\text{Rank}(\Sigma_b)$ is bounded by the number of surrogate classes used $n$ (number of ``clean'' samples used to generate augmented samples), simple dimensionality reduction can be used when $p>>n$ to reduce the rank of $\Sigma_w$ before inversion, without any loss in performance. We find in our experiments that $n = 1k, q = 50$ is sufficient to saturate performance of I-JEPA, data2vec and DINO methods. Our evaluations with SimCLR requires $n = 5k, q = 10$ while we use $n = 10k, q = 10$ for VICReg as these methods provide much longer features compared to I-JEPA, data2vec and DINO.

\section{Results}
As strikingly evident from our experimental results, the \emph{LiDAR} metric correlates surprisingly well with downstream performance, as measured by linear probing. In the vast majority of experiments, we see a significant improvement over \emph{RankMe} in terms of the Kendall's $\tau$ and Spearman's rank correlation to the oracle, and an improved performance in hyperparameter selection. Due to space constraints, we defer the reader to the Appendix \ref{sec:details} for a comprehensive view of experimental details, supplementary empirical results, and additional figures. In the main text, we have included a representative selection to provide an overview.

Tables ~\ref{tab:ijepa:kendalltau} and ~\ref{tab:ijepa:ckpt_selection} present a comprehensive analysis of the results obtained for the I-JEPA. The evaluation involves a comparison between \emph{LiDAR}, assessed on both the teacher and student branches, and \emph{RankMe}, with and without data augmentation, alongside the oracle reference. We observe a general trend where \emph{LiDAR} applied on the student branch correlates much higher with the oracle than the teacher branch. Overall we observe significant outperformance of \emph{LiDAR} over \emph{RankMe} applied on clean and augmented inputs for all hyperparameters tested. A noteworthy observation from Table ~\ref{tab:ijepa:ckpt_selection} is that \emph{RankMe}, when coupled with data augmentation to compute the feature covariance matrix, can match \emph{LiDAR}'s performance for the best-selected model, albeit trailing significantly when data augmentation is not employed.  
% This may be attributed to the presence of a consistently selected outlier model across both evaluation metrics.
 Table~\ref{tab:ijepa:rand_search:kendalltau} and Table~\ref{tab:ijepa:ckpt_selection:rand_hparams} present results from randomized trials where hyperparameters are randomly sampled within predefined ranges. In these trials, a consistent trend of \emph{LiDAR} outperforming both versions of \emph{RankMe} is evident, as can also be seen in Figure ~\ref{fig:ijepa:scatter:rand_hparams}.
Tables ~\ref{tab:data2vec:kendalltau} and ~ \ref{tab:data2vec:main:ckpt_selection} extend our analysis to the data2vec model, and the findings parallel those observed for I-JEPA. Notably, \emph{LiDAR} consistently selects more performant hyperparameters than both variants of \emph{RankMe}, narrowly missing oracle-level performance.

Table~\ref{tab:vicreg:corr_epochs:main} summarizes our results for VICReg~\citep{vicreg}. This table additionally reports metrics calculated at different checkpoints throughout the training process. Across all checkpoints (epochs 20, 40, 60, 80, 100), we consistently observe significantly higher correlations with the oracle performance. It's worth highlighting that, with VICReg, \emph{LiDAR} achieves optimal results when applied to the representation rather than the embedding, as embedding-based evaluations result in dramatic performance degradation, a phenomenon aligning with the non-monotonic relationship between rank and performance reported by~\citep{rankme}. This illustrates that high rank is a necessary but not a sufficient condition for high performance.

Further validation of \emph{LiDAR}'s efficacy in hyperparameter selection is provided in Appendix Table~\ref{tab:vicreg:ckpt_selection}, where the method achieves results that are within a fraction of a percentage point from oracle-level performance across all considered hyperparameters.
Table~\ref{tab:simclr:results} shifts the focus to SimCLR, a widely used contrastive self-supervised learning method. The evaluation centers on the final checkpoint obtained after 100 training epochs. Table~\ref{tab:simclr:results:kendalltau} reveals that \emph{LiDAR} consistently demonstrates the highest correlation among the three metrics under scrutiny. Moreover, Table~\ref{tab:simclr:results:ckpt_selection} confirms that \emph{LiDAR} consistently selects the most optimal hyperparameters among the three methods considered.
Lastly we also study the behavior of \emph{LiDAR}. Table~\ref{tab:dino:corr} lists the results for a ViT-Small trained with DINO on ImageNet-1K dataset. We observe that \emph{LiDAR} evaluated with both the teacher and student branches show stronger correlation than \emph{RankMe}. 

\begin{table}
\centering
\begin{tabular}{c c c c c c}
\toprule
Hyperparameter & RankMe    & RankMe (aug.) & LiDAR     & RankMe (aug.) & LiDAR \\
               &           & (Student)     & (Student) & (Teacher)     & (Teacher) \\
\midrule
Learning rate & 0.2410 & 0.3419 & \textbf{0.5077} & 0.3172 & 0.4716 \\
Mask ratio    & 0.2683 & 0.2381 & \textbf{0.4657} & 0.2195 & 0.4170 \\
\midrule
Overall       & 0.2238 & 0.2799 & \textbf{0.5531} & 0.2626 & 0.5227 \\
\bottomrule
\end{tabular}
\caption{\label{tab:data2vec:kendalltau}data2vec: Kendall's $\tau$ for data2vec between effective ranks of RankMe, RankMe (aug.) and LiDAR and linear probe accuracy.}
\end{table}

\begin{table}
\centering
\begin{tabular}{c c c c}
\toprule
Metric                  & LR & Mask ratio &  Overall \\
                        &    &            &           \\
\midrule
\textcolor{gray}{ImageNet Oracle} & \textcolor{gray}{60.3920} & \textcolor{gray}{55.9780} & \textcolor{gray}{60.3920} \\
RankMe                            & 48.6040           & 48.6980          &  48.6980 \\
RankMe (aug.) (Student)           & 48.6040           & 51.4500          & 51.4500 \\
LiDAR (Student)                   & \textbf{59.3720}  & \textbf{52.7460} & \textbf{59.3720} \\
RankMe (aug.) (Teacher)           & 48.6040           & 51.4500          & 51.4500 \\
LiDAR (Teacher)                   & \textbf{59.3720}  & \textbf{52.7460} & \textbf{59.3720} \\
\bottomrule
\end{tabular}
\caption{\label{tab:data2vec:main:ckpt_selection} data2vec: Linear probe accuracy recovered by  \emph{RankMe} and \emph{LiDAR} with data2vec on ImageNet-1K dataset.}
\end{table}

\begin{table}
  \begin{subtable}[h]{0.45\textwidth}
    \centering
    \begin{tabular}{c c c c}
    \toprule
    Epoch & RankMe & RankMe         & LiDAR \\
          &        & (aug. dataset) &        \\
    \midrule
    20   & 0.6081  & 0.6466 & \textbf{0.6957} \\
    40   & 0.4315  & 0.5949 & \textbf{0.8699} \\
    60   & 0.4065  & 0.5381 & \textbf{0.8809} \\
    80   & 0.3904  & 0.5905 & \textbf{0.9032} \\
    100  & 0.3174  & 0.5209 & \textbf{0.9161} \\
    \bottomrule
    \end{tabular}
    \caption{\label{tab:vicreg:corr_epochs:main:spearmanr} VICReg: Spearman Rank coefficient}
  \end{subtable}
  \hfill
  \begin{subtable}[h]{0.45\textwidth}
    \centering
    \begin{tabular}{c c c c}
    \toprule
    Epoch & RankMe & RankMe         & LiDAR \\
          &        & (aug. dataset) &        \\
    \midrule
    20   & 0.4476 &	0.4718 & \textbf{0.5323} \\
    40   & 0.2984 & 0.4597 & \textbf{0.7097} \\
    60   & 0.2702 & 0.3750 & \textbf{0.7218} \\
    80   & 0.2823 & 0.4435 & \textbf{0.7823} \\
    100  & 0.2056 & 0.3790 & \textbf{0.8105} \\

    \bottomrule
    \end{tabular}
    \caption{\label{tab:vicreg:corr_epochs:main:kendalltau} VICReg: Kendall's $\tau$ coefficient}
  \end{subtable}
  \caption{VICReg: Correlation between effective rank estimated by \emph{RankMe} and \emph{LiDAR} and probe accuracy evolution during training. Each row corresponds to a checkpoint collected at epoch specified in the table.}
  \label{tab:vicreg:corr_epochs:main}
\end{table}

In order to evaluate \emph{RankMe} and \emph{LiDAR} methods on unseen or out-of-distribution (OOD) datasets, we use CIFAR10, CIFAR100~\citep{cifar10_100}, EuroSAT~\citep{eurosat}, Food101~\citep{food101} and SUN397~\citep{sun397} for evaluating these metrics. These commonly used datasets are different from our source dataset, ImageNet-1k, and are used to evaluate the effectiveness of representations on downstream tasks. Appendix~\ref{appendix:sec:ood} describes the evaluation protocol and the results of this evaluation in detail. We briefly mention here that \emph{LiDAR} matches or exceeds the performance of \emph{RankMe} and in some cases that of ImageNet Oracle in OOD evaluation and refer the reader to Appendix~\ref{appendix:sec:ood} for a detailed description of results and analysis

\section{Limitations}
While we observe \emph{LiDAR} significantly improves upon \emph{RankMe} in most experiments, it is important to emphasize its drawbacks. Notably, we have observed instances where the \emph{LiDAR} metric exhibits a negative correlation with probe accuracy, particularly pronounced in scenarios like VICReg when dealing with higher dimensional embeddings. This phenomenon underscores the intrinsic complexity of the relationship between rank, however it is measured, and downstream task performance. These two factors are not necessarily causally linked; a high rank does not guarantee superior performance.
It is essential to acknowledge the computational overhead associated with calculating the \emph{LiDAR} metric, which often requires high-dimensional matrix inversion. This increases the method's overall computational cost as shown in Appendix~\ref{appendix:sec:runtime} using I-JEPA as an example. Consequently, the feasibility of incorporating \emph{LiDAR} as a loss signal for pretraining should be carefully considered, as it could be prohibitively expensive to naively evaluate it at each iteration during the training process. Due to the sheer volume of experiments and necessary compute, we focus solely on linear probing in this work. We expect \emph{LiDAR}'s impressive performance to carry over to nonlinear probing protocols as well. Lastly, it's worth noting that our approach is contingent on the sampling strategy for positive pairs $x$ and $\tilde{x}$ which varies among methods, making cross-method comparisons a challenge.

\begin{figure}
     \centering
     \begin{subfigure}[b]{0.49\textwidth}
         \centering
         \includegraphics[width=\textwidth]{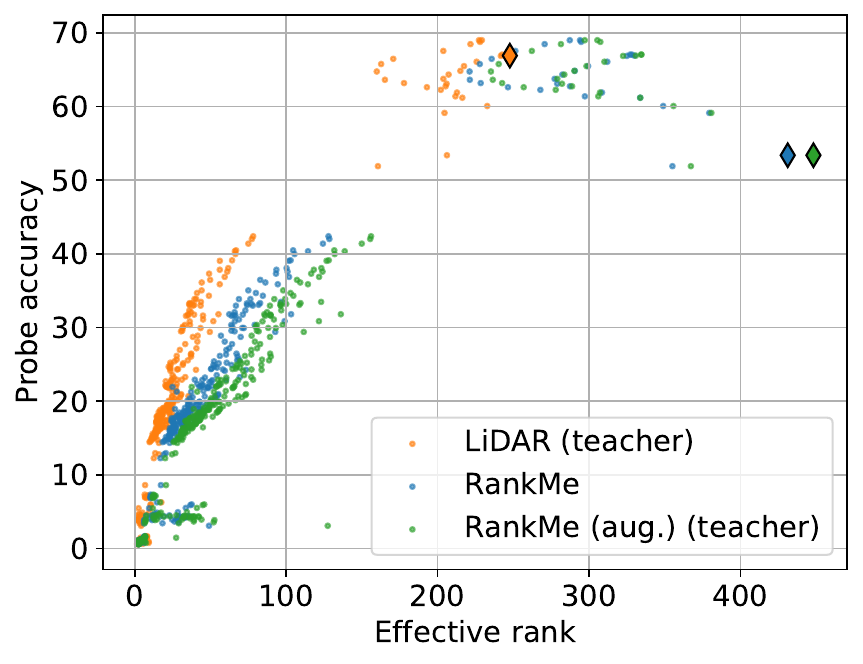}
         \caption{Linear probe accuracy for \emph{RankMe}, teacher-based \emph{LiDAR} and teacher-based augmented-\emph{RankMe}.}
         \label{fig:ijepa:scatter:rand_hparams:teacher}
     \end{subfigure}
     \hfill
     \begin{subfigure}[b]{0.49\textwidth}
         \centering
         \includegraphics[width=\textwidth]{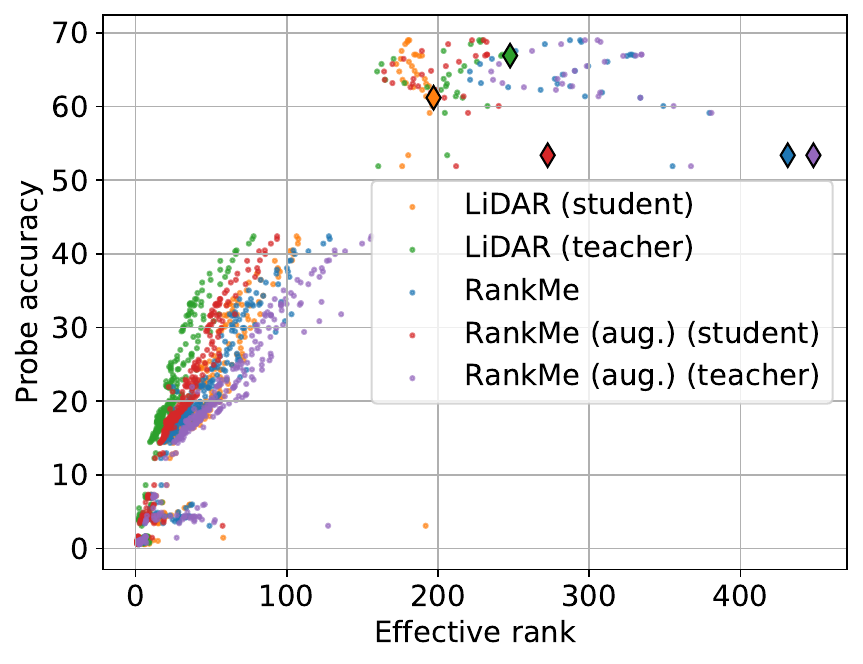}
          \caption{Linear probe accuracy for all \emph{LiDAR}, augmented-\emph{RankMe} variants and vanilla \emph{RankMe}.}
         \label{fig:ijepa:scatter:rand_hparams:all}
     \end{subfigure}
     
     \caption{I-JEPA: ViT-Base architecture trained on Imagenet-1K with 20 sets of hyperparameters drawn uniformly at random. Each point represents a checkpoint while the $\diamond$ marker represents the model selected by each metric. We observe that (1) \emph{LiDAR} outperforms \emph{RankMe} (see Table~\ref{tab:ijepa:rand_search:kendalltau}), (2) augmented-\emph{RankMe}, our variant of \emph{RankMe}, shows the highest correlation (see Table~\ref{tab:ijepa:rand_search:kendalltau}), and (3) remarkably \emph{LiDAR} shows the best hyperparameter selection performance by recovering the highest downstream performance.}
     \label{fig:ijepa:scatter:rand_hparams}
\end{figure}

\begin{table}
  \begin{subtable}[h]{0.5\textwidth}
    \centering
    \resizebox{0.99\textwidth}{!}{
    \begin{tabular}{c c c c c}
    \toprule
      Metric               & LR    &  WD   &  Temp.     &  Overall \\
      \midrule
      \textcolor{gray}{Imagenet Oracle}   & \textcolor{gray}{58.2370} & \textcolor{gray}{56.6740} & \textcolor{gray}{57.4710} & \textcolor{gray}{59.1420} \\
      \emph{RankMe}        & 55.8950           & 55.1490 & 56.0390 & 56.4630 \\
      \emph{RankMe} (aug. dataset) & 57.2010           & 55.8170 & 56.3170 & 57.8260 \\
      \emph{LiDAR}         & \textbf{57.8940} & \textbf{56.3020} & \textbf{57.0920} & \textbf{58.9270} \\
      \bottomrule
    \end{tabular}
    }
    \caption{\label{tab:simclr:results:ckpt_selection} Linear probe accuracy.}
    %  with hyperparameter random search
  \end{subtable}
  \hfill
  \begin{subtable}[h]{0.5\textwidth}
    \centering
    \resizebox{0.99\textwidth}{!}{
    \begin{tabular}{c c c c}
    \toprule
    Correlation & RankMe & RankMe         & LiDAR \\
                &        & (aug. dataset) &        \\
    \midrule
    % Spearman rank  &  0.5301 &  0.6389 & \textbf{0.9188}  \\
    Kendall's $\tau$ & 0.4982  & 0.5761 & \textbf{0.8167} \\
    \bottomrule
    \end{tabular}
    }
    \caption{\label{tab:simclr:results:kendalltau} Kendall's $\tau$ correlation coefficient.}
    %  with hyperparameter grid search
  \end{subtable}
  \caption{\label{tab:simclr:results}SimCLR: (a) Linear probe accuracy recovered by \emph{RankMe}, augmented-\emph{RankMe} and \emph{LiDAR} on ImageNet-1K dataset at the end of training and (b)Kendall's $\tau$ correlation coefficient between effective rank estimated by \emph{RankMe} and \emph{LiDAR} and probe accuracy per hyperparameter.}
  \label{tab:simclr:corr_hyperparameter}
\end{table}

\section{Conclusion}

We introduced \emph{LiDAR}, a novel approach to evaluate self-supervised learning models. It builds upon the foundation laid by the \emph{RankMe} metric. Through a series of experiments, we have demonstrated \emph{LiDAR}'s superiority over \emph{RankMe}, a representative covariance rank-based approach. It enables accurate and label-independent self-supervised learning assessment of learned representations. Our method is a powerful tool for practitioners seeking to efficiently optimize their models in data-scarce environments. 
%Multiple exciting avenues for future work follow immediately from our findings. 
As for future work, investigating \emph{LiDAR} as a possible loss term is a particularly compelling one. This would require a different formulation to make the method more SGD-compatible. 
% As we continue our exploration of \emph{LiDAR} and its potential in varied domains, we anticipate its integration into the standard toolkit of self-supervised learning, reshaping the landscape of model evaluation and facilitating advancements in this burgeoning field.
We anticipate \emph{LiDAR}'s integration into standard SSL toolkits that has the potential to reshape model evaluation and have large impact on the future advancements of the field.
% REFERENCES
% \newpage

\bibliography{iclr2024_conference}
\bibliographystyle{iclr2024_conference}
\newpage
% \tableofcontents

\addcontentsline{toc}{section}{Appendix} % Add the appendix text to the document TOC
\part{Appendix} % Start the appendix part
\parttoc % Insert the appendix TOC

\newpage
\section{Theoretical Motivation}\label{sec:motivation}
In this section we illustrate a heuristic as to why we might expect \emph{LiDAR} to outperform previous methods relying on the covariance spectrum. As an exemplary JE SSL method, we consider the VICreg objective, which is comprised of three terms:
\begin{align}
    \mathcal{L}_{\text{VIRreg}}= \underbrace{\lambda \mathcal{L}_{\text{inv}}}_{\text{Invariance}} + \underbrace{\mu\mathcal{L}_{\text{var}} + \nu\mathcal{L}_{\text{cov}}}_{\text{Regularization}}
\end{align}
where $\lambda, \mu, \nu$ are hyperparameters, and $\mathcal{L}_{\text{inv}},\mathcal{L}_{\text{var}}, \mathcal{L}_{\text{cov}}$ are the invariance, variance and covariance terms computed over a minibatch. In this objective, the regularization term which is comprised of the variance and covariance terms explicitly encourages the embedding function's covariance to be of high rank, while the invariance term insures that compatible pairs are mapped to similar embeddings. We highlight that a low regularization loss is achievable by random embeddings, which might be high rank, but devoid of any utility as for downstream tasks. A measure of representation quality which is based on covariance rank alone would therefore, theoretically, fail to capture this failure case, as it only pertains to the regularization term. Indeed, balancing the hyperparameters $\lambda, \mu, \nu$ is necessary to prevent superficially inflating the covariance rank. On the other hand, \emph{LiDAR} is invariant to information that is not used to discriminate between surrogate classes in the SSL objective. To make this point concrete, consider a (centered) embedding function $e(x) : \R^d \to \R^p$, a random independent noise vector $\mu \in \R^r$ such that $\mathbb{E}[\mu] = \mathbf{0}, \mathbb{E}[\mu \mu^\top] = \Sigma_\mu \in \R^{r \times r}$, and consider the (random) embedding functions $\tilde{e}(x) = [e(x)^\top, \mu^\top]^\top : \R^d \to \R^{p + r}$. Naturally, when measuring the downstream performance of $e, \tilde{e}$, we expect that $e$ should not be worse than $\tilde{e}$. (in practice performance is measure on the representation $f$, however for the sake of a clean argument we will ignore this technicality). In turn, this should, ideally, translate to $\emph{LiDAR}(\tilde{e}) \leq \emph{LiDAR}(e)$. \emph{LiDAR} (unlike covariance spectrum based approaches) indeed captures this property, as illustrated in the following proposition:

\begin{proposition}\label{prop:1}
Let $\mathcal{D}$ denote a distribution over inputs $x \in \R^d$,  let $\mathcal{D}_x$ denote a conditional distribution of transformed inputs given $x$. Let $\lambda = \lambda_1,...,\lambda_p$ be the eigenvalues of $\Sigma_{\text{lidar}}(e)$.  Assume that $\frac{\|\lambda\|_\infty}{\|\lambda\|_1}<1 - \exp[-1]$ and set constants $\epsilon,\delta$ such that $\epsilon<1 - \frac{\|\lambda\|_\infty}{\|\lambda\|_1}$, and $\delta < (\exp[-1]-\epsilon)\|\lambda\|_1$.
Then, it holds that:
\begin{align}
    \emph{LiDAR}(\tilde{e}) \leq \emph{LiDAR}(e)\exp\Big[- 2p \log \Big(\frac{\|\lambda\|_1}{\|\lambda\|_1 + r\delta}\Big) -\frac{r\delta}{\|\lambda\|_1}\log(\frac{\delta}{\|\lambda\|_1})\Big]
    % \item $\emph{LiDAR}(\bar{e}) < \emph{LiDAR}(e)$
\end{align}
\end{proposition}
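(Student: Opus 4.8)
The plan is to reduce the inequality to a controlled perturbation of the spectrum of $\Sigma_{\text{lidar}}$ under noise padding, and then to track that perturbation through the smooth-rank functional. First I would compute $\Sigma_b(\tilde e)$ and $\Sigma_w(\tilde e)$ for $\tilde e=[e^\top,\mu^\top]^\top$. Since $\mu$ is independent of $x$ and of the transformation $\tilde x\sim\mathcal{D}_x$, every cross block between the $e$-coordinates and the $\mu$-coordinates vanishes in expectation, so both matrices are block diagonal, with top-left $p\times p$ blocks equal to $\Sigma_b(e)$ and $\Sigma_w(e)$ and bottom-right $r\times r$ blocks carrying only noise statistics. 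Because the noise is class-agnostic its between-class block is zero while its within-class block is floored by $\delta I_r$; consequently $\Sigma_{\text{lidar}}(\tilde e)$ is block diagonal with top block exactly $\Sigma_{\text{lidar}}(e)$ (eigenvalues $\lambda_1,\dots,\lambda_p$) and a bottom block whose eigenvalues $\nu_1,\dots,\nu_r$ are nonnegative and, conservatively, at most $\delta$ (they in fact vanish for genuinely independent noise, but bounding them by $\delta$ yields a clean statement that also covers the regularized empirical estimator). Thus the spectrum of $\Sigma_{\text{lidar}}(\tilde e)$ is $\{\lambda_1,\dots,\lambda_p\}\cup\{\nu_1,\dots,\nu_r\}$ with each $\nu_j\in[0,\delta]$.

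Next I would bound the log-ratio $\log\emph{LiDAR}(\tilde e)-\log\emph{LiDAR}(e)$ by splitting it into (i) the change in the $p$ original terms induced by the noise-inflated normalizer, which grows from $\|\lambda\|_1$ to at most $S':=\|\lambda\|_1+r\delta$, and (ii) the $r$ new terms $-\big(\tfrac{\nu_j}{\tilde S}+\epsilon\big)\log\big(\tfrac{\nu_j}{\tilde S}+\epsilon\big)$, where $\tilde S=\|\lambda\|_1+\sum_j\nu_j$. The constraint $\delta<(\exp[-1]-\epsilon)\|\lambda\|_1$ forces each argument $\tfrac{\nu_j}{\tilde S}+\epsilon$ below $\exp[-1]$, the region where $x\mapsto-x\log x$ is increasing; hence each new term is largest at $\nu_j=\delta$, and (ii) is bounded by $-\tfrac{r\delta}{\|\lambda\|_1}\log\tfrac{\delta}{\|\lambda\|_1}$.

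The main obstacle is (i): converting the multiplicative normalizer shift $\tfrac{\lambda_i}{\|\lambda\|_1}+\epsilon\mapsto\tfrac{\lambda_i}{S'}+\epsilon$ into an additive bound on $-\sum_{i=1}^p p_i\log p_i$ that is uniform over the unknown $\lambda_i$. Here the hypotheses $\tfrac{\|\lambda\|_\infty}{\|\lambda\|_1}<1-\exp[-1]$ and $\epsilon<1-\tfrac{\|\lambda\|_\infty}{\|\lambda\|_1}$ are exactly what keep every $p_i=\tfrac{\lambda_i}{\|\lambda\|_1}+\epsilon$ strictly below $1$, so that $\log p_i<0$ and the elementary estimates on the derivative of $x\mapsto x\log x$ apply term by term; the clean route is a per-term inequality of the form $p_i\log p_i-\big(\tfrac{\lambda_i}{S'}+\epsilon\big)\log\big(\tfrac{\lambda_i}{S'}+\epsilon\big)\le 2\log\tfrac{S'}{\|\lambda\|_1}$, which sums to $2p\log\tfrac{S'}{\|\lambda\|_1}=-2p\log\tfrac{\|\lambda\|_1}{\|\lambda\|_1+r\delta}$. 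Adding the contribution from (ii) and exponentiating then gives precisely the stated factor, completing the proof.
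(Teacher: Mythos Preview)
Your proposal is correct and follows essentially the same route as the paper: the block-diagonal reduction of $\Sigma_{\text{lidar}}(\tilde e)$, the split of $\log\emph{LiDAR}(\tilde e)-\log\emph{LiDAR}(e)$ into the $p$ renormalized original terms and the $r$ noise terms, the per-term bound $p_i\log p_i-q_i\log q_i\le 2\log\!\big(\tfrac{\|\lambda\|_1+r\delta}{\|\lambda\|_1}\big)$ (which the paper obtains via the factorization $-q_i\log q_i=-q_i\log p_i-q_i\log(q_i/p_i)$ together with $q_i\le 1+\epsilon\le 2$ and $q_i/p_i\ge\|\lambda\|_1/(\|\lambda\|_1+r\delta)$), and the monotonicity of $x\mapsto -x\log x$ on $(0,e^{-1})$ for the noise contribution all coincide with the paper's argument. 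Your treatment of the bottom block is in fact slightly more careful than the paper's, since you note that the population noise eigenvalues actually vanish and only use $\nu_j\le\delta$ as a conservative envelope.
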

\begin{proof}

    From the independence of the noise vector $\mu$, it is easy to see that:
    \begin{align}
        \Sigma_b(\tilde{e}) &= \begin{pmatrix}
            \Sigma_b(e) & \mathbf{0}\\
            \mathbf{0} & \mathbf{0}
        \end{pmatrix},~~~\Sigma_w(\tilde{e}) = \begin{pmatrix}
            \Sigma_w(e) & \mathbf{0}\\
            \mathbf{0} & \Sigma_\mu
        \end{pmatrix},~~~\Sigma_{\text{lidar}}(\tilde{e}) = \begin{pmatrix}
            \Sigma_{\text{lidar}}(e) & \mathbf{0}\\
            \mathbf{0} & \delta I_r
        \end{pmatrix}
    \end{align}
    Then, we have that:
    \begin{align}
        \emph{LiDAR}(\tilde{e}) &= \exp\Big[-\sum_{i=1}^p \Big(\frac{\lambda_i}{\|\lambda\|_1 + r\delta} + \epsilon\Big)\log \Big(\frac{\lambda_i}{\|\lambda\|_1 + r\delta} + \epsilon\Big) + \mathcal{R} \Big]
    \end{align}
    where:
    \begin{align}
        \mathcal{R} &=
        \exp\Big[- r\Big(\frac{\delta}{\|\lambda\|_1 + r\delta} + \epsilon\Big)\log \Big(\frac{\delta}{\|\lambda\|_1 + r\delta} + \epsilon\Big)\Big]\\
        &\leq \exp\Big[-\frac{r\delta}{\|\lambda\|_1}\log(\frac{\delta}{\|\lambda\|_1})\Big]\label{eqn:1}
    \end{align}
    where we used the fact that $\frac{\delta}{\|\lambda\|_1 + r\delta} + \epsilon<\exp[-1]$ by assumption to deduce the maximum of the function $|x \log(x)|$ in the interval $x \in (0,\exp[-1])$. Note that:
    \begin{align}
        &\exp\Big[-\sum_{i=1}^p \Big(\frac{\lambda_i}{\|\lambda\|_1 + r\delta} + \epsilon\Big)\log \Big(\frac{\lambda_i}{\|\lambda\|_1 + r\delta} + \epsilon\Big)\Big]\\
        &= \exp\Big[-\sum_{i=1}^p \Big(\frac{\lambda_i}{\|\lambda\|_1 + \delta r} + \epsilon\Big)\log \Big(\frac{\lambda_i}{\|\lambda\|_1 } + \epsilon\Big)\\
        &- \sum_{i=1}^p \Big(\frac{\lambda_i}{\|\lambda\|_1 +\delta r} + \epsilon\Big)\log \Big(\frac{\frac{\lambda_i}{\|\lambda\|_1 + r\delta} + \epsilon}{\frac{\lambda_i}{\|\lambda\|_1 } + \epsilon}\Big)\Big]
    \end{align}
    Since $\forall_i,~\frac{\lambda_i}{\|\lambda\|_1 } + \epsilon<1$ by assumption, we have:
    \begin{align}
        &\exp\Big[-\sum_{i=1}^p \Big(\frac{\lambda_i}{\|\lambda\|_1 + \delta r} + \epsilon\Big)\log \Big(\frac{\lambda_i}{\|\lambda\|_1 } + \epsilon\Big)\Big]\\
        &\leq \exp\Big[-\sum_{i=1}^p \Big(\frac{\lambda_i}{\|\lambda\|_1 } + \epsilon\Big)\log \Big(\frac{\lambda_i}{\|\lambda\|_1 } + \epsilon\Big)\Big] = \emph{LiDAR}(e)
    \end{align}
    hence we can write:
    \begin{align}
    &\exp\Big[-\sum_{i=1}^p \Big(\frac{\lambda_i}{\|\lambda\|_1 + r\delta} + \epsilon\Big)\log \Big(\frac{\lambda_i}{\|\lambda\|_1 + r\delta} + \epsilon\Big)\Big]\\
        &\leq \emph{LiDAR}(e)\exp\Big[- p \Big(1 + \epsilon\Big)\min_i\log \Big(\frac{\frac{\lambda_i}{\|\lambda\|_1 + r\delta} + \epsilon}{\frac{\lambda_i}{\|\lambda\|_1 } + \epsilon}\Big)\Big]\\
        &\leq \emph{LiDAR}(e)\exp\Big[- 2p \log \Big(\frac{\|\lambda\|_1}{\|\lambda\|_1 + r\delta}\Big)\Big] \label{eqn:2}
    \end{align}
    Combining equations \ref{eqn:1} and \ref{eqn:2}, we get the result:
    \begin{align}
        \emph{LiDAR}(\tilde{e}) \leq \emph{LiDAR}(e)\exp\Big[- 2p \log \Big(\frac{\|\lambda\|_1}{\|\lambda\|_1 + r\delta}\Big) -\frac{r\delta}{\|\lambda\|_1}\log(\frac{\delta}{\|\lambda\|_1})\Big]
    \end{align}
\end{proof}

Note that an immediate consequence of Proposition \ref{prop:1} is that for $\delta << \|\lambda\|_1$ we have that $\emph{LiDAR}(\tilde{e}) \leq \emph{LiDAR}$.

\paragraph{Eigenspectrum Decay} As additional motivation, we invoke the arguments made in \citep{alpha0} and later expanded in \citep{alpha} on the optimal eigenspectrum decay rate in an infinite dimensional embedding space, given by $\lambda_i \sim \Theta(i^{-1})$.
In \citep{alpha0}, it was shown that a slower decay rate would necessarily imply a non-smooth kernel function, which, in turn, implies a non-monotonic relationship between rank and downstream performance. \emph{LiDAR} circumvents this issue by implementing whitening operation through the inverse of $\Sigma_w$. This, in theory, permits the attainment of both a smooth embedding and the flexibility for eigenvalue decay in $\Sigma_{\text{LDA}}$ to occur at a very gradual rate. It is essential to acknowledge, however, that while the smoothness can be maintained, an excessively high LDA rank may, in theory, have adverse consequences on downstream performance, owing to other underlying factors. We, therefore, defer a more comprehensive theoretical exploration of the implications of \emph{LiDAR} to future research endeavors.

\newpage

\section{Implementation Details}\label{sec:details}
% \paragraph{VICReg}
\subsection{VICReg}
\label{appendix:subsec:impl:vicreg}

VICReg~\citep{vicreg} proposes an explicit regularization to prevent dimensional collapse in self-supervised learning methods. VICReg~\citep{vicreg} consists of the standard invariance term, a variance term to encourage networks to not suffer from dimensional collapse and a covriance term that encourages the covariance matrix of embeddings to be approximately diagonal. Let $z_i \in \mathbb{R}^d$ denote the features from a neural network. The variance term is given by:
\begin{align}
    var(Z) = \frac{1}{d}\sum_{j=0}^{d-1} \max\left(0, 1 - S\left(z_{j}, \epsilon\right)\right) 
    \label{eqn:vicreg:var_term}
\end{align}
where $S\left(x, \epsilon\right)$ denotes the standard deviation and $\epsilon$ is a small value to prevent numerical instability. The covariance term is given by:
\begin{align}
    c(Z) =\frac{1}{d}\sum_{i \neq j}\left[C(Z\right]^{2}_{i,j}
    \label{eqn:vicreg:covar_term}   
\end{align} while the invariance term is the $L_{2}$ distance between the usual positive data pairs used to optimize the network. The complete loss function used to optimize a network in VICReg is given by:
\begin{align}
    L\left(Z, Z^{\prime}\right) = \frac{\lambda}{n} \sum_{i} \Vert z_{i} - z_{j} \Vert + \mu \left[ var\left(Z\right) + var\left(Z^{\prime}\right)\right] + \nu \left[ c\left(Z\right) + c\left(Z^{\prime}\right) \right]
\end{align}
where $\lambda$, $\mu$ and $\nu$ are hyperparameters that control the contribution from the invariance, standard deviation and covariance terms respectively.

We use the reference implementation \footnote{\url{https://github.com/facebookresearch/vicreg}} provided by \citet{vicreg} to train a ResNet-50~\citep{he2016deep} backbone on Imagenet-1K dataset~\citep{ILSVRC15}. The projector used is a standard multi-layer perceptron (MLP) with dimensions $8192$-$8192$-$8192$. We train a ResNet-50 with VICReg for $100$ epochs using hyperparameters and training protocol described in ~\citep{vicreg}. The trained backbone is probed by trianing a classifier on the frozen features of the backbone. We use a setup that is described in \emph{RankMe} including data preprocessing and optimizer-related hyperparameters to train a probe for 30 epochs. We use $32$ hyperparametr sets that include described in~\citet{rankme} in our experiments. 

We use ImageNet-1K~\citep{ILSVRC15} training data as our source dataset for self-supervised learning (SSL). We use 10000 images from the source dataset, i.e., ImageNet-1K training split and 10 augmentations per image to construct a labeled dataset needed to calculate \emph{LiDAR} and augmented-\emph{Rankme} . We use 25600 images from the source dataset (ImageNet-1K) to calculate \emph{RankMe} as done by \citet{rankme}. All pipelines that calculate various metrics employ the exact data augmentation pipeline used for SSL pretraining. The results for VICReg are as follows:
\begin{itemize}
    \item We calculate the \emph{LiDAR}, \emph{RankMe}, augmented-\emph{RankMe} for the final checkpoint after SSL training. Figure~\ref{fig:vicreg:scatter_epoch100} shows a scatter plot of the 32 checkpoints where we plot the various effective rank metrics versus probe accuracy.
    \item We calculate the effective rank metrics on checkpoints collected every 20 epochs during training. Figure~\ref{fig:vicreg:scat_plot} shows the evolution of the metrics during training.
    \item Table~\ref{tab:vicreg:appendix:correlations} shows the correlations estimated via Spearman rank correlation and Kendall's $\tau$ correlation tests for all checkpoints collected during training.
\end{itemize}

\begin{figure}
    \centering
    \includegraphics[width=0.5\textwidth]{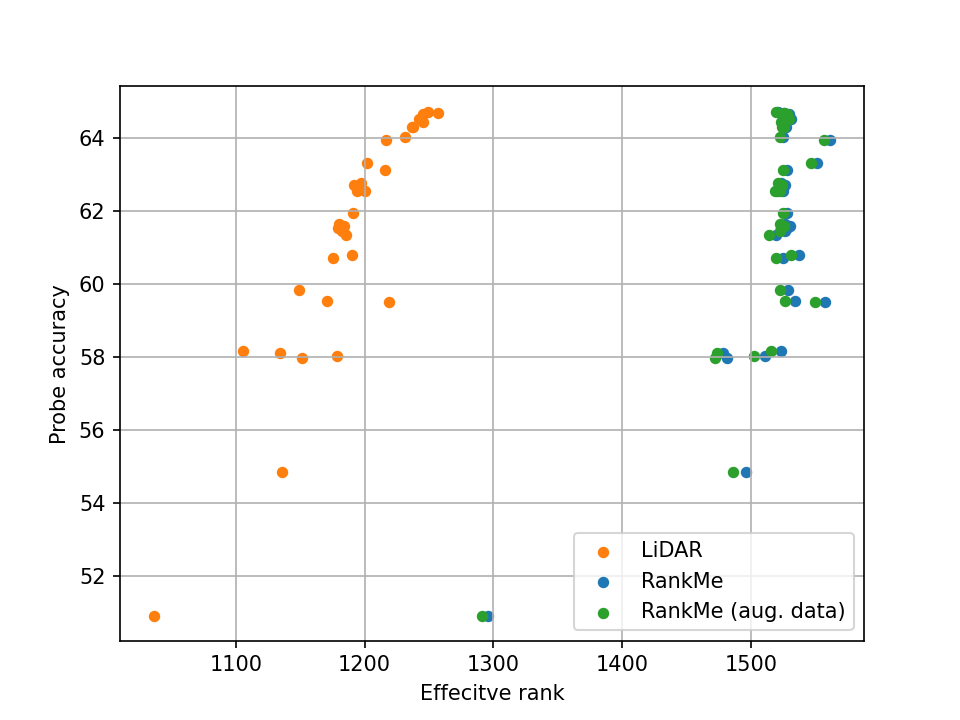}
    \caption{VICReg: Performance of VICReg representations measured by RankMe and LiDAR. Each point refers to a checkpoint evaluated after 100 epochs of self-supervised pretraining. LiDAR shows strong correlation}
         \label{fig:vicreg:scatter_epoch100}
\end{figure}

\begin{figure}
  \begin{subfigure}{0.49\textwidth}
    \centering
    \includegraphics[width=\textwidth]{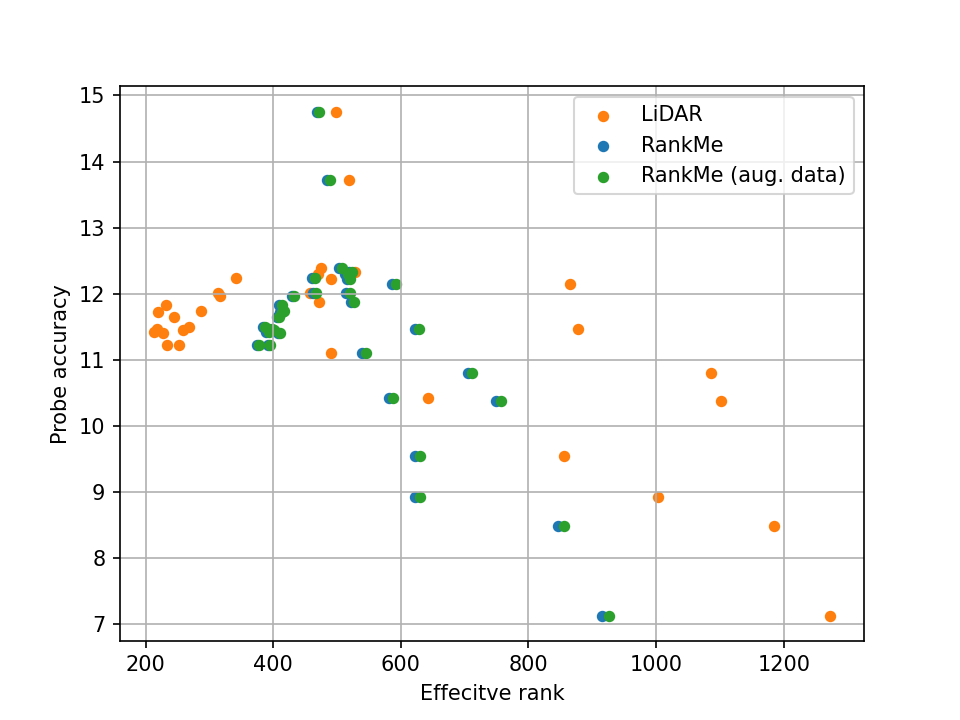}
    \caption{1 training epoch}
    \label{fig:vicreg:scat_plot:ep0}
  \end{subfigure}%
  \begin{subfigure}{0.49\textwidth}
    \centering
    \includegraphics[width=\textwidth]{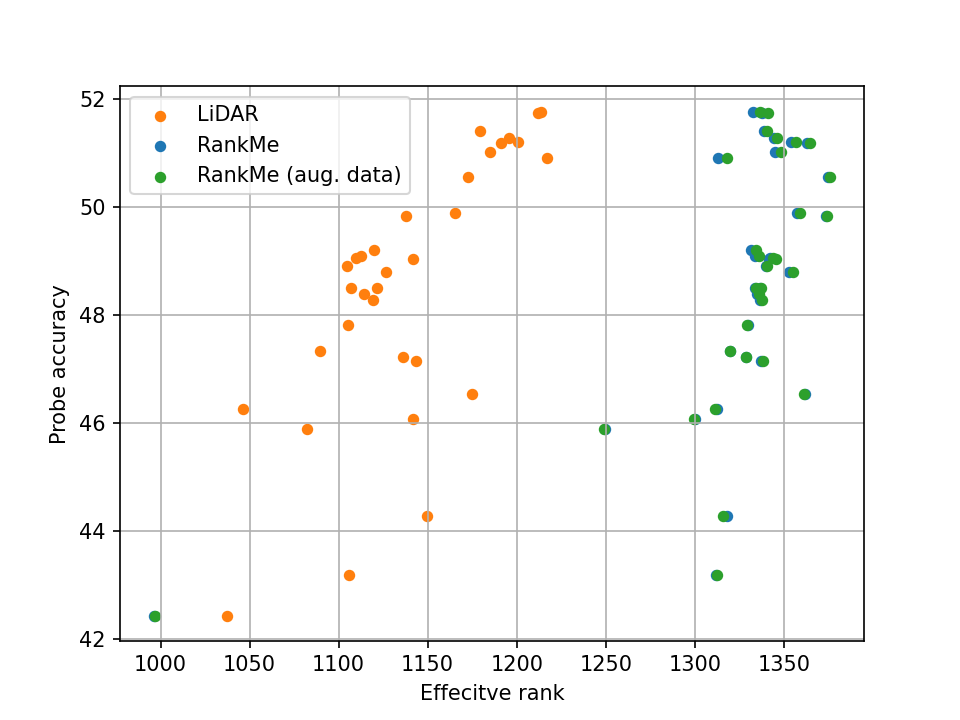}
    \caption{20 training epochs}
    \label{fig:vicreg:scat_plot:ep20}
  \end{subfigure}
  \medskip
  
  \begin{subfigure}{0.49\textwidth}\quad
    \centering
    \includegraphics[width=\textwidth]{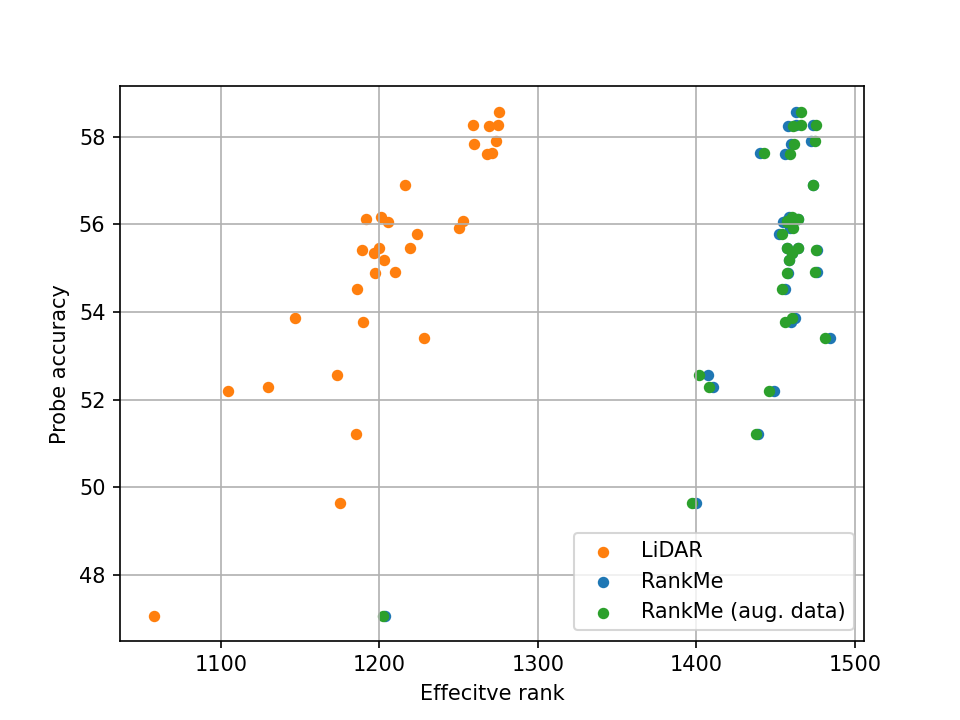}
    \caption{40 training epochs}
    \label{fig:vicreg:scat_plot:ep40}
  \end{subfigure}
  \begin{subfigure}{0.49\textwidth}
    \centering
    \includegraphics[width=\textwidth]{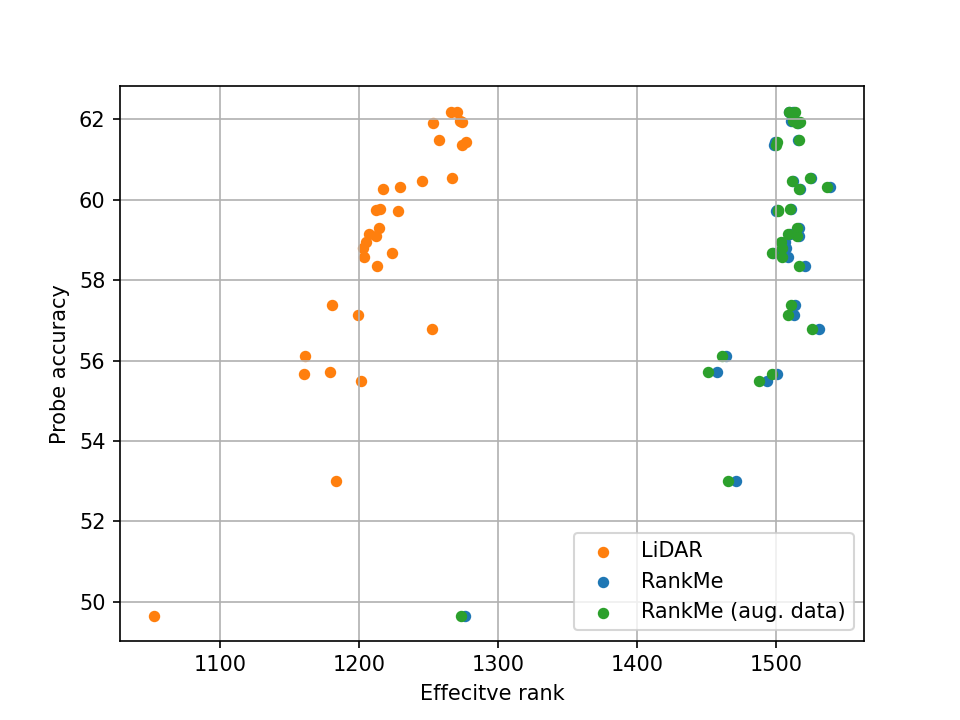}
    \caption{60 training epochs}
    \label{fig:vicreg:scat_plot:ep60}
  \end{subfigure}
  \medskip
  
  \begin{subfigure}{0.49\textwidth}
    \centering
    \includegraphics[width=\textwidth]{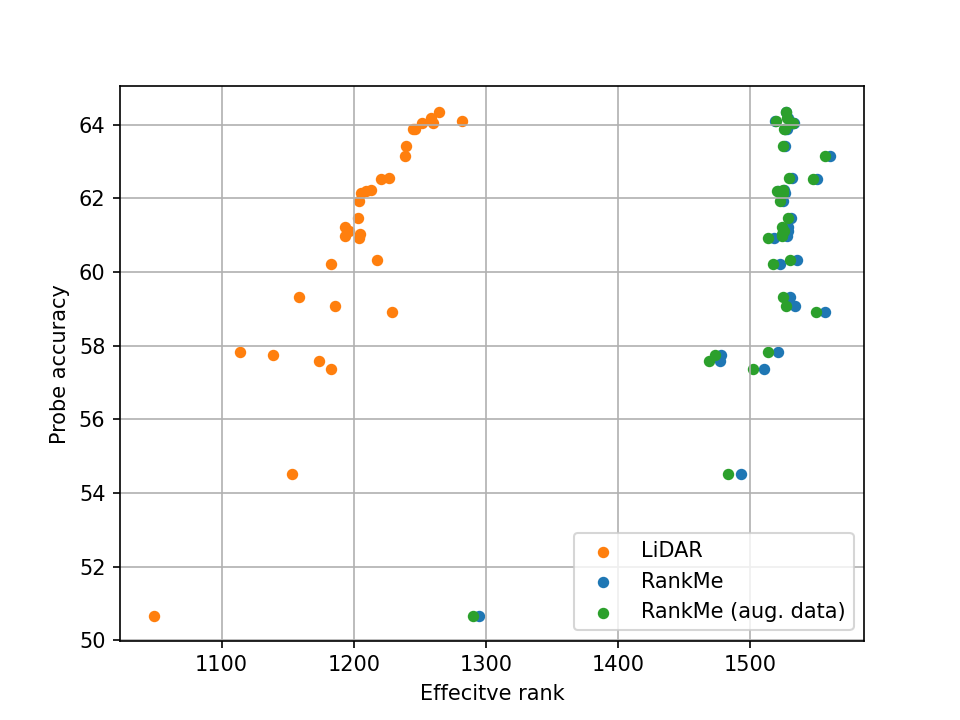}
    \caption{80 training epochs}
    \label{fig:vicreg:scat_plot:ep80}
  \end{subfigure}
  \begin{subfigure}{0.49\textwidth}
    \centering
    \includegraphics[width=\textwidth]{figures/vicreg/scat_plot_ep100.png}
    \caption{100 training epochs}
    \label{fig:vicreg:scat_plot:ep100}
  \end{subfigure}
  \caption{VICReg: Performance of VICReg representations measured by RankMe and LiDAR. Each point represents a row of hyperparameters among the 32 sets of hyperparameters consdiered in our experiments. The hyperparamter values are identical to the values considered by~\citep{rankme} }
  \label{fig:vicreg:scat_plot}
\end{figure}

\begin{table}
\centering
\begin{tabular}{c c c c}
\toprule
Correlation & RankMe & RankMe         & LiDAR \\
            &        & (aug. dataset) &        \\
\midrule
Spearman rank  & 0.3174  & 0.5209 & \textbf{0.9161} \\
Kendall's $\tau$ & 0.2056  & 0.3790 & \textbf{0.8105} \\
\bottomrule
\end{tabular}
\caption{\label{tab:vicreg:appendix:correlations} VICReg: Compare RankMe and LiDAR using Spearman Rank correlation and Kendall's $\tau$ correlation measures after 100 epochs of training. VICReg representations are used to estimate \emph{RankMe} and \emph{LiDAR} for the 32 hyperparameter sets considered in our experiments. The hyperparamter values are identical to the values considered by~\citep{rankme} }
\end{table}

\begin{table}
  \centering
  \begin{tabular}{c c c c c}
  \toprule
  Metric               & cov.    &  inv.   &  LR     &  WD \\
  \midrule
  \textcolor{gray}{Imagenet Oracle}   & \textcolor{gray}{64.7380} & \textcolor{gray}{62.7800} & \textcolor{gray}{63.9500} & \textcolor{gray}{61.6500} \\
  \emph{RankMe}        & 64.5400           & 59.5400 & \textbf{63.9500} & \textbf{59.5200} \\
  \emph{RankMe} (aug. dataset) & 64.5400           & 59.5400 & \textbf{63.9500} & \textbf{59.5200} \\
  \emph{LiDAR}         & \textbf{64.7080} & \textbf{62.5720} & \textbf{63.9500} & \textbf{59.5200} \\
  \bottomrule
  \end{tabular}
  \caption{\label{tab:vicreg:ckpt_selection}VICReg: Linear probe accuracy recovered by \emph{RankMe}, augmented-\emph{RankMe} and \emph{LiDAR} on ImageNet-1K dataset at the end of training. The metrics presented above are calculated with representations}
\end{table}

\begin{table}
  \begin{subtable}[h]{0.5\textwidth}
    \centering
    \resizebox{0.99\textwidth}{!}{
    \begin{tabular}{c c c c}
    \toprule
    Epoch & RankMe & RankMe         & LiDAR \\
          &        & (aug. dataset) &        \\
    \midrule
    20   & 0.4476 &	0.4718 & \textbf{0.5323} \\
    40   & 0.2984 & 0.4597 & \textbf{0.7097} \\
    60   & 0.2702 & 0.3750 & \textbf{0.7218} \\
    80   & 0.2823 & 0.4435 & \textbf{0.7823} \\
    100  & 0.2056 & 0.3790 & \textbf{0.8105} \\

    \bottomrule
    \end{tabular}
    }
    \caption{\label{tab:vicreg:appendix:results:corr_epochs}Kendall's $\tau$ correlation coefficient  }
  \end{subtable}
  \hfill
  \begin{subtable}[h]{0.5\textwidth}
    \centering
    \resizebox{0.99\textwidth}{!}{
    \begin{tabular}{c c c c c}
    \toprule
    Metric  & cov.    &  inv.   &  LR     &  WD \\
    \midrule
    \textcolor{gray}{Imagenet Oracle}   & \textcolor{gray}{64.7380} & \textcolor{gray}{62.7800} & \textcolor{gray}{63.9500} & \textcolor{gray}{61.6500} \\
    \emph{RankMe} & 64.5400  & 59.5400 & \textbf{63.9500} & \textbf{59.5200} \\
    \emph{RankMe} (aug. dataset) & 64.5400 & 59.5400 & \textbf{63.9500} & \textbf{59.5200} \\
    \emph{LiDAR} & \textbf{64.7080} & \textbf{62.5720} & \textbf{63.9500} & \textbf{59.5200} \\
    \bottomrule
    \end{tabular}
    }
    \caption{\label{tab:vicreg:appendix:results:ckpt_selection} Linear probe accuracy recovered by \emph{RankMe}.}
  \end{subtable}
  \caption{\label{tab:vicreg:results} VICReg: (a) Kendall's $\tau$ correlation coefficient between effective rank estimated by \emph{RankMe} and \emph{LiDAR} and probe accuracy per hyperparameter and (b) Linear probe accuracy recovered by \emph{RankMe}, augmented-\emph{RankMe} and \emph{LiDAR} on ImageNet-1K dataset at the end of training. The metrics presented above are calculated with representations.}
  \label{tab:vicreg:appendix:results}
\end{table}
\newpage

% \paragraph{I-JEPA}
\subsection{I-JEPA}
\label{appendix:subsec:impl:ijepa}

Image-based Joint-Embedding Predictive Architecture (I-JEPA)~\citep{ijepa} is a recently proposed non-generative approach to learn semantically strong representations in a self-supervised manner. I-JEPA splits an image into a context block and several target blocks and uses the context block to predict the target blocks. Note that each block is composed of image patches. The core innovation in I-JEPA is the design of a masking strategy that is shown to lead to semantically strong representations when used in conjunction with Vision Transformers (ViTs)~\citep{dosovitskiy2021an}. I-JEPA uses a ViT to encode the non-masked context patches and another ViT to predict the encodings for the masked out target patches. The target representations are provided by a target encoder which is an exponential moving average (EMA) version of the context encoder. In this work we refer to the context encoder as the student encoder and the target encoder as the teacher and use these terms interchangeably in our presentation. The loss function is applied to the embeddings that are output by the student and the teacher and is given by:
\begin{align}
\frac{1}{M}\sum_{n=1}^{M}\sum_{i \in B_{i}}\Vert y_{i} - \hat{y}_{i} \Vert^{2}
\label{eqn:loss:ijepa}
\end{align}
where $M$ denotes the number of target blocks, $B_{i}$ denotes a set of block indices in a target and $y$ and $\hat{y}$ denote the embeddings provided by the student and the teacher respectively. The asymmetry introduced due to student and teacher encoders allows I-JEPA to avoid representation collapse~\citep{ijepa}.

In order to apply \emph{LiDAR} for I-JEPA~\citet{ijepa} we create a labeled dataset by first selecting a fixed number of images at random and treat each image as a class. We then apply the masking approach proposed in I-JEPA~\citep{ijepa} to create multiple instances of a class to create a labeled dataset needed to calcualte the linear discriminant analysis matrix for \emph{LiDAR}. The embeddings $y$ and $\hat{y}$ described in \ref{eqn:loss:ijepa} are used as inputs to estimate Student and Teacher \emph{LiDAR} measures. We use the output of the student encoder to calculate \emph{RankMe} measure. As described in Section~\ref{subsubsec:impl_considerations} the embeddings $y$ and $|hat{y}$ are used to calculate the augmented \emph{RankMe} metric which is denoted as \emph{RankMe} (aug.) in the results. We use 1000 images and 50 augmentations to construct the labeled dataset for \emph{LiDAR} and augmented-\emph{RankMe} while we use 10000 image samples for \emph{RankMe}. Self-supervised training is run for 600 epochs with an effective batch size of $2048$ using the training protocol described in I-JEPA~\citep{ijepa}. The downstream task consists of linear probing frozen representations on the ImageNet-1K dataset~\citep{ILSVRC15}. The probe is optimized with Adam~\citep{KingBa15} optimizer for $20$ epochs with a starting learning rate of $0.01$ and a step learning rate schedule where the base learning rate is dropped by a factor 10 after $15$ epochs. The following hyperparamter sets are used in our experiments to empirically estimate the performance of \emph{RankMe} augmented-\emph{RankMe} and \emph{LiDAR}:
\begin{itemize}
    \item Learning rate from $0.001, 0.002, 0.004, 0.006$ and $0.008$. Figure~\ref{fig:ijepa:lr}, Figure~\ref{fig:ijepa:scatter:lr} show the results of experiments where we vary the learning rate parameter alone. Table~\ref{tab:ijepa:spearmanc} and Table~\ref{tab:ijepa:kendalltau} show the rank correlation coefficients while Table~\ref{tab:ijepa:ckpt_selection} show the accuracy recovered by \emph{RankMe} and \emph{LiDAR}.

    \item Weight decay from $0.05, 0.1, 0.2$and $0.4$. Figure~\ref{fig:ijepa:wd}, Figure~\ref{fig:ijepa:scatter:wd} show the results of experiments where we vary the weight decay parameter alone. Note that the weight decay is kept fixed throughought training in this set of experiments. Table~\ref{tab:ijepa:spearmanc} and Table~\ref{tab:ijepa:kendalltau} show the rank correlation coefficients while Table~\ref{tab:ijepa:ckpt_selection} show the accuracy recovered by \emph{RankMe} and \emph{LiDAR}.

    \item Target mask scale factor from $[0.15, 0.2]$, $[0.125, 0.2]$, $[0.2, 0.25]$ and $[0.2, 0.3]$. Figure~\ref{fig:ijepa:tgt_mask}, Figure~\ref{fig:ijepa:scatter:tgt_mask} show the results of experiments where we target mask scale ratio alone. Table~\ref{tab:ijepa:spearmanc} and Table~\ref{tab:ijepa:kendalltau} show the rank correlation coefficients while Table~\ref{tab:ijepa:ckpt_selection} show the accuracy recovered by \emph{RankMe} and \emph{LiDAR}.

    \item Target mask scale factor from $[0.85, 1.0]$, $[0.75, 1.0]$, $[0.65, 1.0]$ and $[0.4, 1.0]$. Figure~\ref{fig:ijepa:ctx_mask}, Figure~\ref{fig:ijepa:scatter:ctx_mask} show the results of experiments where we target mask scale ratio alone. Table~\ref{tab:ijepa:spearmanc} and Table~\ref{tab:ijepa:kendalltau} show the rank correlation coefficients while Table~\ref{tab:ijepa:ckpt_selection} show the accuracy recovered by \emph{RankMe} and \emph{LiDAR}.

\end{itemize}

Additionally, we conduct a random hyperparameter search experiment where we sample 20 sets of hyperparamters uniformly and train a ViT-B~\citep{dosovitskiy2021an} with I-JEPA~\citep{ijepa}. The hyperparametrs were sampled from the following uniform distributions:
\begin{itemize}
    \item learning rate from $\left[ 0.000125 0.0002\right]$
    \item weight decay from $\left[ 0.05 0.4 \right]$
    \item target scale (minimum) from $\left[ 0.1 0.2 \right]$
    \item target scale (maximum) from $\left[ 0.2 0.4 \right]$
    \item context scale (minimum) from $\left[ 0.3 0.95 \right]$
\end{itemize}
The results of these experiments are shown in Figure~\ref{fig:ijepa:scatter:rand_hparams} and the correlations are available in Table~\ref{tab:ijepa:rand_search:kendalltau} and Table~\ref{tab:ijepa:rand_search:spearman}. The probe accuracy recovered for hyperparameters generated via random search are shown in Table~\ref{tab:ijepa:ckpt_selection:rand_hparams}.

\begin{figure}
     \centering
     \begin{subfigure}[b]{0.32\textwidth}
         \centering
         \includegraphics[width=\textwidth]{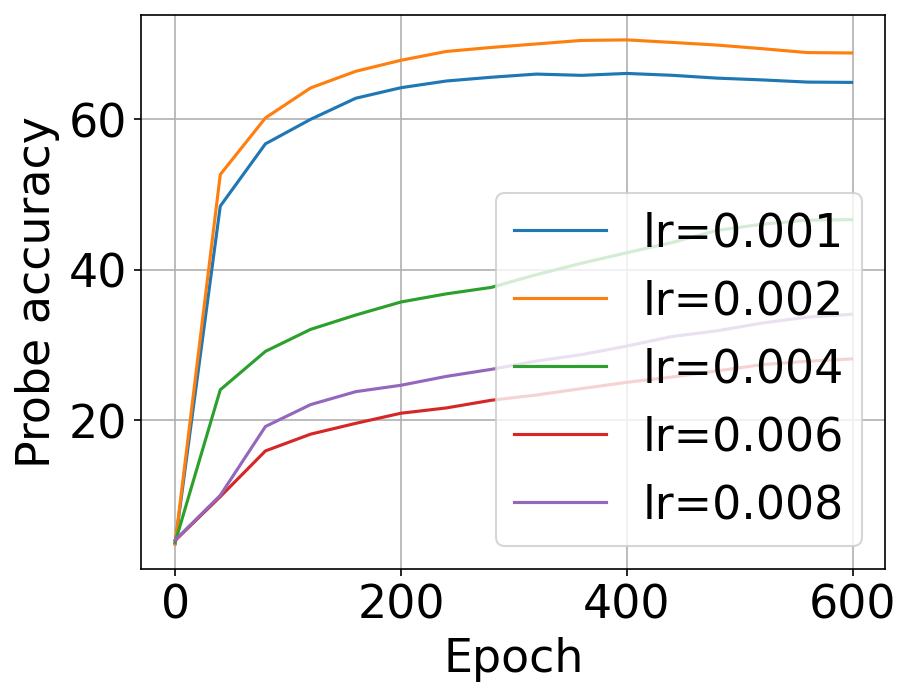}
         \caption{Linear probe accuracy}
         \label{fig:ijepa:lr:probe}
     \end{subfigure}
     \hfill
     \begin{subfigure}[b]{0.32\textwidth}
         \centering
         \includegraphics[width=\textwidth]{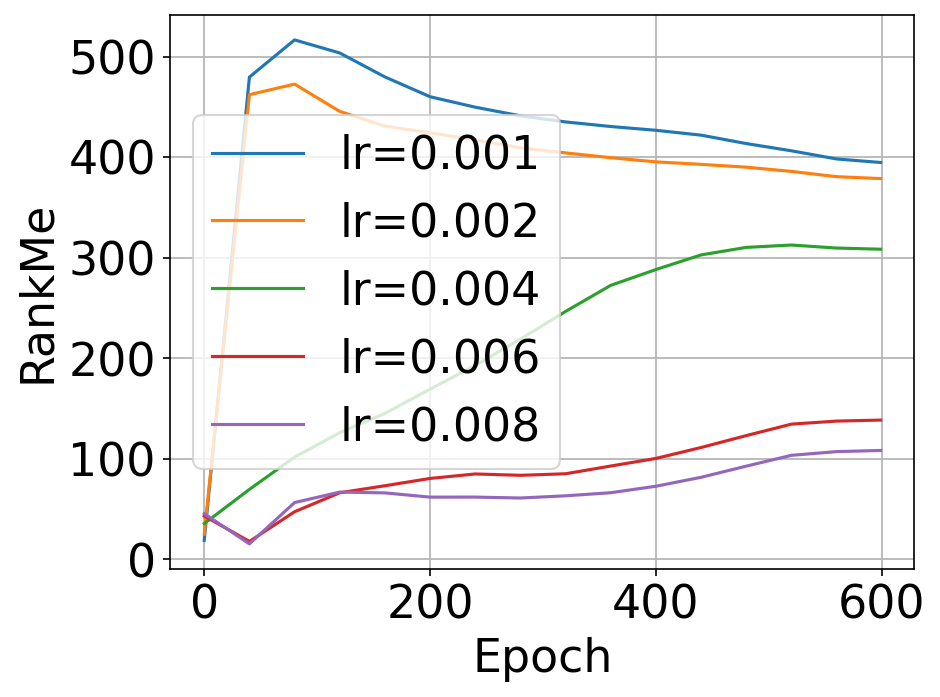}
         \caption{\emph{RankMe}}
         \label{fig:ijepa:lr:rankme}
     \end{subfigure}
     \hfill
     \begin{subfigure}[b]{0.32\textwidth}
         \centering
         \includegraphics[width=\textwidth]{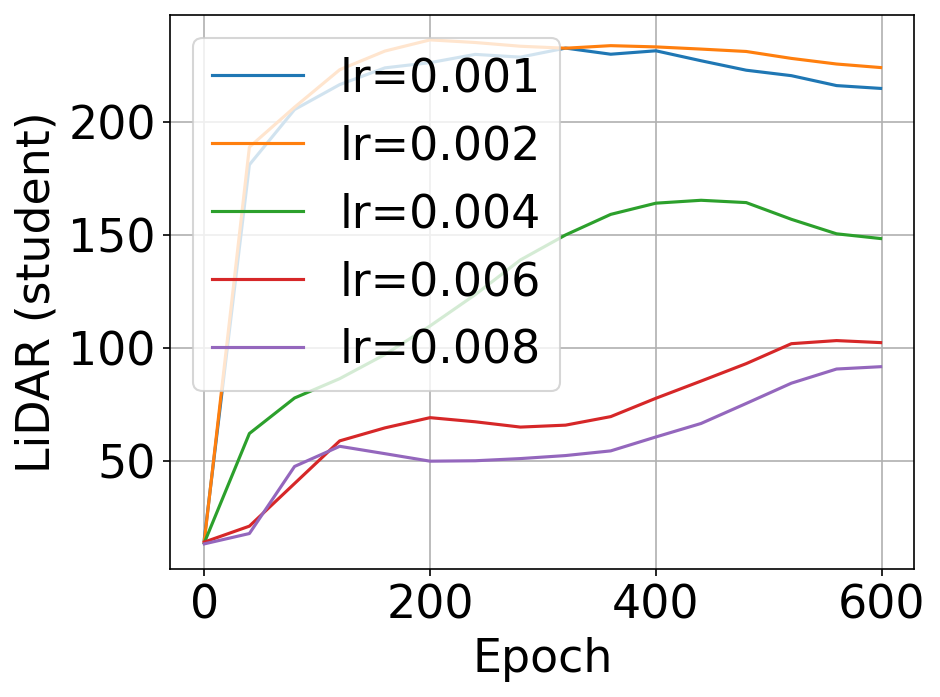}
         \caption{\emph{LiDAR (student)}}
         \label{fig:ijepa:lr:lidar_s}
     \end{subfigure}
     \caption{I-JEPA: ViT-Base architecture trained on Imagenet-1K by varying the learning rate. Plots show the evolution of metrics over training time.}
     \label{fig:ijepa:lr}
\end{figure}

\begin{figure}
     \centering
     \begin{subfigure}[b]{0.49\textwidth}
         \centering
         \includegraphics[width=\textwidth]{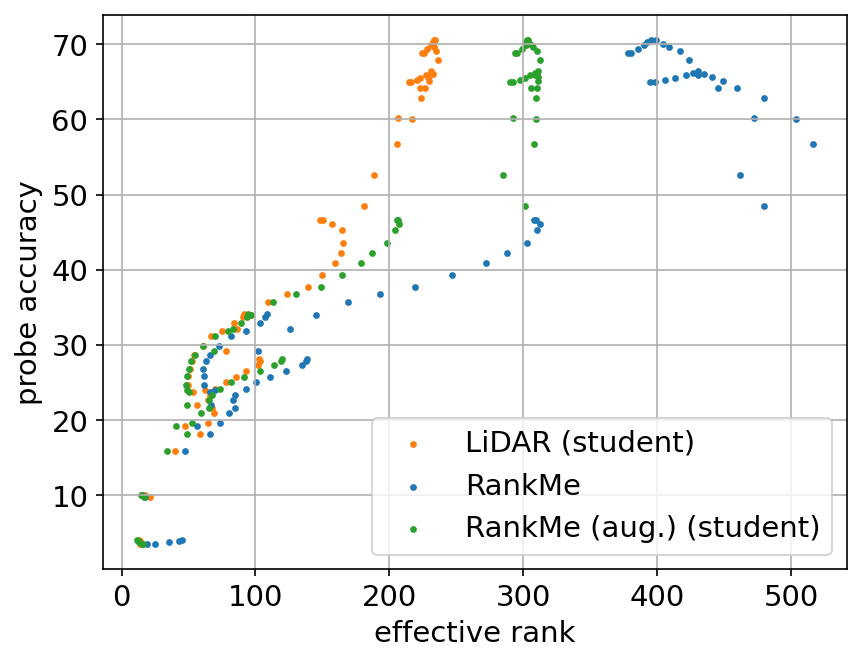}
         % \caption{}
         \label{fig:ijepa:scatter:lr:student}
     \end{subfigure}
     \hfill
     \begin{subfigure}[b]{0.49\textwidth}
         \centering
         \includegraphics[width=\textwidth]{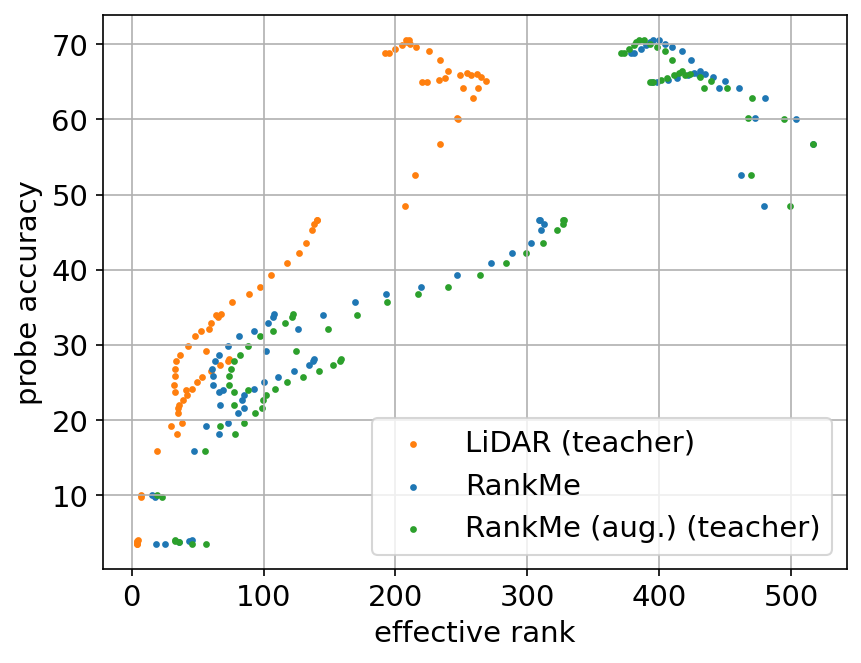}
         % \caption{}
         \label{fig:ijepa:scatter:lr:teacher}
     \end{subfigure}
     \caption{I-JEPA: Scatter plot of checkpoints collected during training of ViT-Base architecture on Imagenet-1K by varying the learning rate.}
     \label{fig:ijepa:scatter:lr}
\end{figure}

\begin{figure}
     \centering
     \begin{subfigure}[b]{0.32\textwidth}
         \centering
         \includegraphics[width=\textwidth]{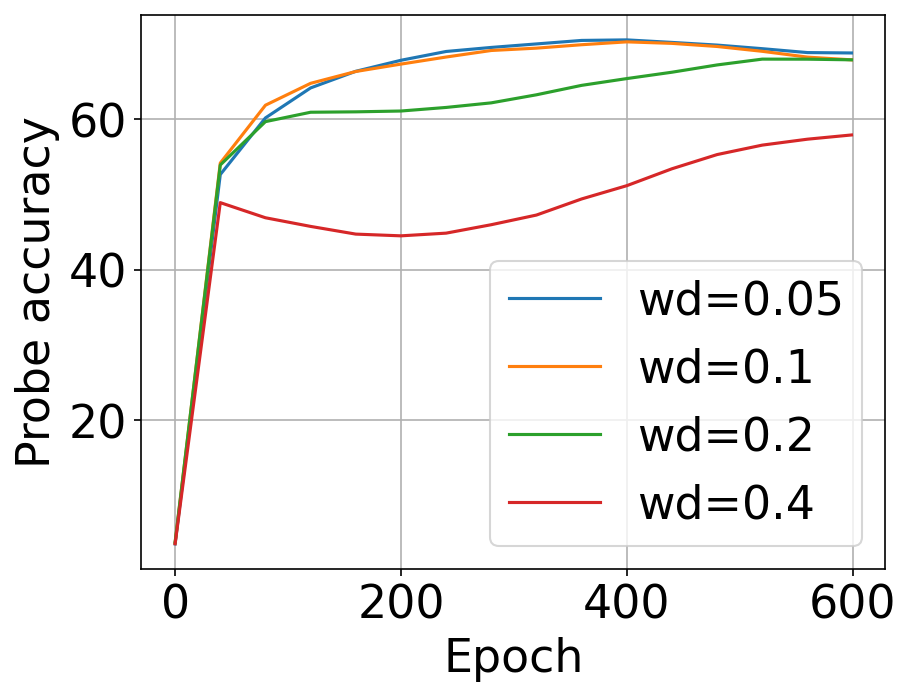}
         \caption{Linear probe accuracy}
         \label{fig:ijepa:wd:probe}
     \end{subfigure}
     \hfill
     \begin{subfigure}[b]{0.32\textwidth}
         \centering
         \includegraphics[width=\textwidth]{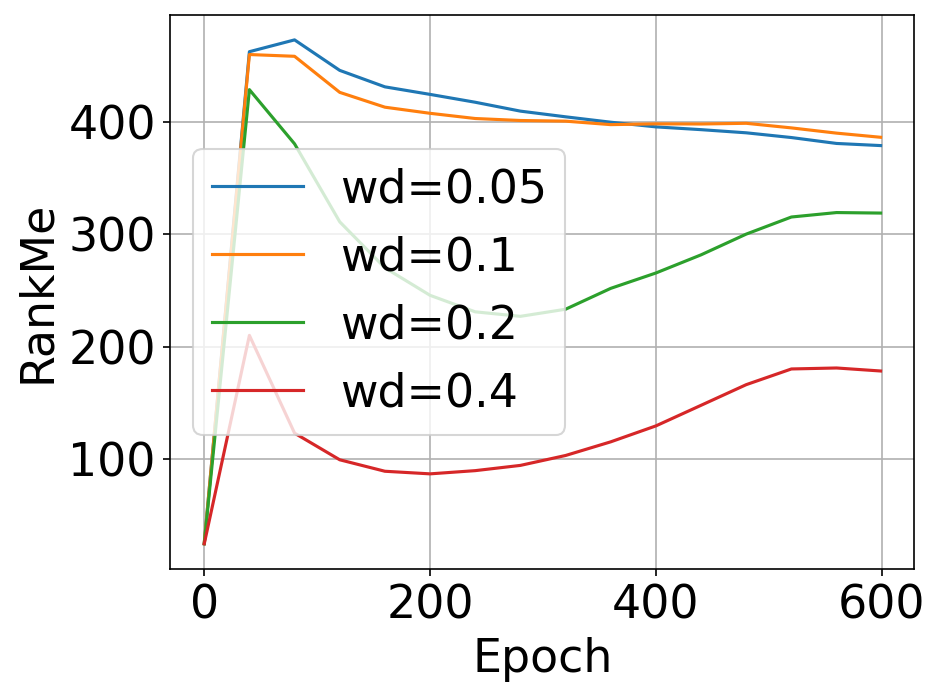}
         \caption{\emph{RankMe}}
         \label{fig:ijepa:wd:rankme}
     \end{subfigure}
     \hfill
     \begin{subfigure}[b]{0.32\textwidth}
         \centering
         \includegraphics[width=\textwidth]{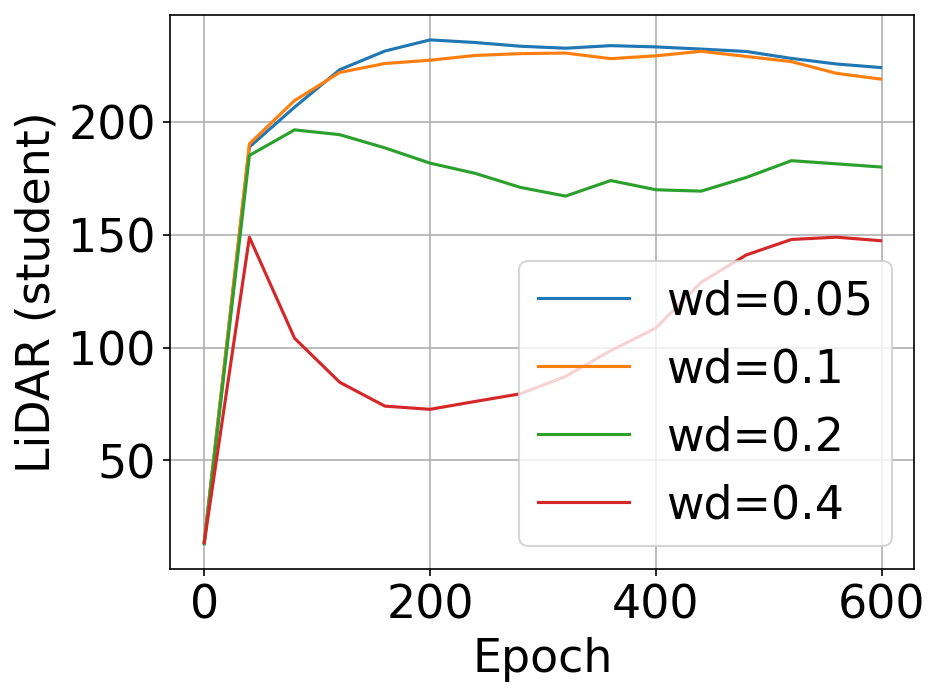}
         \caption{\emph{LiDAR (student)}}
         \label{fig:ijepa:wd:lidar_s}
     \end{subfigure}
     \caption{I-JEPA: ViT-Base architecture trained on Imagenet-1K by varying the weight decay. Plots show the evolution of metrics over training time.}
     \label{fig:ijepa:wd}
\end{figure}

\begin{figure}
     \centering
     \begin{subfigure}[b]{0.49\textwidth}
         \centering
         \includegraphics[width=\textwidth]{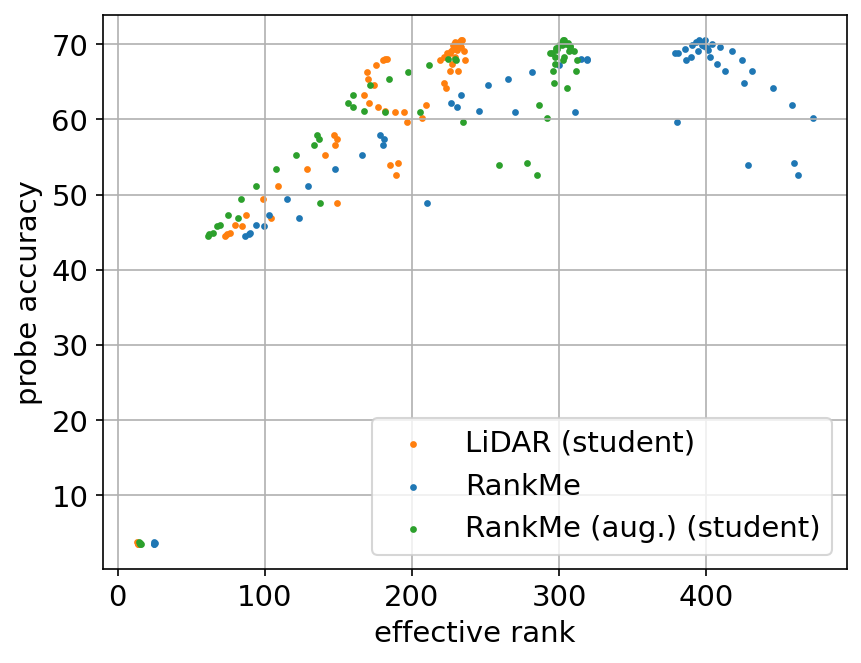}
         % \caption{}
         \label{fig:ijepa:scatter:wd:student}
     \end{subfigure}
     \hfill
     \begin{subfigure}[b]{0.49\textwidth}
         \centering
         \includegraphics[width=\textwidth]{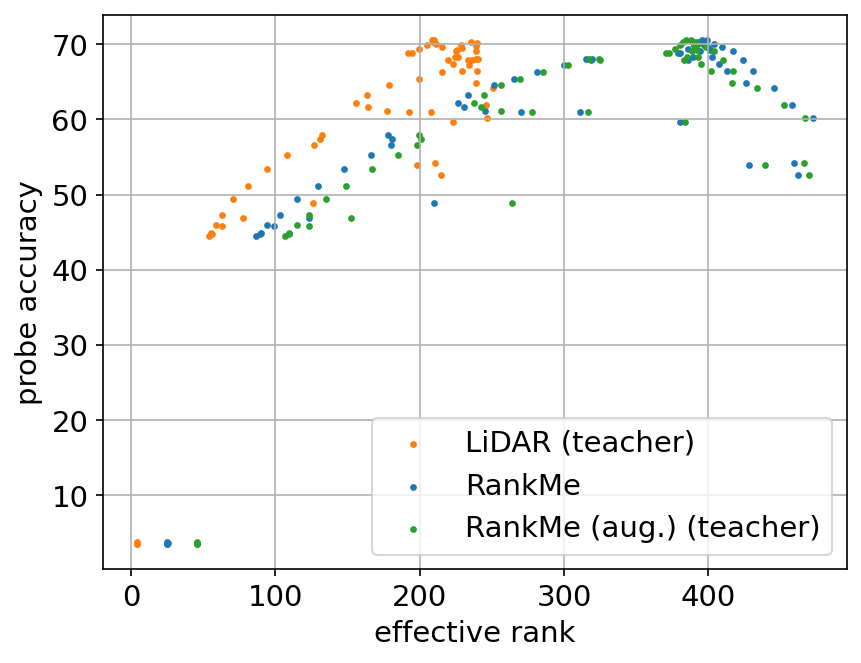}
         % \caption{}
         \label{fig:ijepa:scatter:wd:teacher}
     \end{subfigure}
     \caption{I-JEPA: Scatter plot of checkpoints collected during training of ViT-Base architecture on Imagenet-1K by varying weight decay.}
     \label{fig:ijepa:scatter:wd}
\end{figure}

\begin{figure}
     \centering
     \begin{subfigure}[b]{0.32\textwidth}
         \centering
         \includegraphics[width=\textwidth]{figures/ijepa/vit_b16_tgt_mask/probe_acc_df.png}
         \caption{Linear probe accuracy}
         \label{fig:ijepa:tgt_mask:probe}
     \end{subfigure}
     \hfill
     \begin{subfigure}[b]{0.32\textwidth}
         \centering
         \includegraphics[width=\textwidth]{figures/ijepa/vit_b16_tgt_mask/eranks_rankme.png}
         \caption{\emph{RankMe}}
         \label{fig:ijepa:tgt_mask:rankme}
     \end{subfigure}
     \hfill
     \begin{subfigure}[b]{0.32\textwidth}
         \centering
         \includegraphics[width=\textwidth]{figures/ijepa/vit_b16_tgt_mask/eranks2_student.png}
         \caption{\emph{LiDAR (student)}}
         \label{fig:ijepa:tgt_mask:lidar_s}
     \end{subfigure}
     \caption{I-JEPA: ViT-Base architecture trained on Imagenet-1K by varying the target mask scale hyperparameter. Plots show the evolution of metrics over training time.}
     \label{fig:ijepa:tgt_mask}
\end{figure}

\begin{figure}
     \centering
     \begin{subfigure}[b]{0.49\textwidth}
         \centering
         \includegraphics[width=\textwidth]{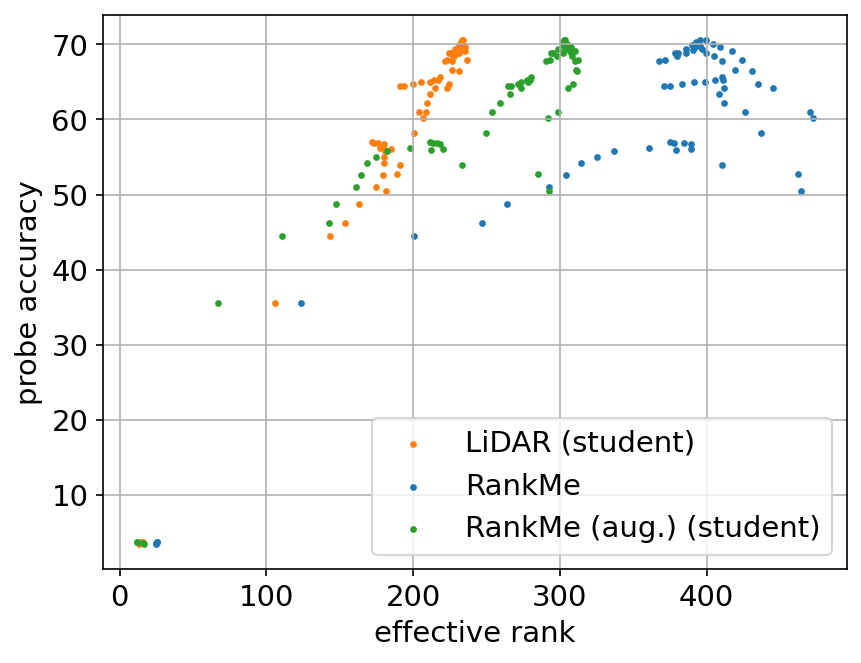}
         % \caption{}
         \label{fig:ijepa:scatter:tgt_mask:student}
     \end{subfigure}
     \hfill
     \begin{subfigure}[b]{0.49\textwidth}
         \centering
         \includegraphics[width=\textwidth]{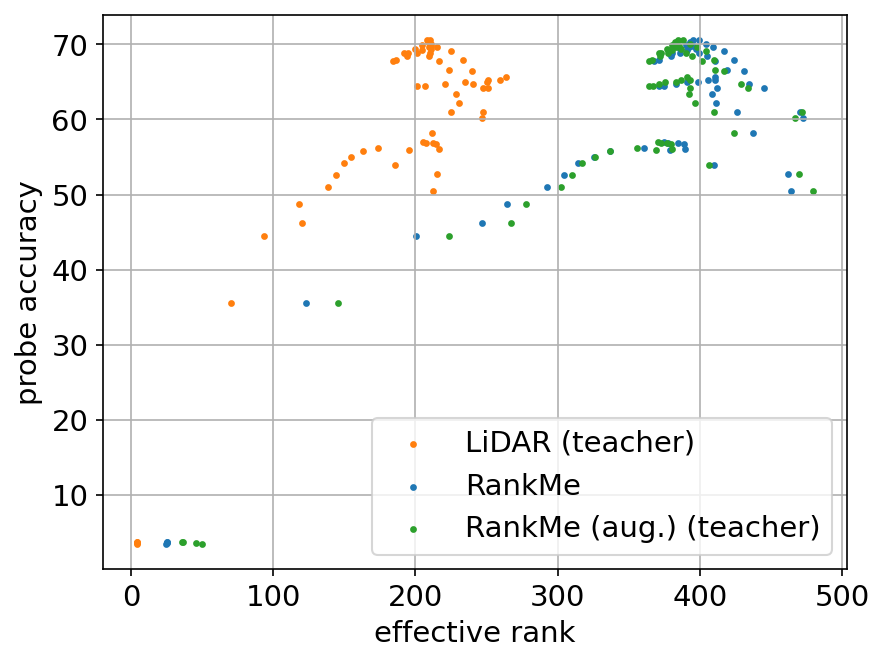}
         % \caption{}
         \label{fig:ijepa:scatter:tgt_mask:teacher}
     \end{subfigure}
     \caption{I-JEPA: Scatter plot of checkpoints collected during training of ViT-Base architecture on Imagenet-1K by varying the target mask scale factor.}
     \label{fig:ijepa:scatter:tgt_mask}
\end{figure}
\begin{figure}
     \centering
     \begin{subfigure}[b]{0.32\textwidth}
         \centering
         \includegraphics[width=\textwidth]{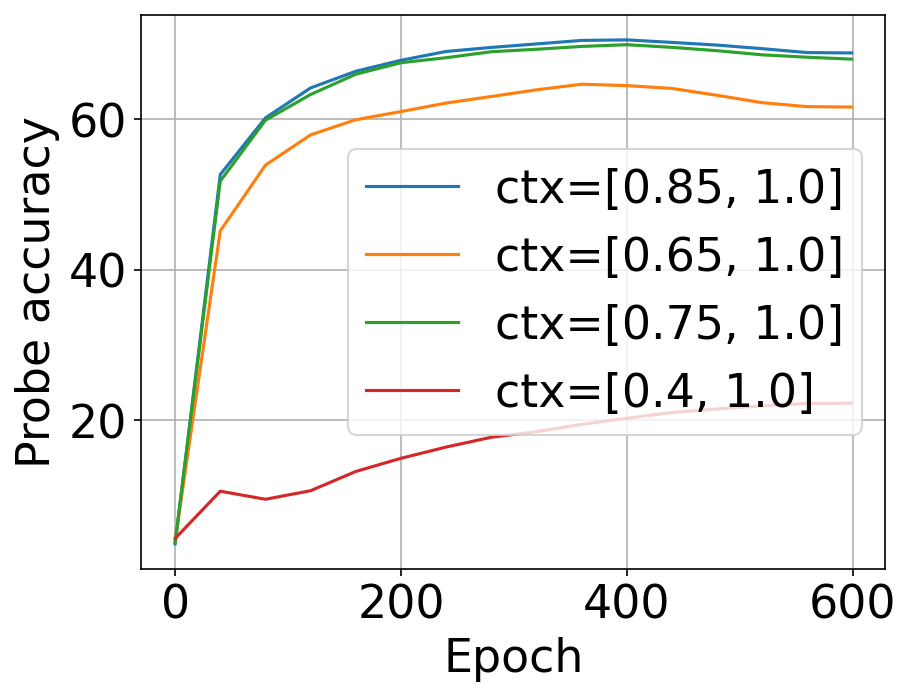}
         \caption{Linear probe accuracy}
         \label{fig:ijepa:ctx_mask:probe}
     \end{subfigure}
     \hfill
     \begin{subfigure}[b]{0.32\textwidth}
         \centering
         \includegraphics[width=\textwidth]{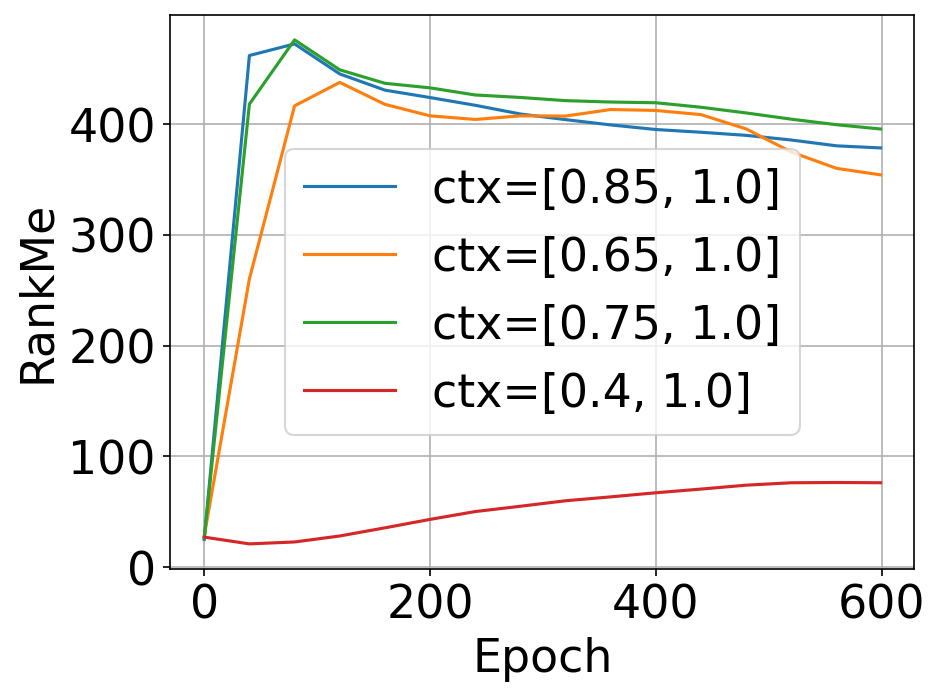}
         \caption{\emph{RankMe}}
         \label{fig:ijepa:ctx_mask:rankme}
     \end{subfigure}
     \hfill
     \begin{subfigure}[b]{0.32\textwidth}
         \centering
         \includegraphics[width=\textwidth]{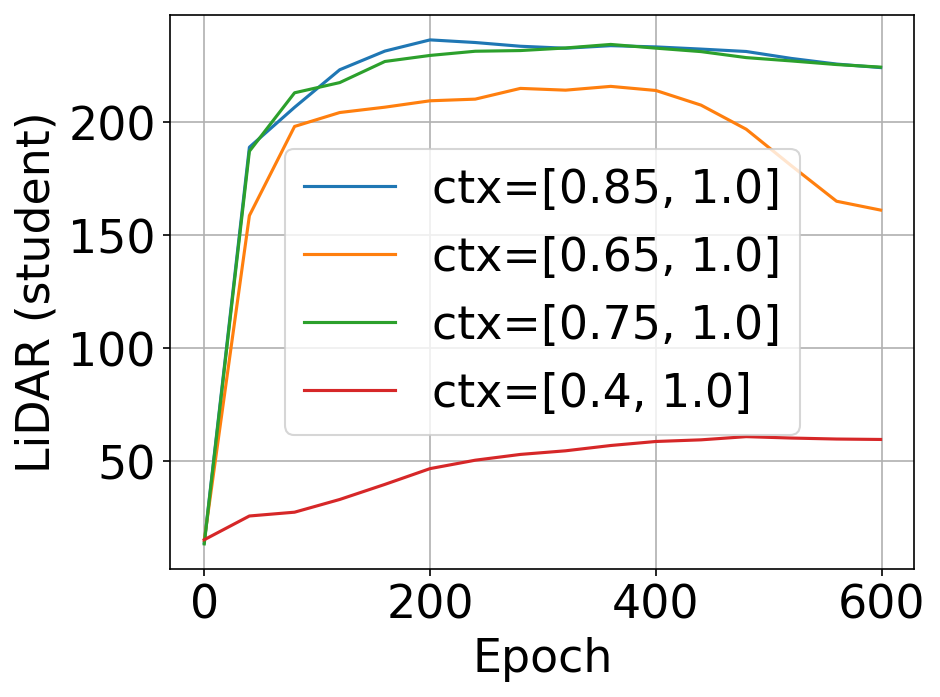}
         \caption{\emph{LiDAR (student)}}
         \label{fig:ijepa:ctx_mask:lidar_s}
     \end{subfigure}
     \caption{I-JEPA: ViT-Base architecture trained on Imagenet-1K by varying the context mask scale hyperparameter. Plots show the evolution of metrics over training time.}
     \label{fig:ijepa:ctx_mask}
\end{figure}

\begin{figure}
     \centering
     \begin{subfigure}[b]{0.49\textwidth}
         \centering
         \includegraphics[width=\textwidth]{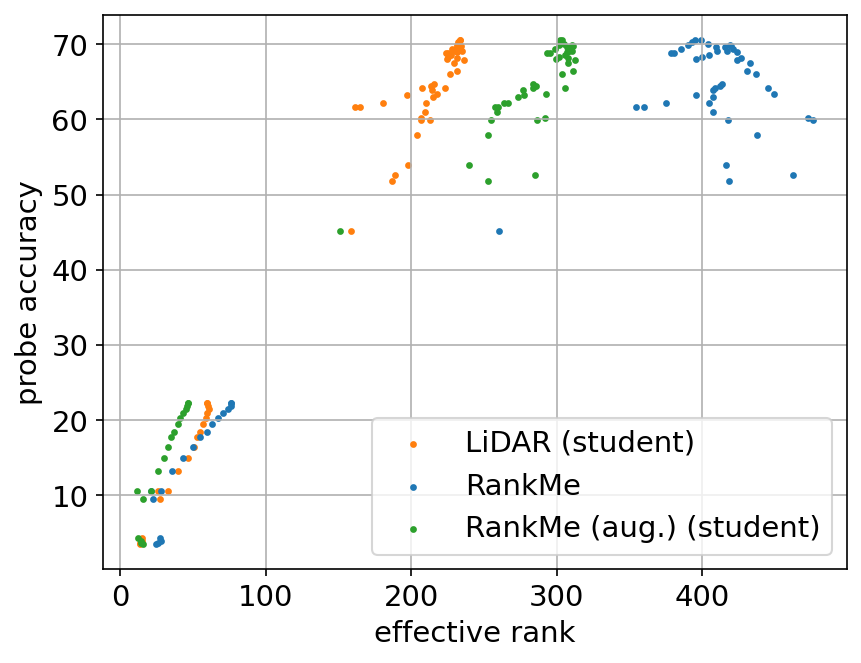}
         % \caption{}
         \label{fig:ijepa:scatter:ctx_mask:student}
     \end{subfigure}
     \hfill
     \begin{subfigure}[b]{0.49\textwidth}
         \centering
         \includegraphics[width=\textwidth]{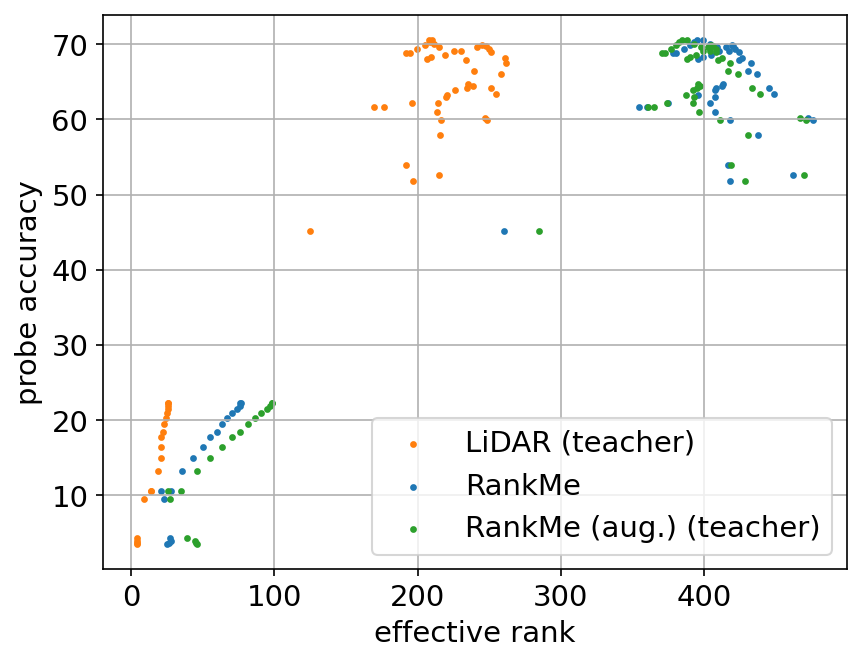}
         % \caption{}
         \label{fig:ijepa:scatter:ctx_mask:teacher}
     \end{subfigure}
     \caption{I-JEPA: Scatter plot of checkpoints collected during training of ViT-Base architecture on Imagenet-1K by varying the context mask scale factor.}
     \label{fig:ijepa:scatter:ctx_mask}
\end{figure}

\begin{figure}
     \centering
     \begin{subfigure}[b]{0.49\textwidth}
         \centering
         \includegraphics[width=\textwidth]{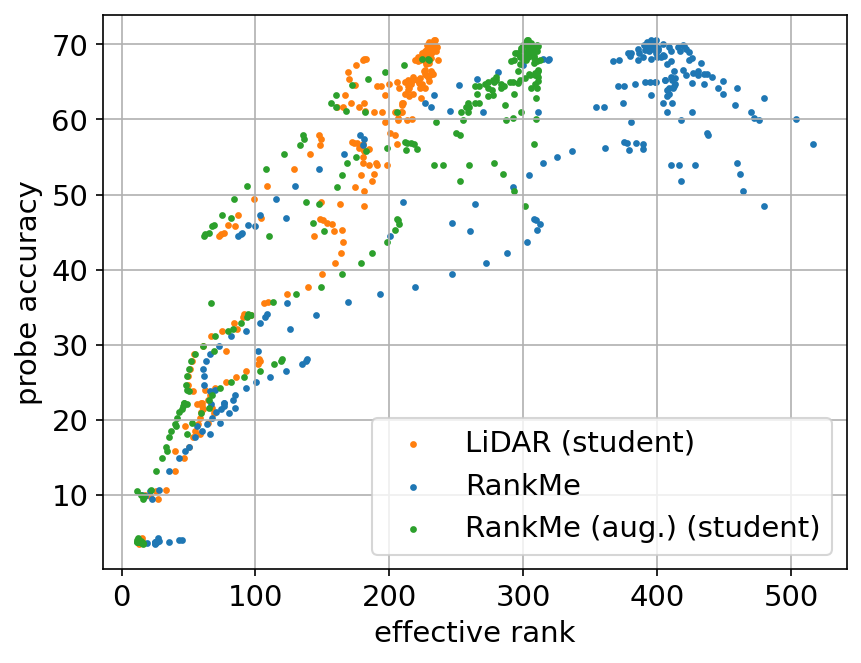}
         % \caption{}
         \label{fig:ijepa:scatter:overall:student}
     \end{subfigure}
     \hfill
     \begin{subfigure}[b]{0.49\textwidth}
         \centering
         \includegraphics[width=\textwidth]{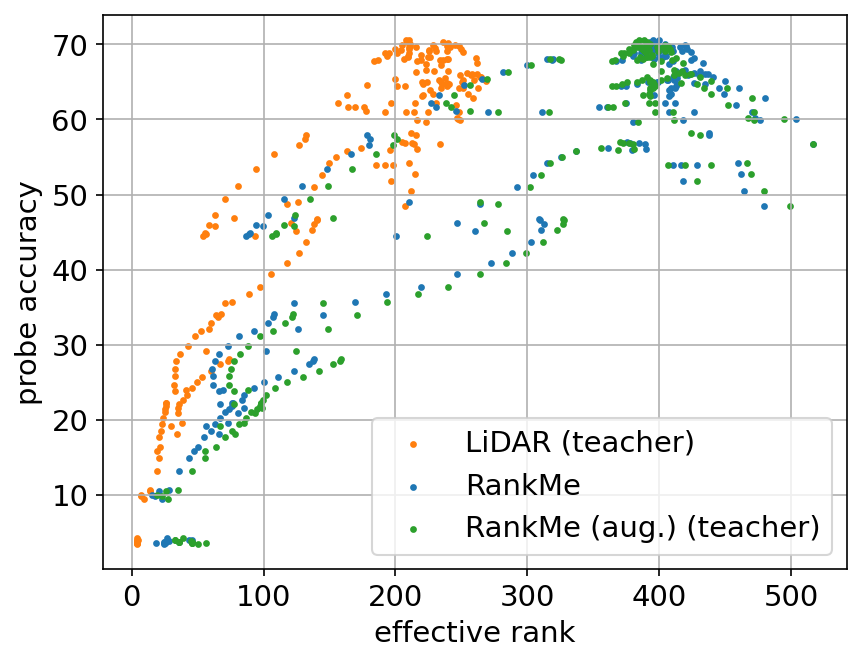}
         % \caption{}
         \label{fig:ijepa:scatter:overall:teacher}
     \end{subfigure}
     \caption{I-JEPA: Aggregated scatter plot of checkpoints collected during training of ViT-Base architecture on Imagenet-1K. Hyperparameters varied are one of learning rate, weight decay, target mask scale or context mask scale. Plots show that effective rank estimated by \emph{LiDAR} has high correlation with linear probe accuracy.}
     \label{fig:ijepa:scatter:overall}
\end{figure}

\begin{table}
\centering
\begin{tabular}{c c c c c c}
\toprule
Hyperparameter & RankMe & RankMe (aug.) & LiDAR & RankMe (aug.) & LiDAR \\
& & (Student) & (Student) & (Teacher) & (Teacher) \\
\midrule
Learning rate & 0.8775 & 0.9258 & \textbf{0.9605} & 0.8747 & 0.9030 \\
Weight decay & 0.6381 & 0.8669 & \textbf{0.8884} & 0.6196 & 0.6978 \\
Target mask scale & 0.4008 & 0.8429 & \textbf{0.9407} & 0.4069 & 0.3851 \\
Context mask scale & 0.5800 & 0.9180 & \textbf{0.9590} & 0.5533 & 0.6940 \\
\midrule
Overall & 0.7700 & 0.9104 & \textbf{0.9494} & 0.7627 & 0.8475 \\
\bottomrule
\end{tabular}
\caption{\label{tab:ijepa:spearmanc}I-JEPA: Spearman rank correlation coefficient for I-JEPA between effective ranks of RankMe, RankMe (aug.) and LiDAR and linear probe accuracy.}
\end{table}

\begin{table}
\centering
\begin{tabular}{c c c c c c}
\toprule
              & RankMe    & RankMe (aug.)   & LiDAR     & RankMe (aug.) & LiDAR \\
              &           & (Student)       & (Student) & (Teacher)     & (Teacher) \\
\midrule
Random search & 0.9470	& \textbf{0.9770} & 0.9511	 & 0.9486 & 0.9722 \\
\bottomrule
\end{tabular}
\caption{\label{tab:ijepa:rand_search:spearman}I-JEPA: Spearman rank correlation coefficient between effective ranks of \emph{RankMe}, augmented-\emph{RankMe} and \emph{LiDAR} and linear probe accuracy. Hyperparmaeters are generated via random sampling.}
\end{table}

\newpage

% \paragraph{data2vec}
\subsection{data2vec}
\label{appendix:subsec:impl:data2vec}

data2vec~\citep{data2vec} is a self-supervised learning approach that aims to predict latent representations of input data based on a masked view of the input data. The idea behind data2vec~\citep{data2vec} is similar to I-JEPA~\citep{ijepa} but the details of masking and the predictor used to predict embeddings are different. 

We follow the same protocol described for I-JEPA above in terms of constructing a labeled dataset for \emph{LiDAR} and augmented-\emph{RankMe} with 10000 samples from ImageNet-1K and apply 50 augmentations. The augmentations applied in this case are identical to the augmentation described in~\citep{data2vec} and available via a reference implementation provided by the authors \footnote{\url{https://github.com/facebookresearch/data2vec_vision/tree/main/beit}}. We use $10000$ source images to calculate \emph{RankMe}. We train a ViT-B~\citep{dosovitskiy2021an} model for 800 epochs with an effective batch size of 2048 and with other hyperparameters described in~\citep{data2vec}. During training, we save a checkpoint every 40 epochs for analysis and downstream linear probing. Linear probing is done on frozen representations of the encoder using LARS optimizer~\citep{LARSoptim} with a base learning rate of $0.01$ scaled via a linear scaling rule with a base batch size of 256~\citep{goyal2018accurate} and an effective batch size of 16384 samples. We consider the following SSL training hyperparameters in our experiments to test the performance of \emph{LiDAR} and \emph{RankMe} and its augmented variant:
% with a starting learning rate of $0.01$ and a step learning rate schedule that drops every 15 epochs by a factor of 10 with an effective batch size of 16384 samples.
\begin{itemize}
    \item Learning rate from $0.0007$, $0.001$, $0.002$ and $0.004$. Figure~\ref{fig:data2vec:lr} and Table~\ref{tab:data2vec:spearmanc} and Table~\ref{tab:data2vec:kendalltau} show the results from these experiments
    \item Set number of mask patches to either $120$ (default) or 80 while we fix the learning rate to $0.0007$. Figure~\ref{fig:data2vec:mask}, Table~\ref{tab:data2vec:spearmanc} and Table~\ref{tab:data2vec:kendalltau} show the results from these experiments
    \item Table~\ref{tab:data2vec:ckpt_selection} shows the probe accuracy recovered by the various effective rank metrics.
\end{itemize}

\begin{figure}
     \centering
     \begin{subfigure}[b]{0.32\textwidth}
         \centering
         \includegraphics[width=\textwidth]{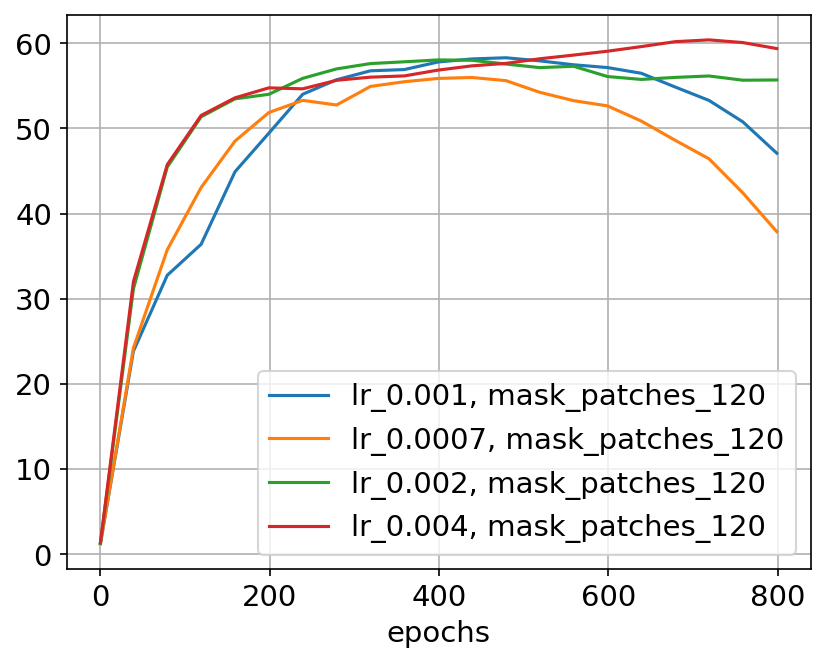}
         \caption{Linear probe accuracy}
         \label{fig:data2vec:lr:probe}
     \end{subfigure}
     \hfill
     \begin{subfigure}[b]{0.32\textwidth}
         \centering
         \includegraphics[width=\textwidth]{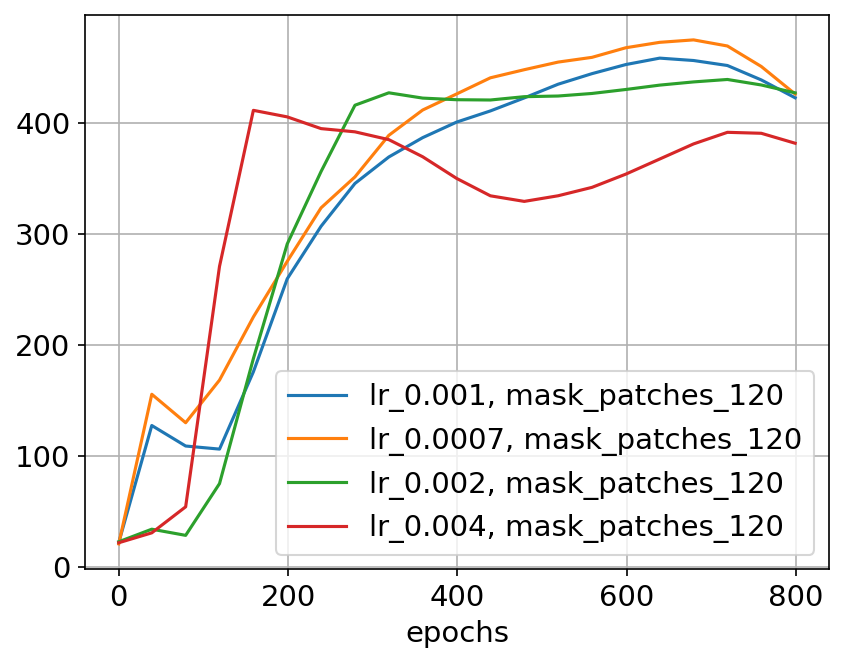}
         \caption{\emph{RankMe}}
         \label{fig:data2vec:lr:rankme}
     \end{subfigure}
     \hfill
     \begin{subfigure}[b]{0.32\textwidth}
         \centering
         \includegraphics[width=\textwidth]{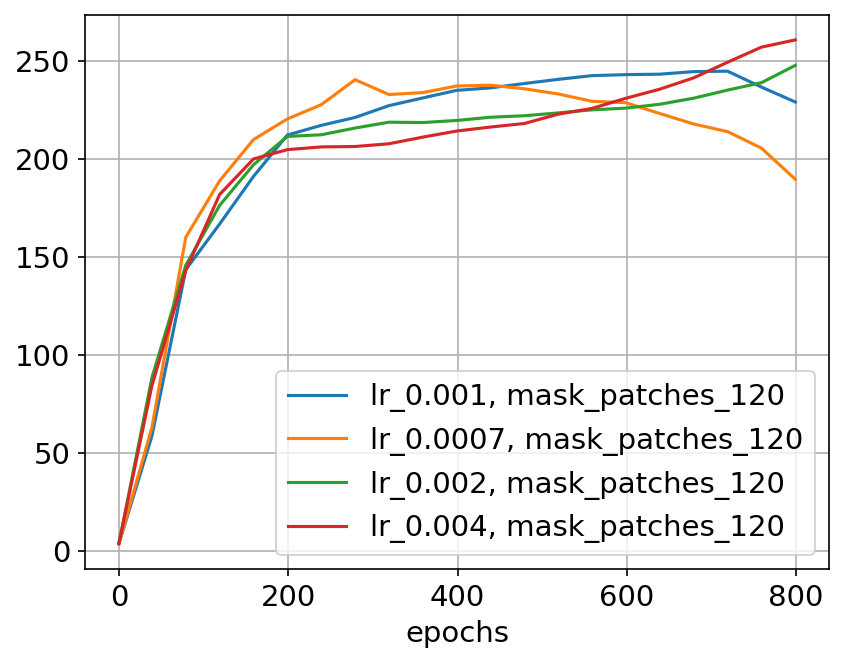}
         \caption{\emph{LiDAR (student)}}
         \label{fig:data2vec:lr:lidar_s}
     \end{subfigure}
     \caption{data2vec: ViT-Base architecture trained on Imagenet-1K by varying the learning rate. Plots show the evolution of metrics over training time.}
     \label{fig:data2vec:lr}
\end{figure}

\begin{figure}
     \centering
     \begin{subfigure}[b]{0.49\textwidth}
         \centering
         \includegraphics[width=\textwidth]{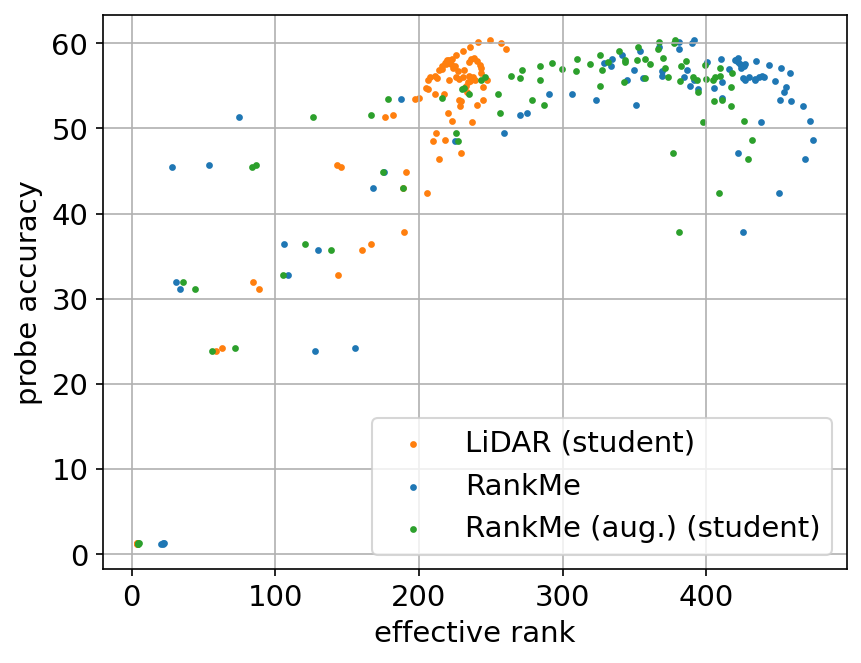}
         % \caption{}
         \label{fig:data2vec:scatter:lr:student}
     \end{subfigure}
     \hfill
     \begin{subfigure}[b]{0.49\textwidth}
         \centering
         \includegraphics[width=\textwidth]{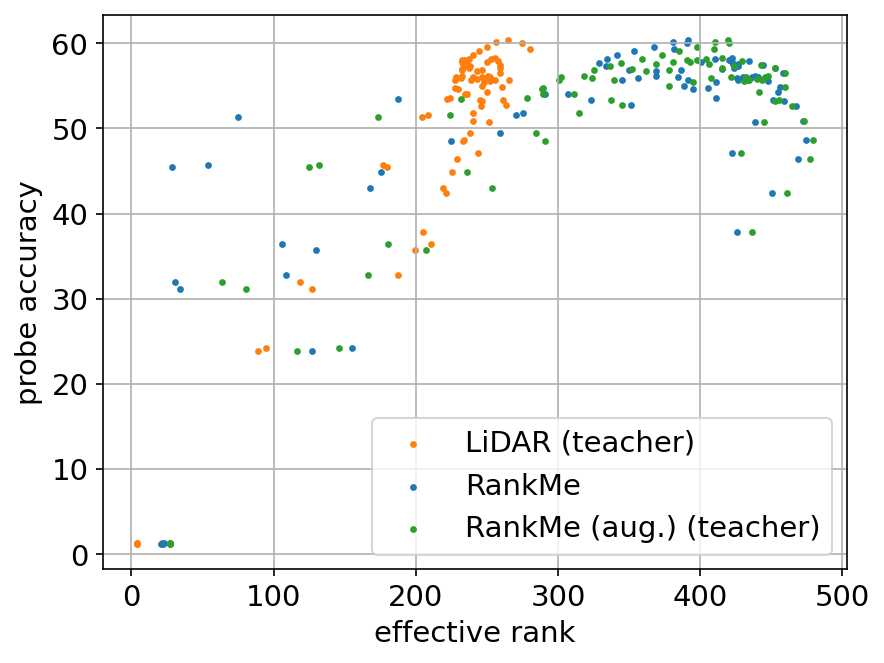}
         % \caption{}
         \label{fig:data2vec:scatter:lr:teacher}
     \end{subfigure}
     \caption{data2vec: Scatter plot of checkpoints collected during training of ViT-Base architecture on Imagenet-1K. The learning rate was varied in this experiment.}
     \label{fig:data2vec:scatter:lr}
\end{figure}

\begin{figure}
     \centering
     \begin{subfigure}[b]{0.32\textwidth}
         \centering
         \includegraphics[width=\textwidth]{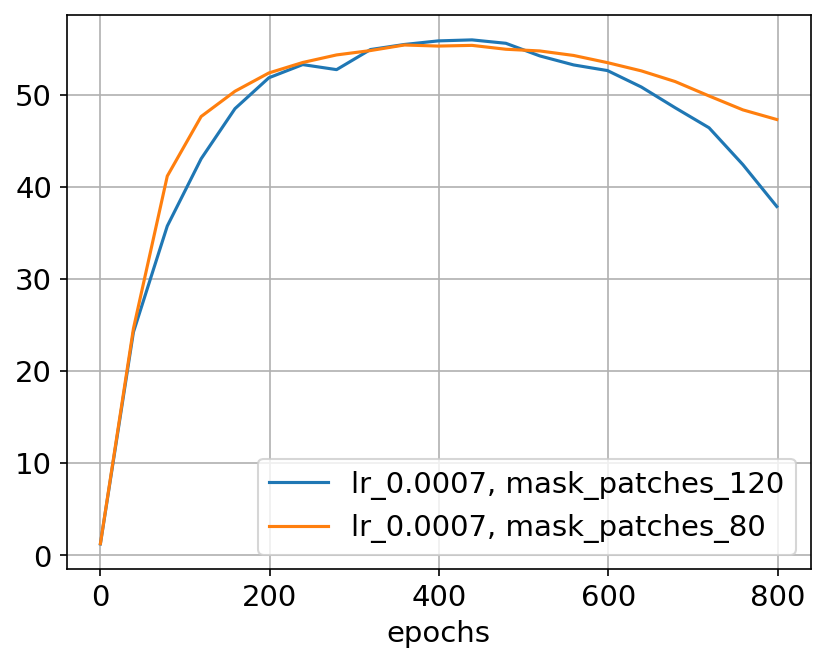}
         \caption{Linear probe accuracy}
         \label{fig:data2vec:mask_alone:probe}
     \end{subfigure}
     \hfill
     \begin{subfigure}[b]{0.32\textwidth}
         \centering
         \includegraphics[width=\textwidth]{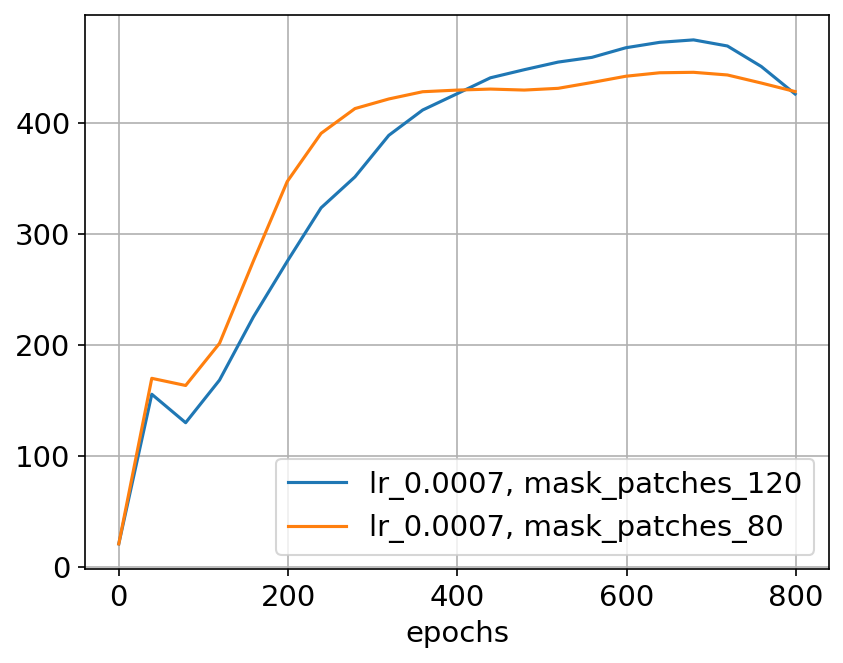}
         \caption{\emph{RankMe}}
         \label{fig:data2vec:mask_alone:rankme}
     \end{subfigure}
     \hfill
     \begin{subfigure}[b]{0.32\textwidth}
         \centering
         \includegraphics[width=\textwidth]{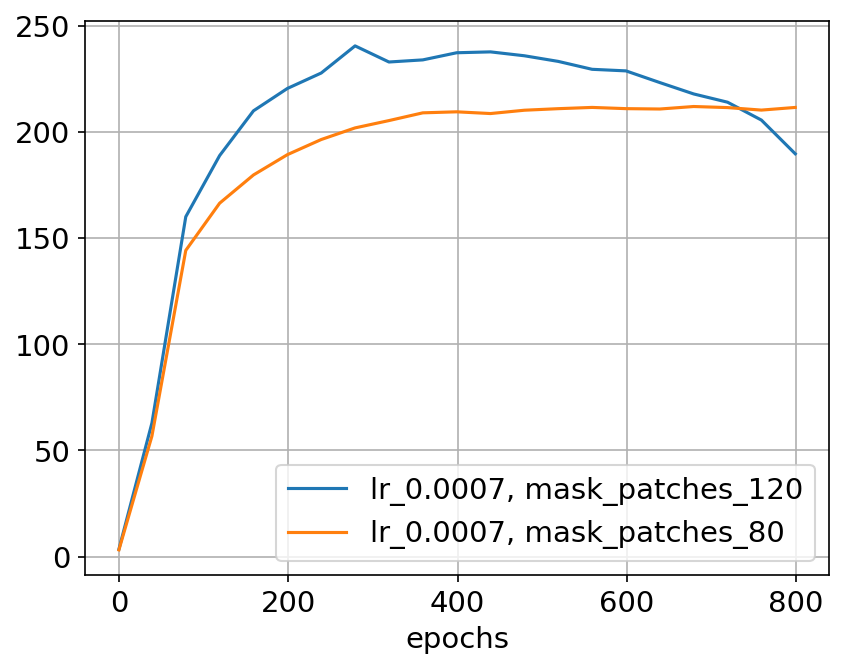}
         \caption{\emph{LiDAR (student)}}
         \label{fig:data2vec:mask_alone:lidar_s}
     \end{subfigure}
     \caption{data2vec: ViT-Base architecture trained on Imagenet-1K by varying the masking ratio. Plots show the evolution of metrics over training time.}
     \label{fig:data2vec:mask}
\end{figure}

\begin{figure}
     \centering
     \begin{subfigure}[b]{0.49\textwidth}
         \centering
         \includegraphics[width=\textwidth]{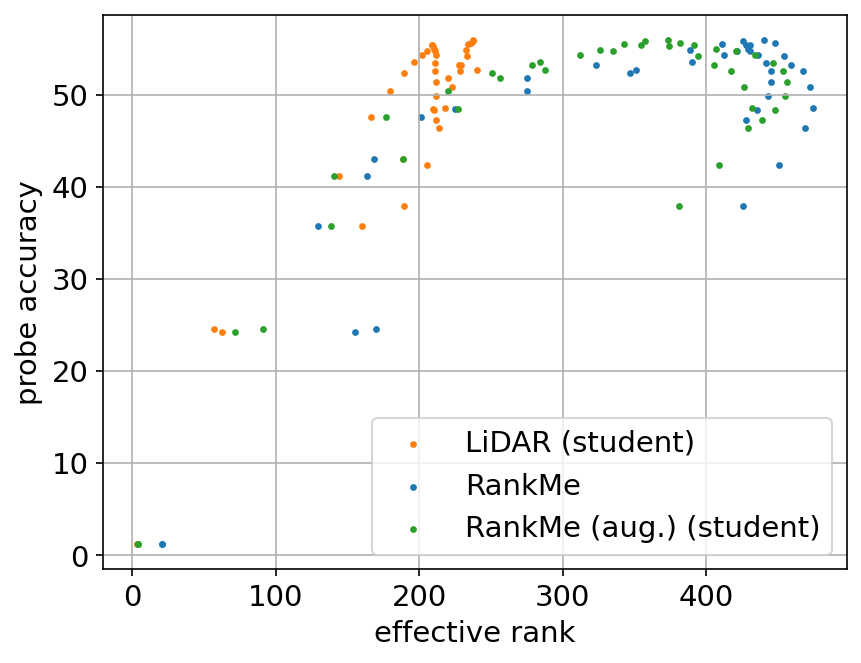}
         % \caption{}
         \label{fig:data2vec:scatter:mask_alone:student}
     \end{subfigure}
     \hfill
     \begin{subfigure}[b]{0.49\textwidth}
         \centering
         \includegraphics[width=\textwidth]{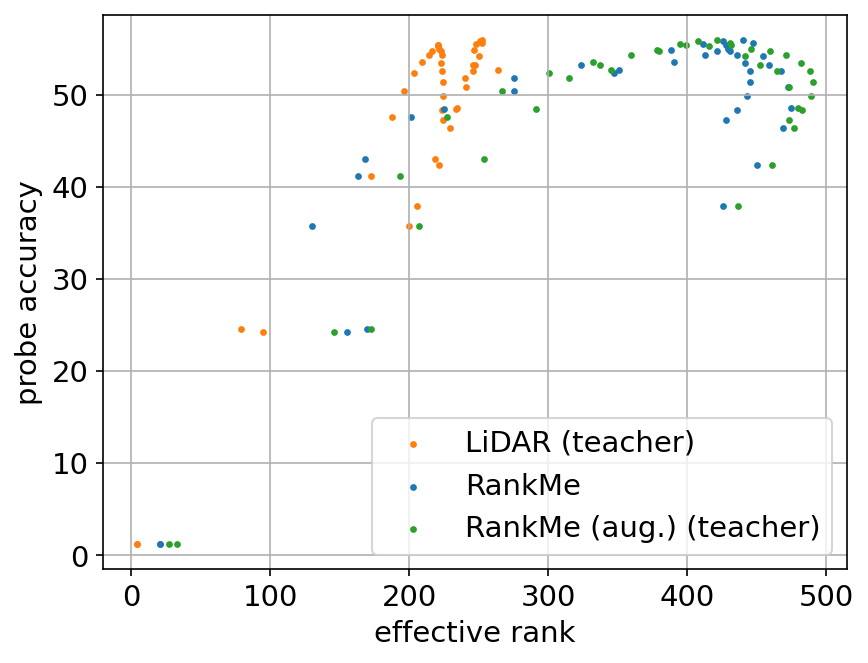}
         % \caption{}
         \label{fig:data2vec:scatter:mask_alone:teacher}
     \end{subfigure}
     \caption{data2vec: Scatter plot of checkpoints collected during training of ViT-Base architecture on Imagenet-1K. The masking ratio was varied in this experiment.}
     \label{fig:data2vec:scatter:mask_alone}
\end{figure}

\begin{figure}
     \centering
     \begin{subfigure}[b]{0.32\textwidth}
         \centering
         \includegraphics[width=\textwidth]{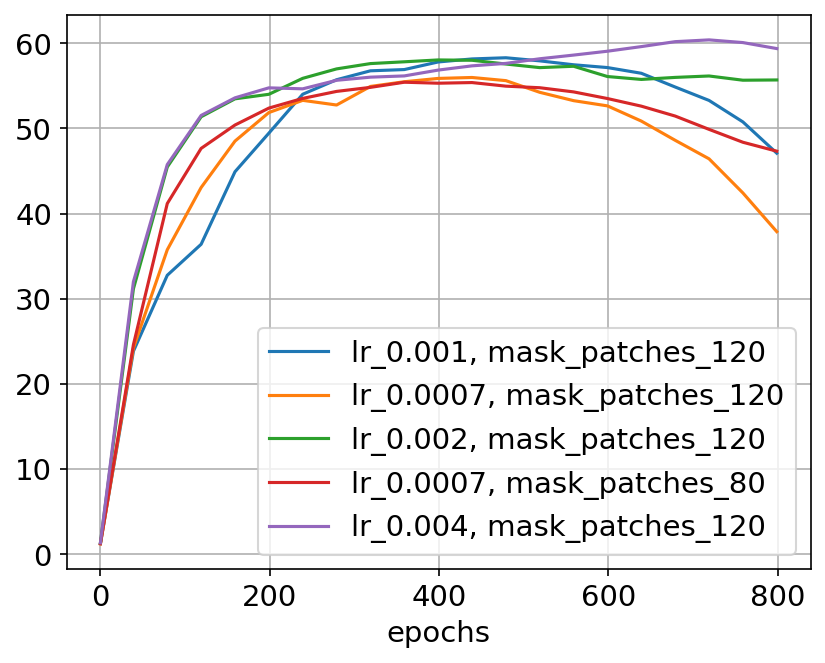}
         \caption{Linear probe accuracy}
         \label{fig:data2vec:all:probe}
     \end{subfigure}
     \hfill
     \begin{subfigure}[b]{0.32\textwidth}
         \centering
         \includegraphics[width=\textwidth]{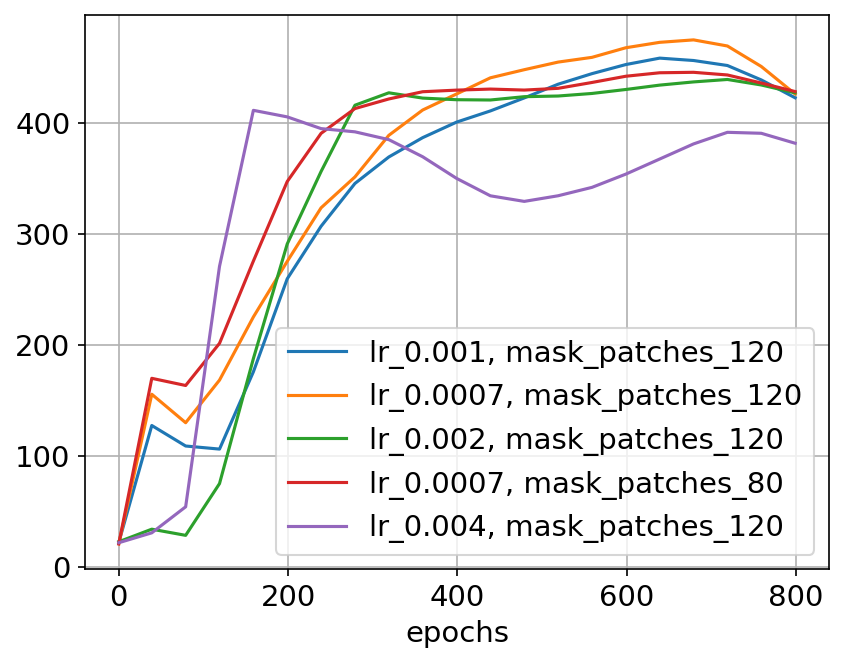}
         \caption{\emph{RankMe}}
         \label{fig:data2vec:all:rankme}
     \end{subfigure}
     \hfill
     \begin{subfigure}[b]{0.32\textwidth}
         \centering
         \includegraphics[width=\textwidth]{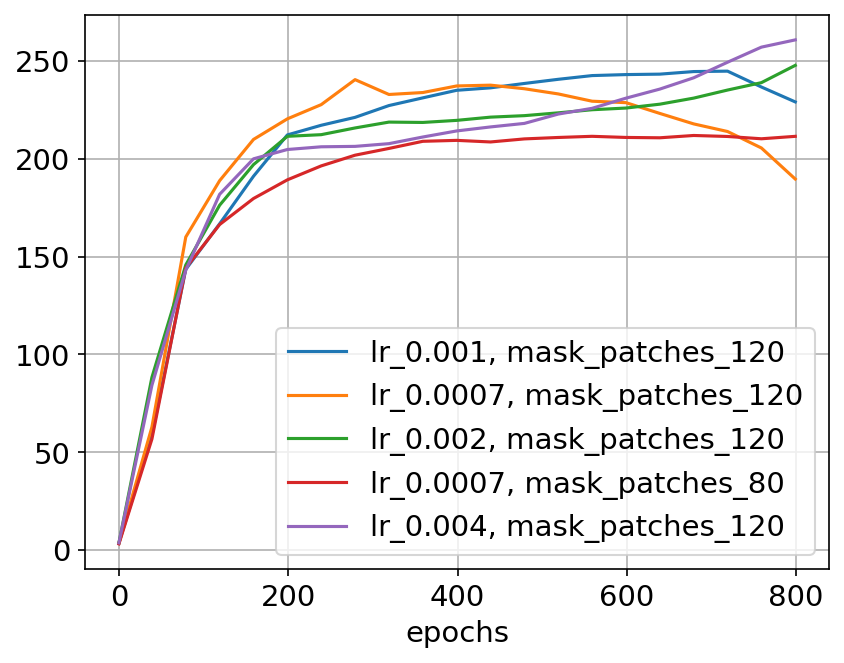}
         \caption{\emph{LiDAR (student)}}
         \label{fig:data2vec:all:lidar_s}
     \end{subfigure}
     \caption{data2vec: ViT-Base architecture trained on Imagenet-1K by varying one of learning rate or masking ratio. Plots show the evolution of metrics over training time.}
     \label{fig:data2vec:all}
\end{figure}

\begin{figure}
     \centering
     \begin{subfigure}[b]{0.49\textwidth}
         \centering
         \includegraphics[width=\textwidth]{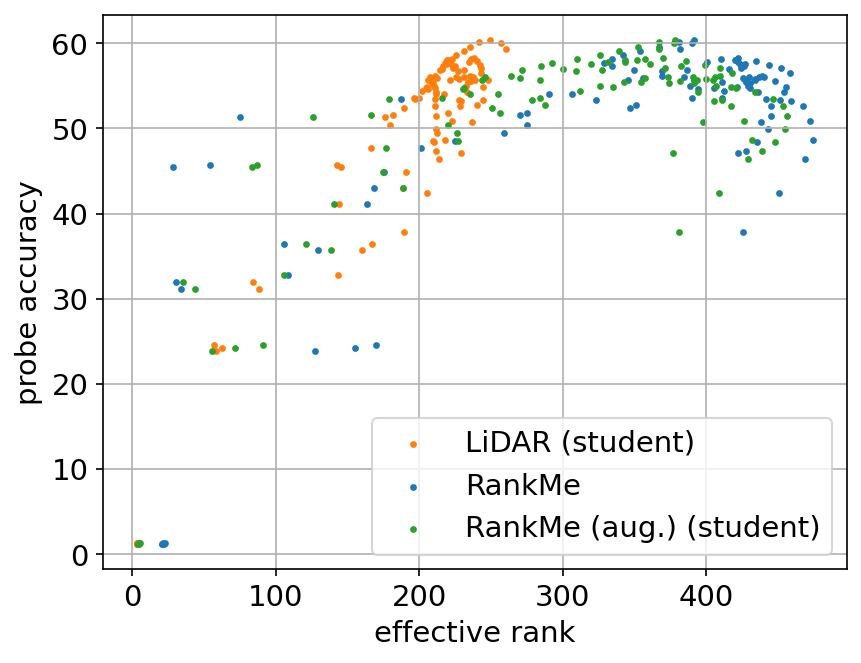}
         % \caption{}
         \label{fig:data2vec:scatter:all:student}
     \end{subfigure}
     \hfill
     \begin{subfigure}[b]{0.49\textwidth}
         \centering
         \includegraphics[width=\textwidth]{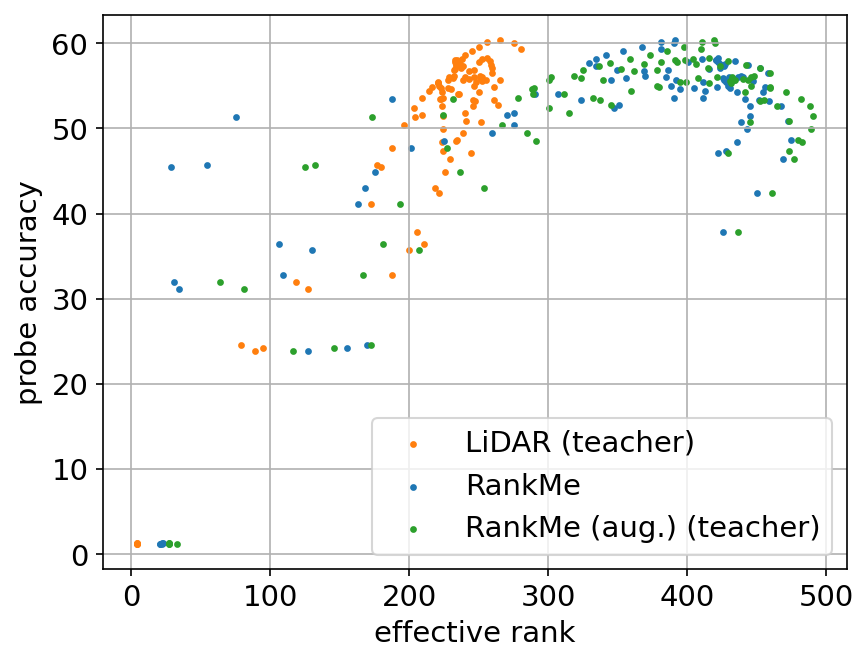}
         % \caption{}
         \label{fig:data2vec:scatter:all:teacher}
     \end{subfigure}
     \caption{data2vec: Aggregated scatter plot of checkpoints collected during training of ViT-Base architecture on Imagenet-1K. Hyperparameters varied are one of learning rate and mask ratio (number of masked patches).}
     \label{fig:data2vec:scatter:all}
\end{figure}

\begin{table}
\centering
\begin{tabular}{c c c c c c}
\toprule
Hyperparameter & RankMe & RankMe (aug.) & LiDAR & RankMe (aug.) & LiDAR \\
&              & (Student) & (Student) & (Teacher) & (Teacher) \\
\midrule
Learning rate & 0.3639 & 0.4440 & \textbf{0.6680} & 0.4191 & 0.6240 \\
Mask ratio    & 0.3720 & 0.3026 & \textbf{0.6089} & 0.2785 & 0.5466 \\
\midrule
Overall       & 0.3408 & 0.3795 & \textbf{0.7275} & 0.3627 & 0.6962 \\
\bottomrule
\end{tabular}
\caption{\label{tab:data2vec:spearmanc}data2vec: Spearman rank correlation coefficient for data2vec between effective ranks of RankMe, RankMe (aug.) and LiDAR and linear probe accuracy.}
\end{table}

\begin{table}
\centering
\begin{tabular}{c c c c}
\toprule
Metric                  & LR & Mask ratio &  Overall \\
                        &    &            &           \\
\midrule
\textcolor{gray}{ImageNet Oracle} & \textcolor{gray}{60.3920} & \textcolor{gray}{55.9780} & \textcolor{gray}{60.3920} \\
RankMe                            & 48.6040           & 48.6980          &  48.6980 \\
RankMe (aug.) (Student)           & 48.6040           & 51.4500          & 51.4500 \\
LiDAR (Student)                   & \textbf{59.3720}  & \textbf{52.7460} & \textbf{59.3720} \\
RankMe (aug.) (Teacher)           & 48.6040           & 51.4500          & 51.4500 \\
LiDAR (Teacher)                   & \textbf{59.3720}  & \textbf{52.7460} & \textbf{59.3720} \\
\bottomrule
\end{tabular}
\caption{\label{tab:data2vec:ckpt_selection}data2vec: Linear probe accuracy recovered by  \emph{RankMe} and \emph{LiDAR} on ImageNet-1K dataset.}
\end{table}

\newpage
% \paragraph{DINO}
\subsection{DINO}
\label{appendix:subsec:impl:dino}

DINO~\citep{dino} is an example of a self-distillation approach to learning representations in a self-supervised manner. DINO uses a student and teacher encoder (weights updated via exponential moving average) ,multiple-crops and small patches to train a ViT~\citep{dosovitskiy2021an} in a self-supervised manner. Evaluations on multiple tasks presented in DINO~\citep{dino} show that the resulting encoder learns strong representations. The loss function for DINO~\citep{dino} minimizes the cross-entropy between the probability distributions provided by the teacher and student network.  We reproduce a description below from DINO~\citep{dino} for completeness:
\begin{align}
    \min_{i} H\left( P_{t}\left( z\right), P_{s}\left( z\right) \right)
\end{align}
where $H$ denotes the cross-entropy function, $P_{s}$ and $P_{t}$ denote the probability distributions output by the student and teacher networks respectively. The probabilities are computed via:
\begin{align}
P_{net}(z)^{(i)} = \frac{\exp(f_{\theta_{net}}(z)^{(i)} / \tau_{net})}{\sum_{k=1}^K \exp(f_{\theta_{net}}(x)^{(k)} / \tau_{net})}    
\end{align}
where the subscript ``net" denotes either the student or the teacher network, $f$ is a neural network parameterized by $\theta$ and $z^{(i)}$ is a feature vector for sample $i$. A key hyperparameter in DINO~\citep{dino} is the temperature value $\tau_{s}$ and $\tau_{t}$ used to control the sharpness of the output distribution produced by the softmax function. We test the performance of \emph{RankMe}, augmented-\emph{RankMe} and \emph{LiDAR} to predict linear probing performance by varying these parameters. We use $1000$ images from the source dataset and apply $50$ augmentations used in DINO training to construct a dataset to calcuate \emph{LiDAR}. We use $25600$ samples to estimate \emph{RankMe}.

We train a ViT-S with a patch size of $16\times16$ using the protocol described by DINO~\citep{dino} and implemented in a reference implementation provided by the authors of DINO~\footnote{\label{dino_gh}\url{https://github.com/facebookresearch/dino}} on the Imagenet-1K~\citep{ILSVRC15} dataset. The projection head consists of a 3-layer MLP with hidden dimension of $2048$ and an output dimension referred to as bottleneck dimension of $256$ that provides embeddings. These embeddings are that projected to a higher dimensional space of $65536$ in our experiments. We use the embeddings to estimate \emph{RankMe}, augmented-\emph{RankMe} and \emph{LiDAR} in our experiments consistent with the methodology adopted in \emph{RankMe}~\citep{rankme}. We use an effective batch size of $512$ images and train the model for $300$ epochs. Our experiments consists of:
\begin{itemize}
    \item  varying the teacher temperature ($\tau_{t}$) while keeping the student temperature ($\tau_{s}$) fixed to 0.1. The values considered for ($\tau_{t}$) are  $\{0.02, 0.04, 0.06, 0.07\}$
    \item varying the student temperature ($\tau_{s}$) while keeping the teacher temperature ($\tau_{t}$) fixed. The values considered for ($\tau_{s}$) are $\{0.2, 0.4\}$
\end{itemize}
We keep the rest of the training and linear probing hyperparameters identical to those provided in the official implementation~\footref{dino_gh}. The results of these experiments are available in Figure~\ref{fig:dino:scatter:all} and the correlation values are quantified in Table~\ref{tab:dino:corr}. Table~\ref{tab:dino:indomain:average} lists the linear probing accuracy recovered by ImageNet Oracle, \emph{RankMe} and \emph{LiDAR} metrics for the source dataset (ImageNet-1K).

\begin{figure}
     \centering
     \begin{subfigure}[b]{0.49\textwidth}
         \centering
         \includegraphics[width=\textwidth]{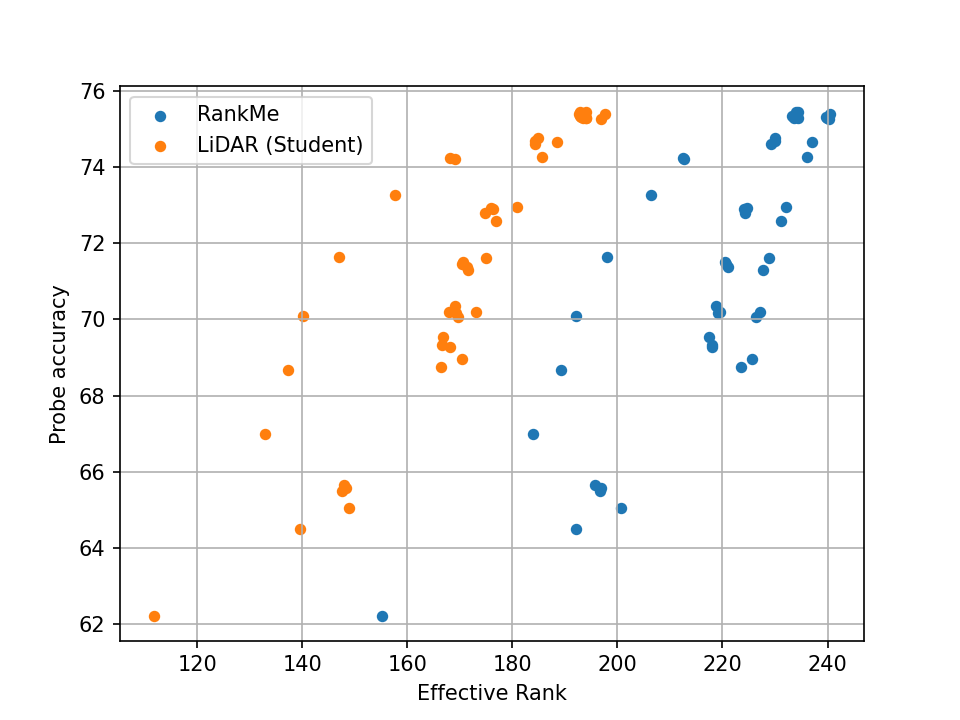}
         % \caption{}
         \label{fig:dino:scatter:student}
     \end{subfigure}
     \hfill
     \begin{subfigure}[b]{0.49\textwidth}
         \centering
         \includegraphics[width=\textwidth]{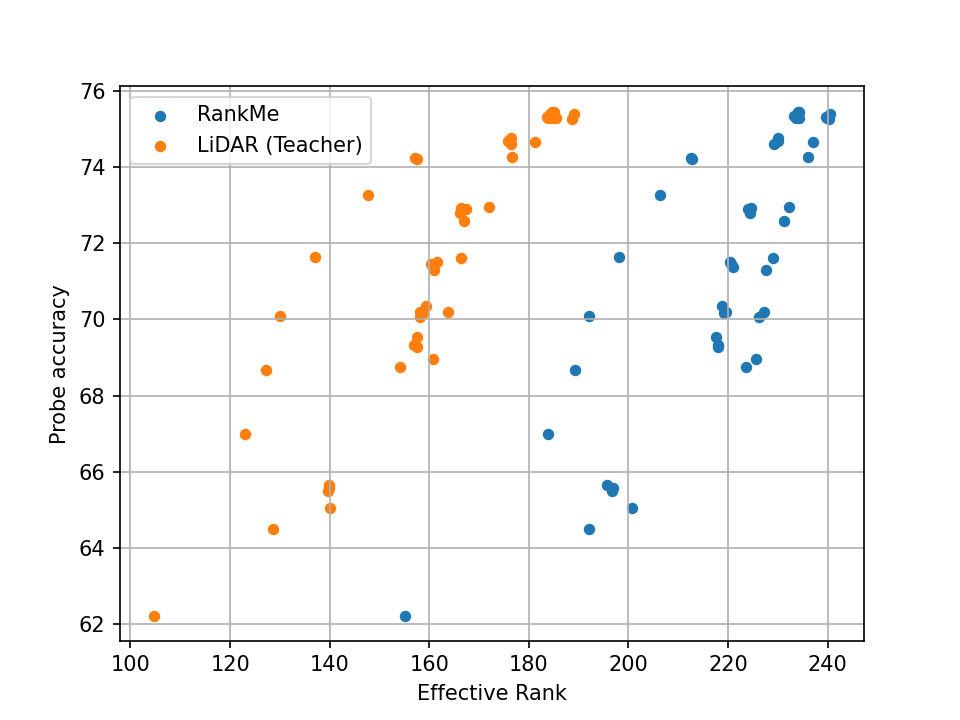}
         % \caption{}
         \label{fig:dino:scatter:teacher}
     \end{subfigure}
     \caption{DINO: Aggregated scatter plot of checkpoints collected during training of ViT-Small architecture on Imagenet-1K. Hyperparameters varied are one of teacher or student softmax temperature described in the implementation section.}
     \label{fig:dino:scatter:all}
\end{figure}

\begin{table}[h!]
\centering
\begin{tabular}{c c c c}
\toprule
Correlation & RankMe & LiDAR (Student) & LiDAR (Teacher)  \\
\midrule
Spearman rank  & 0.8191 & 0.8807 & 0.8732  \\
Kendall's $\tau$ & 0.6288	& 0.7157 & 0.7299  \\
\bottomrule
\end{tabular}
\caption{\label{tab:dino:corr}DINO: Compare \emph{RankMe} and \emph{LiDAR} with Spearman rank and Kendall's $\tau$ correlation coefficient metrics. Observe that \emph{LiDAR} performs better than \emph{RankMe} for both measures.}
\end{table}

\begin{table}[!h]
\centering
    \begin{tabular}{cccc}
    \toprule
    Metric & Teacher temperature & Student temperature & Overall \\
    \midrule
    \textcolor{gray}{ImageNet Oracle} & \textcolor{gray}{75.45} & \textcolor{gray}{75.37} & \textcolor{gray}{75.45} \\
    RankMe          & 75.38 &  75.28 & 75.38  \\
    LiDAR (Student) & 75.38 & 75.28 & 75.38 \\
    LiDAR (Teacher) & 75.38 & 75.28 & 75.38 \\
    \bottomrule
    \end{tabular}
\caption{\label{tab:dino:indomain:average}DINO: Average linear probe accuracy recovered by \emph{RankMe} and \emph{LiDAR} on ImageNet-1K. Hyperparameter values are described in Appendix~\ref{sec:details}.}
\end{table}

\newpage
% \paragraph{SimCLR}
\subsection{SimCLR}
\label{appendix:subsec:impl:simclr}

SimCLR~\citep{simclr} is an example of a contrastive joint-embedding self-supservised learning method. The loss function for SimCLR is given by~\citep{simclr,rankme} and included below for completeness:
\begin{align}
    L = -\sum_{i,j\in P}\frac{e^{Similarity(z_{i}, z_{j})}}{\sum_{k=1}^{N}\mathbb{I}_{k\neq i}e^{Similarity(z_{i}, z_{k})}}
\end{align}
where $Similarity$ denotes the cosine similarity between two vectors $z_{i}$ and $z_{j}$, $\mathbb{I}$ denotes the indicator function, $P$ is the set of all positive pairs and the number of examples in given by N. 

We train a ResNet-50 backbone~\citep{he2016deep} with a 3 layer MLP projector with hidden dimensions $8192$ and output dimension equal to $2048$. In other words, the embeddings produced by SimCLR have a dimension equal to $2048$. The network described above is trained with the LARS optimizer~\citep{LARSoptim} and other settings described in~\citep{simclr} on the ImageNet-1k~\citep{ILSVRC15} training split. The representations from the backbone are evaluated via standard linear probing by training a linear layer on ImageNet-1k training split and calculating test accuracy on the validation split. Probing is performed using SGD optimizer with Nesterov momentum with hyperparameters described in \emph{RankMe}~\citep{rankme}. We use $5000$ images from the source dataset (ImageNet-1K) and apply $10$ augmentations per image to calculate \emph{LiDAR}. We use $25600$ images to calcuate \emph{RankMe} to be consistent with the setup used by \citet{rankme}. We consider two sets of experiments to test the performance of \emph{LiDAR} and \emph{RankMe} with SimCLR~\citep{simclr}:
\begin{itemize}
    \item A grid search with the hyperparametrs sets that include learning rate, weight decay, embedding dimension and softmax temperature used in \emph{RankMe}~\citep{rankme}. Figure~\ref{fig:simclr:scatter_gridsearch_epoch100} shows a scatter plot of the probe accuracy vs. effective rank estiamted by \emph{RankMe}, augmented-\emph{RankMe} and \emph{LiDAR}. Table~\ref{tab:simclr:gridsearch_correlations} quantifies the correlations for the above methods. Table~\ref{tab:simclr:ckpt_selection} shows the linear probe accuracy recovered by the various metrics and compares the results to ImageNet Oracle

    \item A random hyperparameter search by randomly sampling learning rate, weight decay and softmax temperature to generate hyperparameter sets. We create a table of hyperpameters with learning rate chosen from $\left[ 0.3, 0.4, 0.5, 0.6 \right]$, weight decay chosen from $\left[ 10^{-7}, 10^{-6}, 10^{-5}\right]$ and softmax temperature chosen from $\left[ 0.05, 0.1, 0.15, 0.2, 0.25\right]$. We select 30 sets of hyperparameters from this table and use these to train models with SimCLR and evaluate performance. Figure~\ref{fig:simclr:scatter_randsearch_epoch100} shows the results for this experiment as a scatter plot and Table~\ref{tab:simclr:gridsearch_correlations} quantifies the correlations.
\end{itemize}

\begin{figure}
    \centering
    \includegraphics[width=0.5\textwidth]{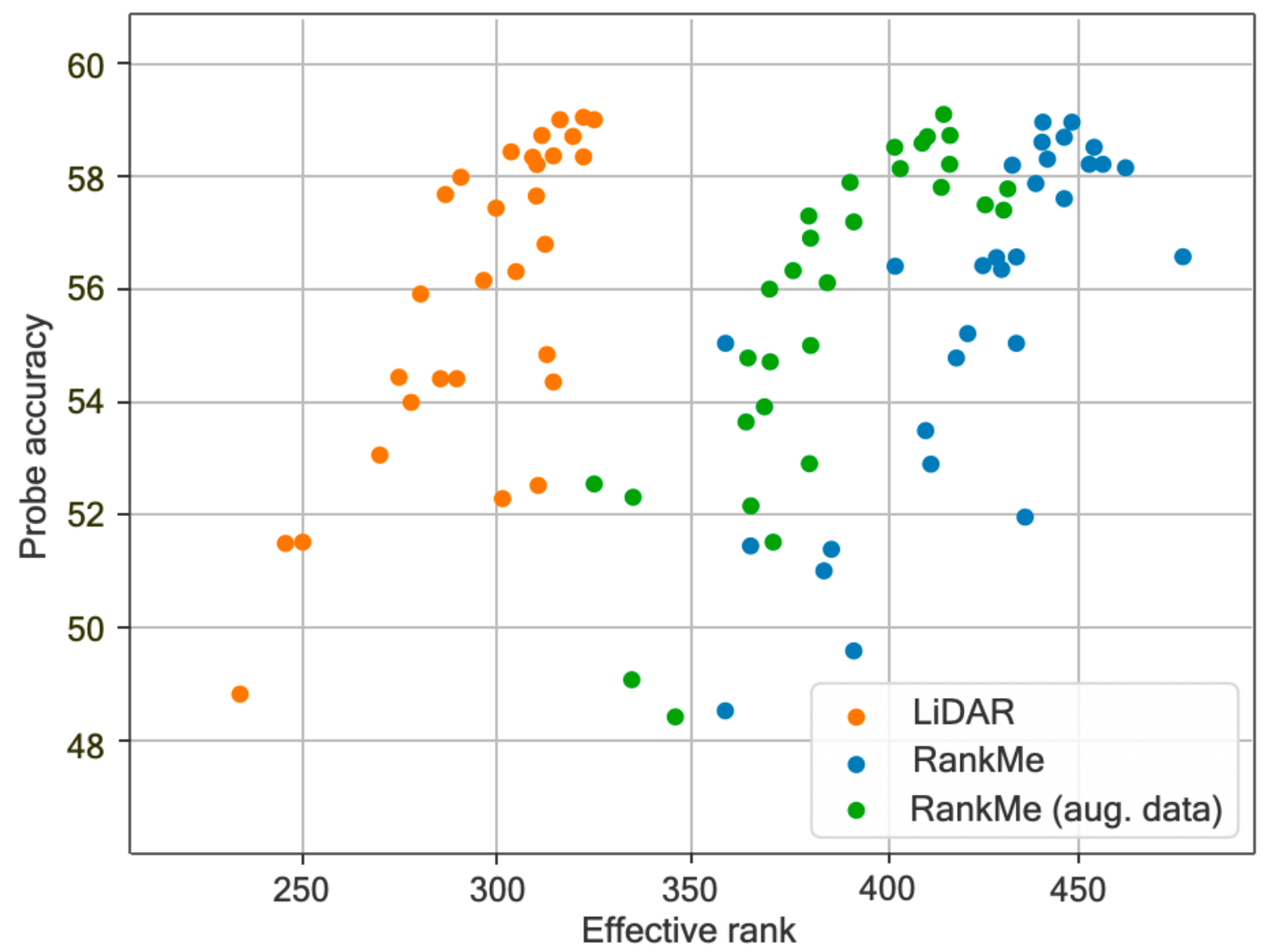}
    \caption{SimCLR: Performance of SimCLR representations measured by RankMe and LiDAR. Each point refers to a checkpoint evaluated with a \textbf{randomly searched} hyperparameter after 100 epochs of self-supervised pretraining. LiDAR shows strong correlation}
         \label{fig:simclr:scatter_randsearch_epoch100}
\end{figure}

\begin{figure}
    \centering
    \includegraphics[width=0.5\textwidth]{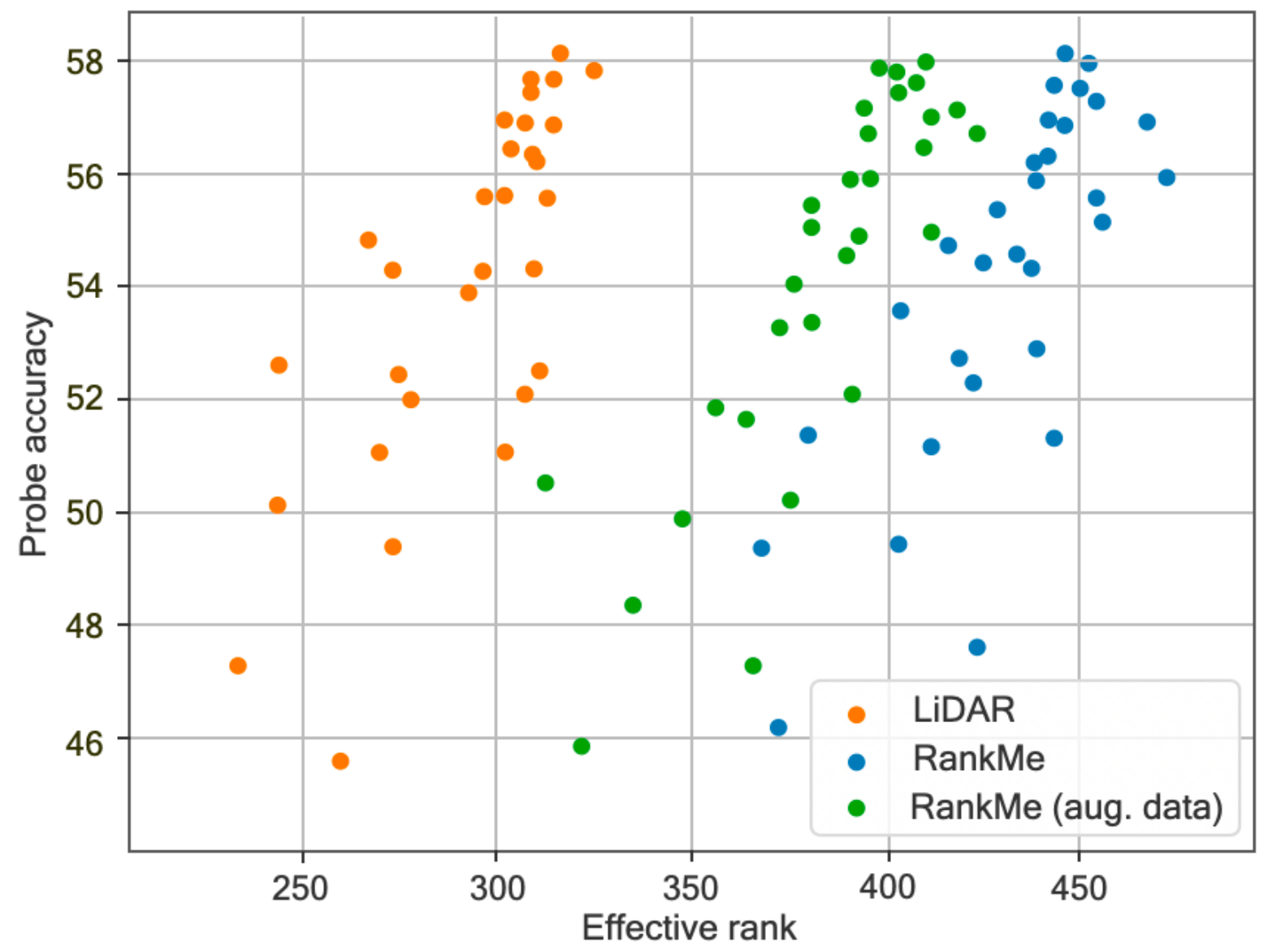}
    \caption{SimCLR: Performance of SimCLR representations measured by RankMe and LiDAR. Each point refers to a checkpoint evaluated with the \textbf{grid search} hyperparameter from Rankme paper after 100 epochs of self-supervised pretraining. LiDAR shows strong correlation}
         \label{fig:simclr:scatter_gridsearch_epoch100}
\end{figure}

\begin{table}
  \begin{subtable}[h]{0.5\textwidth}
    \centering
    \resizebox{0.99\textwidth}{!}{
    \begin{tabular}{c c c c}
    \toprule
    Hyperparameter & RankMe & RankMe         & LiDAR \\
          &        & (aug. dataset) &        \\
    \midrule
    Learning rate   & 0.5248  & 0.5301 & \textbf{0.9138} \\
    Weight decay   & 0.4019  & 0.5573 & \textbf{0.8964} \\
    Temperature   & 0.5704  & 0.6764 & \textbf{0.9182} \\ \midrule
    Overall   & 0.5125 &  0.6304 & \textbf{0.9155} \\
    \bottomrule
    \end{tabular}
    }
    \caption{\label{tab:simclr:corr_hyperparameter:spearman} Spearman Rank correlation coefficient}
  \end{subtable}
  \hfill
  \begin{subtable}[h]{0.5\textwidth}
    \centering
    \resizebox{0.99\textwidth}{!}{
    \begin{tabular}{c c c c}
    \toprule
    Hyperparameter & RankMe & RankMe         & LiDAR \\
          &        & (aug. dataset) &        \\
    \midrule
    Learning rate   & 0.5706 &	0.4694 & \textbf{0.7512} \\
    Weight decay   & 0.3580 & 0.4221 & \textbf{0.7148} \\
    Temperature   & 0.4209 & 0.7061 & \textbf{0.8435} \\ \midrule
    Overall   & 0.4906  & 0.5581 & \textbf{0.7967} \\
    \bottomrule
    \end{tabular}
    }
    \caption{\label{tab:simclr:corr_hyperparameter:kendalltau:app} Kendall's $\tau$ correlation coefficient}
  \end{subtable}
  \caption{SimCLR: Correlation between effective rank estiamted by \emph{RankMe} and \emph{LiDAR} and probe accuracy per hyperparameter.}
  \label{tab:simclr:corr_hyperparameter}
\end{table}

\begin{table}
\centering
\begin{tabular}{c c c c}
\toprule
Correlation & RankMe & RankMe         & LiDAR \\
            &        & (aug. dataset) &        \\
\midrule
Spearman rank  &  0.5125 &  0.6304 & \textbf{0.9155}  \\
Kendall's $\tau$ & 0.4906  & 0.5581 & \textbf{0.7967} \\
\bottomrule
\end{tabular}
\caption{\label{tab:simclr:gridsearch_correlations} SimCLR: Compare RankMe and LiDAR using Spearman Rank correlation and Kendall's $\tau$ correlation measures evaluated during hyperparameter \textbf{grid search}.}
\end{table}

\begin{table}
\centering
\begin{tabular}{c c c c}
\toprule
Correlation & RankMe & RankMe         & LiDAR \\
            &        & (aug. dataset) &        \\
\midrule
Spearman rank  &  0.5301 &  0.6389 & \textbf{0.9188}  \\
Kendall's $\tau$ & 0.4982  & 0.5761 & \textbf{0.8167} \\
\bottomrule
\end{tabular}
\caption{\label{tab:simclr:randsearch_correlations} SimCLR: Compare RankMe and LiDAR using Spearman Rank correlation and Kendall's $\tau$ correlation measures evaluated during hyperparameter \textbf{random search}.}
\end{table}

\begin{table}
  \centering
  \begin{tabular}{c c c c c}
  \toprule
  Metric               & LR    &  WD   &  Temp.     &  Overall \\
  \midrule
  \textcolor{gray}{Imagenet Oracle}   & \textcolor{gray}{58.2370} & \textcolor{gray}{56.6740} & \textcolor{gray}{57.4710} & \textcolor{gray}{59.1420} \\
  \emph{RankMe}        & 55.8950           & 55.1490 & 56.0390 & 56.4630 \\
  \emph{RankMe} (aug. dataset) & 57.2010           & 55.8170 & 56.3170 & 57.8260 \\
  \emph{LiDAR}         & \textbf{57.8940} & \textbf{56.3020} & \textbf{57.0920} & \textbf{58.9270} \\
  \bottomrule
  \end{tabular}
  \caption{\label{tab:simclr:ckpt_selection} SimCLR: Linear probe accuracy recovered by \emph{RankMe}, augmented-\emph{RankMe} and \emph{LiDAR} on ImageNet-1K dataset at the end of training. Results are from trials run with hyperparameter \textbf{random search}.}
\end{table}

\clearpage
\newpage

{  % begin \revision
\section{Out-Of-Distribution (OOD) Datasets Evaluation}
\label{appendix:sec:ood}

\subsection{Evaluation Protocol}
\label{appendix:ood:eval_protocol}

In order to evaluate \emph{RankMe} and \emph{LiDAR} methods, we use CIFAR10, CIFAR100~\citep{cifar10_100}, EuroSAT~\citep{eurosat}, Food101~\citep{food101} and SUN397~\citep{sun397} as unseen or OOD datasets. These commonly used datasets are different from our source dataset, ImageNet-1k, and are used to evaluate the effectiveness of representations on downstream tasks. The downstream task protocol considered here is linear probing with a frozen backbone. We follow the protocol used in VISSL~\citep{goyal2021vissl} to train and evaluate a linear probe with the above OOD datasets. 

For all datasets, we use stochastic gradient descent with Nesterov (SGD w/Nesterov) momentum to optimize the linear classification layer. We use a learning rate of $0.01$, momentum value of $0.9$, weight decay $0.0005$ and cross-entropy loss to train the linear layer for 28 epochs. We use a multi-step learning rate schedule where the learning rate is dropped by a factor of $0.1$ every $8$ steps. We follow the data augmentation procedure used in VISSL~\citep{goyal2021vissl} that consists of random resized crops and random horizontal flips for training and center crop for evaluation. The results of the OOD datasets evaluation are listed in the following sections. 

\subsection{I-JEPA}

We study the OOD performance of I-JEPA model pretrained on ImageNet-1K as described in Appendix~\ref{appendix:subsec:impl:ijepa}. The checkpoints chosen by ImageNet Oracle, \emph{RankMe}, its augmented variant and \emph{LiDAR} are trained on $5$ OOD datasets using the protocol described in Appendix~\ref{appendix:ood:eval_protocol}. Table~\ref{tab:ijepa:ood:grid_search} and Table~\ref{tab:ijepa:ood:rand_search} lists the average probe accuracy calculated over the five OOD datasets for hyperparameters selected by grid search and random search, respectively. We observe from Table~\ref{tab:ijepa:ood:grid_search} that \emph{LiDAR} calculated with student branch outperforms \emph{RankMe} in terms of linear probe accuracy on OOD datasets except for the context scale parameter where-in \emph{LiDAR} calculated with the teacher branch performs slightly better than its student counterpart. This observation is an interesting contrast to the corresponding in-domain linear probe accuracy shown in Table~\ref{tab:ijepa:ckpt_selection} where-in the student branch-based \emph{LiDAR} works well across all hyperparameters. Another remarkable observation here is that our proposed augmented-\emph{RankMe} variant performs better than vanilla \emph{RankMe} on OOD dataset which is consistent with our observations made with in-domain data in Table~\ref{tab:ijepa:ckpt_selection}. Finally, we see from Table~\ref{tab:ijepa:ood:grid_search} that while ImageNet Oracle shows the best performance on OOD datasets, \emph{LiDAR} is able to recover most of the linear probe accuracy on these datasets as well. Table~\ref{tab:ijepa:ood:breakdown:grid_search} lists a per-dataset breakdown of the linear probe accuracy for checkpoints selected by various metrics.

Table~\ref{tab:ijepa:ood:rand_search} shows the performance of I-JEPA checkpoints on OOD datasets trained with hyperparameters selected via random search. We observe from Table~\ref{tab:ijepa:ood:rand_search} that both variants of \emph{LiDAR} outperform \emph{RankMe} and its augmented variants and remarkably outperforms the OOD linear probe accuracy recovered by the ImageNet Oracle. This observation shows that OOD performance can be improved under certain conditions which is also an observation made by~\citet{rankme}. Table~\ref{tab:ijepa:ood:breakdown:rand_search} lists a per-dataset breakdown of the linear probe accuracy for checkpoints selected by various metrics.

\begin{table}
\centering
\begin{tabular}{cccccc}
\toprule
 Metric & LR & WD & Target mask & Context mask & Overall \\
 &  & & scale & scale &  \\
\midrule
\textcolor{gray}{ImageNet Oracle} & \textcolor{gray}{78.49} & \textcolor{gray}{78.49} & \textcolor{gray}{78.49} & \textcolor{gray}{78.49} & \textcolor{gray}{78.49} \\
RankMe & 72.09 & 75.45 & 75.45 & 75.15 & 72.09 \\
RankMe (aug.) (Student) & \textbf{77.78} & \textbf{77.78} & \textbf{77.78} & 77.78 & \textbf{77.78} \\
RankMe (aug.) (Teacher) & 72.09 & 72.14 & 70.20 & 75.15 & 72.09 \\
LiDAR (Student) & \textbf{77.78} & \textbf{77.78} & \textbf{77.78} & 77.78 & \textbf{77.78} \\
LiDAR (Teacher) & 74.33 & 76.84 & 73.91 & \textbf{78.30} & 74.33 \\
\bottomrule
\end{tabular}
\caption{\label{tab:ijepa:ood:grid_search}I-JEPA: Average linear probe accuracy recovered by \emph{RankMe} and \emph{LiDAR} on OOD datasets. Hyperparameters set via grid search.}
\end{table}

\begin{table}
  \centering
  \begin{tabular}{ccccccc}
    \toprule
     \multirow{2}{*}{Dataset} & \multirow{2}{*}{Metric} &  \multicolumn{5}{c}{Hyperparameters} \\
 \cmidrule(lr){3-7} &  & LR & WD & Target mask & Context mask & Overall\\ 
 & & & & scale & scale & \\
 \midrule 
 \multirow{6}{*}{CIFAR10} & ImageNet Oracle & $89.66$ & $89.66$ & $89.66$ & $89.66$ & $89.66$ \\ 
  & RankMe & 86.92 & 88.8 & 88.8 & 87.75 & 86.92 \\ 
  & RankMe (aug.) (Student) & 89.64 & 89.64  & 89.64 & 89.64 & 89.64  \\ 
  & RankMe (aug.) (Teacher) & 86.92 & 86.04 & 85.04 & 87.75 & 86.92 \\ 
  & LiDAR (Student) & 89.64 & 89.64 & 89.64 & 89.64 & 89.64 \\ 
  & LiDAR (Teacher) &87.86  & 89.29  & 82.73 & 89.99 & 87.66 \\ 
\midrule 
 \multirow{6}{*}{CIFAR100} & ImageNet Oracle & 69.03 & 69.03 & 69.03 & 69.03 & 69.03 \\ 
  & RankMe & 63.57 & 67.19 & 67.19 & 66.52 & 63.57 \\ 
  & RankMe (aug.) (Student) & 68.81 & 68.81  & 68.81 & 68.81 & 68.81  \\ 
  & RankMe (aug.) (Teacher) & 63.57 & 63.97 & 63.07 & 66.52 & 63.57 \\ 
  & LiDAR (Student) & 68.81 & 68.81 & 68.81 & 68.81 & 68.81 \\ 
  & LiDAR (Teacher) & 65.04  & 68.66  & 56.46 & 69.28 & 65.04 \\ 
\midrule 
\multirow{6}{*}{EuroSAT} & ImageNet Oracle & $95.02$ & $95.02$ & $95.02$ & $95.02$ & $95.02$ \\ 
 & RankMe & $94.56$ & $95.68$ & $95.68$ & $95.62$ & $94.56$ \\ 
 & RankMe (aug.) (Student) & $95.18$ & $95.18$ & $95.18$ & $95.18$ & $95.18$ \\ 
 & RankMe (aug.) (Teacher) & $94.56$ & $95.42$ & $94.92$ & $95.62$ & $94.56$ \\ 
 & LiDAR (Student) & $95.18$ & $95.18$ & $95.18$ & $95.18$ & $95.18$ \\ 
 & LiDAR (Teacher) & $93.16$ & $95.54$ & $94.6$ & $95.6$ & $93.16$ \\ 
\midrule
\multirow{6}{*}{Food101} & ImageNet Oracle & 72.65 & 72.65 & 72.65 & 72.65 & 72.65 \\ 
 & RankMe & 58.93  & 65.22 & 65.22 & 65.49 & 58.93  \\ 
 & RankMe (aug.) (Student) & 70.49 & 70.49 & 70.49 & 70.49 & 70.49 \\ 
 & RankMe (aug.) (Teacher) & 58.93 & 59.49 & 55.78 &	65.49 &	58.93 \\
 & LiDAR (Student)  & 70.79 &  70.79 & 70.79 &  70.79 & 70.79 \\ 
 & LiDAR (Teacher)  & 65.13 & 67.96 & 71.59  & 71.86 & 65.13 \\ 
\midrule
\multirow{6}{*}{SUN397} & ImageNet Oracle & 66.08 & 66.08 & 66.08 & 66.08 & 66.08 \\ 
 & RankMe & 56.46 & 60.36 & 60.36 & 60.37 & 56.46 \\ 
 & RankMe (aug.) (Student) & 64.47 & 64.47  & 64.47 & 64.47 & 64.47 \\ 
 & RankMe (aug.) (Teacher) & 56.46 & 55.8 & 52.17 & 60.37 & 56.46 \\
 & LiDAR (Student)  & 64.47	 & 64.47 & 64.47 & 64.47 & 64.47 \\ 
 & LiDAR (Teacher)  & 60.47 & 62.75 & 64.16 & 64.78 & 60.47 \\ 
% \midrule
\bottomrule
\end{tabular} 
  \caption{\label{tab:ijepa:ood:breakdown:grid_search}I-JEPA: Top-1 accuracies computed on representations when tuning hyperparameters with ImageNet validation performance, \emph{RankMe}, augmented \emph{RankMe} and \emph{LiDAR}. Hyperparameters set via grid search.}
\end{table}

\begin{table}
    \centering
    \resizebox{\textwidth}{!}{
    \begin{tabular}{c c c c c c}
    \toprule
     \textcolor{gray}{ImageNet Oracle} & RankMe & RankMe (aug.) & RankMe (aug.) & LiDAR & LiDAR  \\
      &  &  (Student) & (Teacher) & (Student) & (Teacher) \\
    \midrule
    \textcolor{gray}{74.85} & 73.24 & 73.24 & 73.24 & 76.23 & \textbf{77.34} \\
    \bottomrule
    \end{tabular}
    }
    \caption{\label{tab:ijepa:ood:rand_search}I-JEPA: Average linear probe accuracy recovered by \emph{RankMe} and \emph{LiDAR} on OOD datasets. Hyperparameters set via random search.}
\end{table}

\begin{table}
    \centering
     \resizebox{\textwidth}{!}{
        \begin{tabular}{c c c c c c c}
        \toprule
        % Dataset & RankMe & RankMe (aug.) (Student) & RankMe (aug.) (Teacher) & LiDAR (Student) & LiDAR (Teacher) & ImageNet Oracle \\

        Dataset & ImageNet Oracle & RankMe & RankMe (aug.) & RankMe (aug.) & LiDAR & LiDAR  \\
        & &  &  (Student) & (Teacher) & (Student) & (Teacher) \\
        \midrule
        CIFAR10 & 82.69 & 87.32 & 87.32 & 87.32 & 86.21 & 87.29 \\
        CIFAR100 & 58.66 & 65.98 & 65.98 & 65.98 & 65.92 & 64.91 \\
        EuroSAT & 95.60 & 95.68 & 95.68 & 95.68 & 96.04 & 96.06 \\
        Food101 & 70.94 & 61.16 & 61.16 & 61.16 & 69.60 & 71.86 \\
        SUN397 & 66.36 & 56.04 & 56.04 & 56.04 & 63.38 & 66.59 \\
        \bottomrule
        \end{tabular}
    }
     \caption{\label{tab:ijepa:ood:breakdown:rand_search}I-JEPA: Top-1 accuracies computed on representations when tuning hyperparameters with ImageNet validation performance, \emph{RankMe}, augmented \emph{RankMe} and \emph{LiDAR}. Hyperparameters set via random search}
\end{table}

\subsection{data2vec}

In this section, we study the OOD performance of data2vec models trained on ImageNet-1K the details of which are described in Appendix~\ref{appendix:subsec:impl:data2vec}. We select checkpoints based on ImageNet Oracle, \emph{RankMe} and \emph{LiDAR} as described in Appendix~\ref{appendix:subsec:impl:data2vec} and train them on the $5$ OOD datasets described in Appendix~\ref{appendix:ood:eval_protocol}. Table~\ref{tab:data2vec:ood:grid_search} shows the average linear probe accuracy of data2vec checkpoints chosen by ImageNet Oracle, \emph{RankMe},  augmented-\emph{RankMe} and \emph{LiDAR}. We observe from Table~\ref{tab:data2vec:ood:grid_search} that \emph{LiDAR} outperforms \emph{RankMe} and its augmented variant and in the case of mask ratio hyperparameter outperforms the ImageNet Oracle as well.The latter observation supports those made by \citet{rankme} on the need to have new methods that do not rely on ImageNet performance alone for checkpoint selection. Table~\ref{tab:data2vec:ood:breakdown:grid_search} lists the per-dataset breakdown of linear probe accuracy for data2vec checkpoints chosen by various metrics listed above.  

\begin{table}
\centering
    \begin{tabular}{cccc}
    \toprule
    Metric & LR & Mask ratio & Overall \\
    \midrule
    \textcolor{gray}{ImageNet Oracle} & \textcolor{gray}{68.69} & \textcolor{gray}{69.07} & \textcolor{gray}{68.69} \\
    RankMe & 58.86 & 58.86 & 58.86 \\
    RankMe (aug.) (Student) & 58.86 & 60.76 & 60.76 \\
    RankMe (aug.) (Teacher) & 58.86 & 60.76 & 60.76 \\
    LiDAR (Student) & \textbf{63.76} & \textbf{73.07} & \textbf{63.76} \\
    LiDAR (Teacher) & \textbf{63.76} & \textbf{73.07} & \textbf{63.76} \\
    \bottomrule
    \end{tabular}
\caption{\label{tab:data2vec:ood:grid_search}data2vec: Average linear probe accuracy recovered by \emph{RankMe} and \emph{LiDAR} on OOD datasets. Hyperparameters set via grid search.}
\end{table}

\begin{table}
  \centering
  \begin{tabular}{ccccc}
    \toprule
     \multirow{2}{*}{Dataset} & \multirow{2}{*}{Metric} &  \multicolumn{3}{c}{Hyperparameters} \\
 \cmidrule(lr){3-5} &  & LR & Mask ratio &  Overall\\ 
  & & & & \\
 \midrule 
 \multirow{6}{*}{CIFAR10} & ImageNet Oracle & 71.63 & 74.62 & 71.63 \\ 
  & RankMe & 60.21 & 60.21 & 60.21   \\ 
  & RankMe (aug.) (Student) & 60.21 & 61.11 & 61.11 \\ 
  & RankMe (aug.) (Teacher) & 60.21 & 61.11 & 61.11 \\ 
  & LiDAR (Student) & 63.87	& 84.46 & 63.87 \\ 
  & LiDAR (Teacher) & 63.87	& 84.46 & 63.87 \\ 
\midrule 
 \multirow{6}{*}{CIFAR100} & ImageNet Oracle & 42.48 & 46.85  & 42.28 \\ 
  & RankMe & 31.15 & 31.15 & 31.15 \\ 
  & RankMe (aug.) (Student) & 31.15 & 31.88 & 31.88 \\ 
  & RankMe (aug.) (Teacher) & 31.1 & 31.88 & 31.88 \\ 
  & LiDAR (Student) & 33.98 & 60.20 & 33.98 \\ 
  & LiDAR (Teacher) & 33.98 & 60.20 & 33.98 \\ 
\midrule 
\multirow{6}{*}{EuroSAT} & ImageNet Oracle & 93.06 & 92.98 & 93.06 \\ 
 & RankMe & 84.30 & 84.30 & 84.30 \\ 
 & RankMe (aug.) (Student) & 84.30 & 85.22 & 85.22 \\ 
 & RankMe (aug.) (Teacher) & 84.30 & 85.22 & 85.22 \\ 
 & LiDAR (Student) & 87.20 & 95.12 & 87.20 \\ 
 & LiDAR (Teacher) & 87.20 & 95.12 & 87.20  \\ 
\midrule
\multirow{6}{*}{Food101} & ImageNet Oracle & 70.41 & 68.16 & 70.41 \\ 
 & RankMe & 61.18 & 61.18 &  61.18 \\ 
 & RankMe (aug.) (Student)  & 61.18 & 65.64 & 65.64 \\ 
 & RankMe (aug.) (Teacher) & 61.18 & 65.64 &  65.64 \\
 & LiDAR (Student)  & 69.04	 & 64.80 & 69.04	  \\ 
 & LiDAR (Teacher)  & 69.04	 & 64.80 & 69.04	 \\ 
\midrule
\multirow{6}{*}{SUN397} & ImageNet Oracle & 66.09 & 62.73 & 66.09 \\ 
 & RankMe & 57.47 & 57.47 & 57.47 \\ 
 & RankMe (aug.) (Student) & 57.47 & 59.98 & 59.98  \\ 
 & RankMe (aug.) (Teacher) & 57.47 & 59.98 & 59.98 \\
 & LiDAR (Student)  & 64.72 & 60.79 & 64.72 \\ 
 & LiDAR (Teacher)  & 64.72 & 60.79 & 64.72 \\ 
% \midrule
\bottomrule
\end{tabular} 
  \caption{\label{tab:data2vec:ood:breakdown:grid_search}data2vec: Top-1 accuracies computed on representations when tuning hyperparameters with ImageNet validation performance, \emph{RankMe}, augmented \emph{RankMe} and \emph{LiDAR}. Hyperparameters set via grid search.}
\end{table}

\subsection{DINO}

In this section, we study the OOD performance of DINO models trained on ImageNet-1K the details of which are described in Appendix~\ref{appendix:subsec:impl:dino}. We select checkpoints based on ImageNet Oracle, \emph{RankMe} and \emph{LiDAR} as described in Appendix~\ref{appendix:subsec:impl:dino} and train them on the $5$ OOD datasets described in Appendix~\ref{appendix:ood:eval_protocol}. Table~\ref{tab:dino:ood:average} lists the average linear probe accuracy for the OOD datasets from which we observe that \emph{LiDAR} is able to show the same as or slightly better performance than both \emph{RankMe} and ImageNet Oracle for DINO hyperparameter selection. Table~\ref{tab:dino:ood:breakdown:grid_search} shows a per-dataset breakdown of probe accuracy on OOD datasets. 

\begin{table}
\centering
    \begin{tabular}{cccc}
    \toprule
    Metric & Teacher temperature & Student temperature & Overall \\
    \midrule
    \textcolor{gray}{ImageNet Oracle} & \textcolor{gray}{84.66} & \textcolor{gray}{84.59} & \textcolor{gray}{84.66} \\
    RankMe & 84.66 &  84.46 & 84.66  \\
    LiDAR (Student) & 84.66 & 84.72 & 84.66 \\
    LiDAR (Teacher) & 84.66 & 84.72 & 84.66 \\
    \bottomrule
    \end{tabular}
\caption{\label{tab:dino:ood:average}DINO: Average linear probe accuracy recovered by \emph{RankMe} and \emph{LiDAR} on OOD datasets. Hyperparameter values are described in Appendix~\ref{sec:details}.}
\end{table}

\begin{table}
  \centering
  \begin{tabular}{ccccc}
    \toprule
     \multirow{2}{*}{Dataset} & \multirow{2}{*}{Metric} &  \multicolumn{3}{c}{Hyperparameters} \\
 \cmidrule(lr){3-5} &  & Teacher & Student  &  Overall\\ 
 & temperature & temperature &  & \\
 \midrule 
 \multirow{4}{*}{CIFAR10} & ImageNet Oracle & 95.41 & 95.35 & 95.41 \\ 
  & RankMe & 95.40 & 95.32 & 95.40   \\ 
  & LiDAR (Student) & 95.40	& 95.51 & 95.40 \\ 
  & LiDAR (Teacher) & 95.40	& 95.51 & 95.40 \\ 
\midrule 
 \multirow{4}{*}{CIFAR100} & ImageNet Oracle & 81.27 & 81.31  & 81.27 \\ 
  & RankMe & 81.67 & 81.31 & 81.67 \\ 
  & LiDAR (Student) & 81.67 & 81.94 & 81.67 \\ 
  & LiDAR (Teacher) & 81.67 & 81.94 & 81.67 \\ 
\midrule 
\multirow{4}{*}{EuroSAT} & ImageNet Oracle & 96.38 & 96.40 & 96.38 \\ 
 & RankMe & 96.68 & 96.30 & 96.68 \\ 
 & LiDAR (Student) & 96.68 & 96.30 & 96.68 \\ 
 & LiDAR (Teacher) & 96.68 & 96.30 & 96.68  \\ 
\midrule
\multirow{4}{*}{Food101} & ImageNet Oracle & 79.23 & 79.19 & 79.23 \\ 
 & RankMe & 79.25 & 78.99 &  79.25 \\ 
 & LiDAR (Student)  & 79.25	 & 79.47 & 79.25 \\ 
 & LiDAR (Teacher)  & 79.25	 & 79.47 & 79.25 \\ 
\midrule
\multirow{4}{*}{SUN397} & ImageNet Oracle & 71.00 & 70.70 & 71.00 \\ 
 & RankMe & 70.29 & 70.40 & 70.29 \\ 
 & LiDAR (Student)  & 70.29 & 70.39 & 70.29 \\ 
 & LiDAR (Teacher)  & 70.29 & 70.39 & 70.29 \\ 
% \midrule
\bottomrule
\end{tabular} 
  \caption{\label{tab:dino:ood:breakdown:grid_search}DINO: Top-1 accuracies computed on representations when tuning hyperparameters with ImageNet validation performance, \emph{RankMe}, augmented \emph{RankMe} and \emph{LiDAR}. Hyperparameter values are described in Appendix~\ref{sec:details}.}
\end{table}

\subsection{VICReg}

The VICReg model is pretrained on ImageNet using the same hyperparameter grid as the one published in \citep{rankme} (each grid point for 100 epochs). From the set of the last checkpoints per grid point, a pool of the best-performing models, according to ImageNet validation performance, \emph{RankMe}, and the proposed \emph{LiDAR}, was selected. We conducted probe analysis on the resulting pool of checkpoints using five different OOD datasets. The probe analysis employed learning parameters reported in \citep{rankme} (standard protocol in VISSL~\citep{goyal2021vissl}). Table~\ref{tab:vicreg:ood:average:grid_search} below summarizes the probe accuracies averaged over the $5$ OOD datasets considered in our evaluation for the checkpoint selected by each metric (one of ImageNet Oracle, \emph{RankMe} and \emph{LiDAR}) for a given hyperparameter. We observe that \emph{LiDAR} consistently outperforms \emph{RankMe} across all hyperparameters except learning rate where \emph{LiDAR} matches \emph{RankMe}. Additionally, we observe that \emph{LiDAR} is able to select hyperparameters that improve the performance over ImageNet Oracle in certain cases which adds to the observations made by~\citet{rankme} on the need to have new methods to select hyperparameters. Table~\ref{tab:vicreg:ood:breakdown:grid_search} provides a per-dataset breakdown of the linear probe accuracy on the $5$ downstream datasets considered in our expeirments.

% evaluation results (the entries are probe accuracies for the checkpoint selected by a given metric per hyperparameter dimension).

\begin{table}
\centering
    \begin{tabular}{c c c c c}
    \toprule
       Method  & Cov &  inv & LR & WD \\
      \midrule
         ImageNet Oracle &77.83 & 77.61  & 76.94 & 76.97 \\
         RankMe & 77.53 & 75.53    & 76.94 & 74.93 \\
         LiDAR  &  78.04  &  77.74 & 76.94 & 74.93 \\
    \bottomrule
    \end{tabular}
    \caption{VICReg: Average Top-1 probe accuracies (across OOD datasets) computed on representation (following VISSL protocol) when tuning hyperparameters with ImageNet validation performance.}
    \label{tab:vicreg:ood:average:grid_search}
\end{table}

\begin{table}
\centering
    \begin{tabular}{c c c c c c}
    \toprule
      Dataset    & Method  & Cov &  inv & LR & WD \\
      \midrule
              CIFAR10   & ImageNet Oracle & 88.49 & 88.06  & 87.25 & 87.48\\
                & RankMe & 88.22 & 87.05    & 87.25 & 86.54 \\
                & LiDAR  & 88.68   &  88.47 & 87.25 & 86.54 \\
        \midrule
        CIFAR100   & ImageNet Oracle & 69.62 & 69.46  & 68.76 & 69.5\\
                & RankMe & 69.93 & 67.83    & 68.76 & 67.02 \\
                & LiDAR  & 70.65   &  70.2 & 68.76 & 67.02 \\
      \midrule
      EuroSAT   & ImageNet Oracle & 95.9 & 95.88    & 95.0 & 95.6\\
                & RankMe & 95.46  & 94.64    & 95.0 & 94.88 \\
                & LiDAR  & 95.94   &  95.66 & 95.0 & 94.88 \\
    \midrule
        Food101   & ImageNet Oracle & 70.07 & 69.66    & 69.37 & 68.62\\
                & RankMe & 69.43  & 66.34    & 69.37 & 65.35 \\
                & LiDAR  & 70.01   &  69.84 & 69.37 & 65.35 \\
    \midrule
        SUN397   & ImageNet Oracle & 65.09 & 64.99   & 64.32 & 63.66\\
                & RankMe & 64.63 & 61.79    & 64.32 & 60.86 \\
                & LiDAR  & 64.94  &  64.51 & 64.32 & 60.86 \\
      \bottomrule
    \end{tabular}
    \caption{VICReg: Top-1 probe accuracies computed on representation (following VISSL protocol) when tuning hyperparameters with ImageNet validation performance.}
    \label{tab:vicreg:ood:breakdown:grid_search}
\end{table}

\subsection{SimCLR}

The SimCLR model is pretrained on ImageNet using the random hyperparameter search policy as specified in Appendix~\ref{appendix:subsec:impl:simclr}. For better evaluation performance on OOD datasets, we use stronger augmentations during SimCLR pretraining. Specifically, besides the default augmentations, we follow the original SimCLR paper \citep{simclr} to use extra augmentations of Sobel filtering, equalization and motion blur.
We conducted probe analysis on five OOD datasets using the learning parameters listed in \citep{rankme}. Table~\ref{tab:simclr:ood:average:grid_search} below summarizes the average probe accuracies for the checkpoint selected by a given metric (ImageNet validation performance, RankMe, and our LiDAR) per hyperparameter. We observe that \emph{LiDAR} outperforms \emph{RankMe} in being able to select better hyperparameters across all settings considered in our experiments. \emph{LiDAR} also recovers most of the performance seen with ImageNet Oracle. Table~\ref{tab:simclr:ood:breakdown:grid_search} provides a per-dataset breakdown of the three methods used to select hyperparameters.

\begin{table}
\centering
    \begin{tabular}{c c c c}
    \toprule
       Method  & Temp &  LR & WD \\
      \midrule
         ImageNet Oracle & 80.68 & 80.84 & 80.76 \\
         RankMe & 78.10   & 79.48 & 79.36 \\
         LiDAR  &  79.88 & 80.30 & 80.24 \\
    \bottomrule
    \end{tabular}
    \caption{SimCLR: Average Top-1 probe accuracies (across OOD datasets) computed on representation when tuning hyperparameters (random search) based on ImageNet validation performance.}
    \label{tab:simclr:ood:average:grid_search}
\end{table}

\begin{table}
\centering
    \begin{tabular}{c c c c c}
    \toprule
      Dataset    & Method  & Temp &  LR & WD \\
      \midrule
              CIFAR10   & ImageNet Oracle & 90.51  & 90.83 & 90.07\\
                & RankMe & 88.24    & 89.57 & 90.14 \\
                & LiDAR  & 89.78 & 90.13 & 90.54 \\
        \midrule
        CIFAR100   & ImageNet Oracle & 73.52 & 73.47 & 73.38\\
                & RankMe & 70.36 & 72.71 & 73.32 \\
                & LiDAR  & 72.63   & 73.18 & 73.43 \\
      \midrule
      EuroSAT   & ImageNet Oracle & 96.27 & 96.51 & 96.54\\
                & RankMe & 95.24  & 95.68  & 95.73 \\
                & LiDAR  & 95.92 & 96.31 & 95.39 \\
    \midrule
        Food101   & ImageNet Oracle & 74.63 & 64.79 & 74.84\\
                & RankMe & 71.53  & 72.58 & 71.22 \\
                & LiDAR  & 73.78 & 74.13 & 73.84 \\
    \midrule
        SUN397   & ImageNet Oracle & 68.47 & 68.59 & 69.03\\
                & RankMe & 65.23 & 66.84 & 66.46 \\
                & LiDAR  & 67.34  & 67.83 & 68.06 \\
      \bottomrule
    \end{tabular}
    \caption{SimCLR: Top-1 probe accuracies computed on representation when tuning hyperparameters (random search) based on ImageNet validation performance.}
    \label{tab:simclr:ood:breakdown:grid_search}
\end{table}

\clearpage
\newpage
\section{Sensitivity of LiDAR Hyperparameters}
\label{appendix:sec:sensitivity:ijepa}

We note from Section~\ref{subsubsec:compute_considerations} that the rank of \emph{LiDAR} is bounded by the number of surrogate classes $n$ which in our case corresponds to the number of ``clean'' image samples we draw at random from the source dataset. In practice we recommend choosing a value for $n$ that is greater than the length of feature vectors ($p$) (representations or emebddings) to avoid imposing an artificial constraint on the ranks estimated by \emph{LiDAR}. The number of samples per class $q$ is another important hyperparameter. We conduct an ablation to study the impact of this hyperparameter with I-JEPA embeddings. Figure~\ref{fig:sensitivity:i_jepa} shows the results from this ablation. As we can see from Figure~\ref{fig:sensitivity:i_jepa}, \emph{LiDAR} for both the teacher and student is already close to $99\%$ of its final value at $q=10$ and reaches more than $99\%$ of its final value at $q=50$, which is the value used in our experiments. 

\begin{figure}
    \centering
    \includegraphics[width=0.5\linewidth]{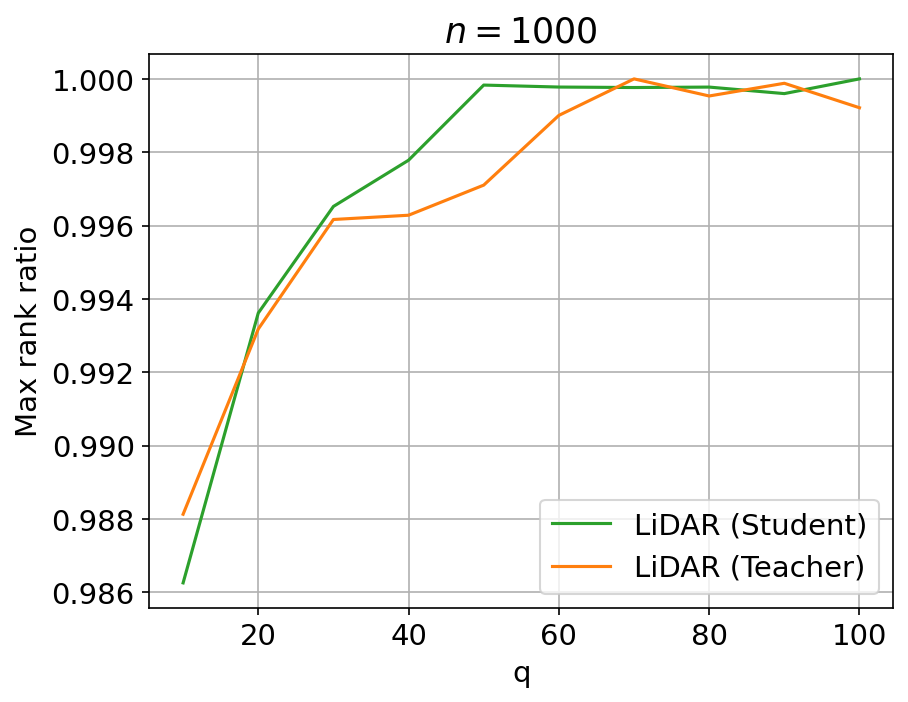}
    \caption{\emph{LiDAR} sensitivity to the number of samples per class $q$ hyperparameter. $q$ is varied from 10 to 100 while $n$ is fixed to $1000$ to evaluate an I-JEPA checkpoint.}
    \label{fig:sensitivity:i_jepa}
\end{figure}

\subsection{Hyperparameter Selection}
\label{appendix:sec:hparam_select}

The results from Appendix~\ref{appendix:sec:sensitivity:ijepa} suggest that a value of $n=1000$ and $q=50$ provides a satisfactory estimate of \emph{LiDAR} for I-JEPA. We copy these hyperparameters to data2vec and DINO settings as these methods produce features that are smaller than or equal to the features provided by I-JEPA. Our empirical results with these methods shown in the main text and the appendix indicate that these hyperparameters are sufficient to saturate performance.

SimCLR and VICReg produce features of length $2048$ that requires us to use a higher value of $n=5000$ to ensure good estimates of the effective rank. We lower the value of $q$ to $10$ to ensure that the size of \emph{LiDAR} dataset is the same across all of our evaluations.

\clearpage
\newpage
\section{Runtime Comparison of \emph{LiDAR} and \emph{RankMe}}
\label{appendix:sec:runtime}

\begin{table}
    \centering
    \begin{tabular}{cccc}
        \toprule
        Method & Sample count & Median time (std. dev.) & Feature extraction\\
        
         & & (seconds) &  (seconds) \\
         \midrule
         RankMe & 10K & 0.38 (2.38) & 11.29 \\
         LiDAR (Student) & 10K (n=1K, q=10) & 0.53 (0.07) &  23.22 \\
         LiDAR (Student) & 50K (n=1K, q=50) & 0.71 (0.06) & 96.95 \\
         \bottomrule
    \end{tabular}
    \caption{Wall-clock time to calculate effective rank for \emph{RankMe} and \emph{LiDAR}. The emebddings are gathered from an I-JEPA~\citep{ijepa} checkpoint for all methods.}
    \label{tab:methods:runtime}
\end{table}

Table~\ref{tab:methods:runtime} shows the wall-clock time of \emph{RankMe} and \emph{LiDAR} methods. We use the checkpoint that corresponds to the ImageNet Oracle for I-JEPA and calculate \emph{RankMe} and \emph{LiDAR} using the above checkpoint. The feature extraction uses a batch size of 200 and uses $10K$ samples for \emph{RankMe} and $10K$ for $\emph{LiDAR}$ to ensure comparison is conducted with the same number of samples and $50K$ as this is the operating point for \emph{LiDAR} with I-JEPA. The feature extraction is done on one GPU while the metrics are implemented on the CPU. We run $10$ trials to ensure we can obtain a reasonable estimate of run-time variability. We present the median wall-clock times in Table~\ref{tab:methods:runtime} to guard against outliers while we also show the standard deviation in parentheses. We observe from Table~\ref{tab:methods:runtime} that \emph{RankMe} is faster than \emph{LiDAR} as expected. However, both methods are significantly faster than the feature extraction step, pinpointing the main place where optimizing the pipeline can be beneficial. We note here that feature extraction for \emph{LiDAR} involves running multiple forward passes through a predictor network for I-JEPA~\citep{ijepa} which adds to the compute time. In this work, we use standard off-the-shelf implementations for \emph{RankMe} and \emph{LiDAR}. A detailed study of an efficient implementation for \emph{LiDAR} and its complexity analysis is left as a topic for future work. The work in \citep{LdaAnalysis} could be a relevant starting point. Notably, the running time difference between these methods is of the order of seconds/minutes which is negligible compared to model training or downstream task probing that may require additional data and/or hyperparameter optimization. 

\clearpage
\newpage

\section{\emph{RankMe} Estimate versus Number of Input Samples}
\label{appendix:sec:convergence:ijepa:rankme}

\begin{figure}
    \centering
    \includegraphics[width=0.5\linewidth]{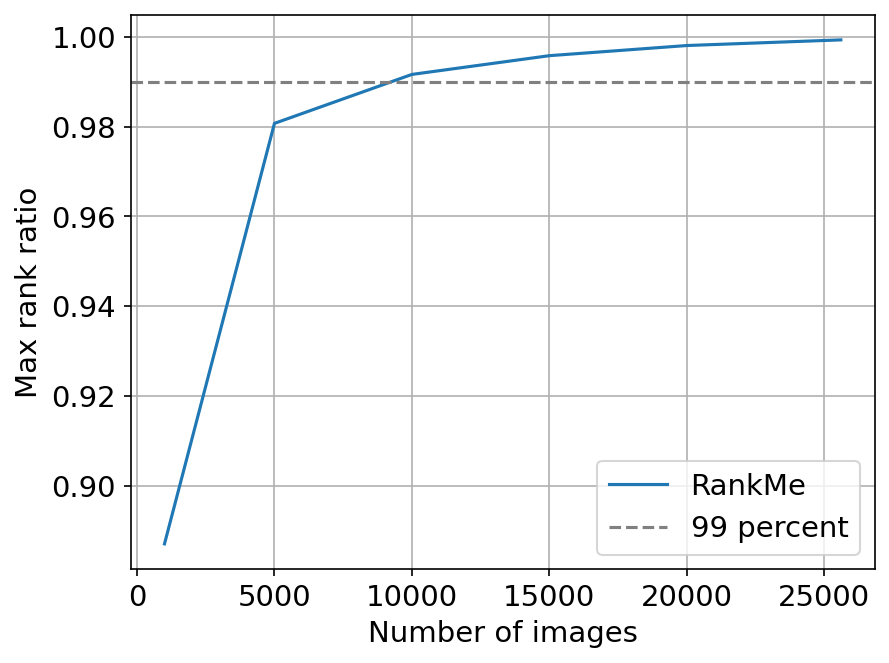}
    \caption{\emph{RankMe} estimates versus the number of samples for I-JEPA.}
    \label{fig:rankme_convergence:i_jepa}
\end{figure}

\paragraph{I-JEPA:} We run an experiment with an I-JEPA~\citep{ijepa} checkpoint to determine the beahvior of \emph{RankMe} estimates as a function of the number of samples. We vary the number of samples from $1000$ to $25600$ where the former number corresponds to the number of samples $n$ used in \emph{LiDAR} while $25600$ samples represents the number of samples recommended by \citet{rankme}. We note here that the value of $25600$ is determined for $2048$-dimensional vectors while I-JEPA outputs $768$-dimensional vectors. We observe from Figrue~\ref{fig:rankme_convergence:i_jepa} that we obtain a rank estimate that is greater than $99\%$ of its maximum value by just using $10000$ samples. This ablation suggests that the number of samples used to estimate \emph{RankMe} is reasonable for I-JEPA and data2vec that produce feature vectors that have identical sizes.
}  % \revision

\clearpage
\newpage
\section{Number of Samples for \emph{RankMe} with I-JEPA and data2vec Representations}
\label{appendix:sec:rankme:n_samples:ijepa_data2vec}

\begin{table}[!h]
\centering
\begin{tabular}{c c c}
\toprule
Hyperparameter     & 10000 samples & 25600 samples\\
                   & (RankMe)      & (RankMe)\\
\midrule
Learning rate      & 0.6835 &  0.6835 \\
Weight decay       & 0.5040 &  0.5050 \\
Target mask scale  & 0.2867 &  0.2847 \\
Context mask scale & 0.4246 &  0.4256 \\
\midrule
Overall            & 0.5830 &  0.5829 \\
\bottomrule
\end{tabular}
\caption{\label{tab:rankme:n_samples:ijepa:gridhp}I-JEPA: Kendall's $\tau$ coefficient between effective ranks of \emph{RankMe} and linear probe accuracy. \emph{RankMe} estimated with  10000 samples and 25600 samples. Hyperparameters are varied via grid sampling.}
\end{table}

\begin{table}[!h]
\centering
\begin{tabular}{c c c}
\toprule
Hyperparameter & 10000 samples & 25600 samples\\
               & (RankMe)      & (RankMe)\\
\midrule
Overall        & 0.8314        & 0.8308 \\
\bottomrule
\end{tabular}
\caption{\label{tab:rankme:n_samples:ijepa:randhp}I-JEPA: Kendall's $\tau$ coefficient between effective ranks of \emph{RankMe} and linear probe accuracy. \emph{RankMe} estimated with  10000 samples and 25600 samples. Hyperparameters are varied via random sampling.}
\end{table}

\begin{table}[!h]
\centering
\begin{tabular}{c c c}
\toprule
Hyperparameter & 10000 samples & 25600 samples \\
               & (RankMe)      & (RankMe) \\
\midrule
Learning rate  & 0.2410        &  0.2392 \\
Mask ratio     & 0.2683        &  0.2706 \\
\midrule
Overall        & 0.2238        &  0.2216 \\
\bottomrule
\end{tabular}
\caption{\label{tab:rankme:n_samples:data2vec:gridhp}data2vec: Kendall's $\tau$ coefficient between effective ranks of \emph{RankMe} and linear probe accuracy. \emph{RankMe} estimated with  10000 samples and 25600 samples. Hyperparameters are varied via grid sampling.}
\end{table}

Our analysis in Appendix~\ref{appendix:sec:convergence:ijepa:rankme} suggests that using $10000$ samples may be sufficient to obtain a highly quality of estimate of the effective rank of the embeddings, i.e., \emph{RankMe}. \citet{rankme} suggest using $25600$ samples based on their analysis of with 2048-dimensional vectors. We study the impact of the number of samples on \emph{RankMe} as it pertains to the correlation between \emph{RankMe} and linear probing performance. We test the behavior of \emph{RankMe} with $10000$ and $25600$ samples and calculate the correlation between \emph{RankMe} linear probe accuracy. Table~\ref{tab:rankme:n_samples:ijepa:gridhp}, Table~\ref{tab:rankme:n_samples:ijepa:randhp} and Table~\ref{tab:rankme:n_samples:data2vec:gridhp} show the results of this analysis for I-JEPA with hyperparameters set via grid search, I-JEPA wtih hyperparameters set via random search and data2vec with hyperparameters set via grid search respectively. The Kendall's $\tau$ correlation coefficient shows minor differences and the linear probe accuracy recovered for these models for in-domain data is identical (Table~\ref{tab:ijepa:ckpt_selection}, Table~\ref{tab:ijepa:ckpt_selection:rand_hparams}, Table~\ref{tab:data2vec:main:ckpt_selection}) and the performance on OOD data is also identical (accuracies available in Appendix~\ref{appendix:sec:ood}) as both estimates of \emph{RankMe} recover identical checkpoints. Figure~\ref{fig:rankme:n_samples:ijepa:gridhp}, Figure~\ref{fig:rankme:n_samples:ijepa:randhp}, and 
Figure~\ref{fig:rankme:n_samples:data2vec:gridhp}, show a scatter plot of linear probe accruacy versus \emph{RankMe}. We observe that while there are minor differences in the effective ranks calculated with the number of input samples, the overall shape of the distribution that is also captured by Kendall's-$\tau$ are very similar which is also reflected in the data presented in the tables above. This ablation suggests that the predictive performance of \emph{RankMe} on I-JEPA and data2vec representations is stable once with a sufficiently large number of samples.

\begin{figure}
    \centering
    \includegraphics[width=0.5\linewidth]{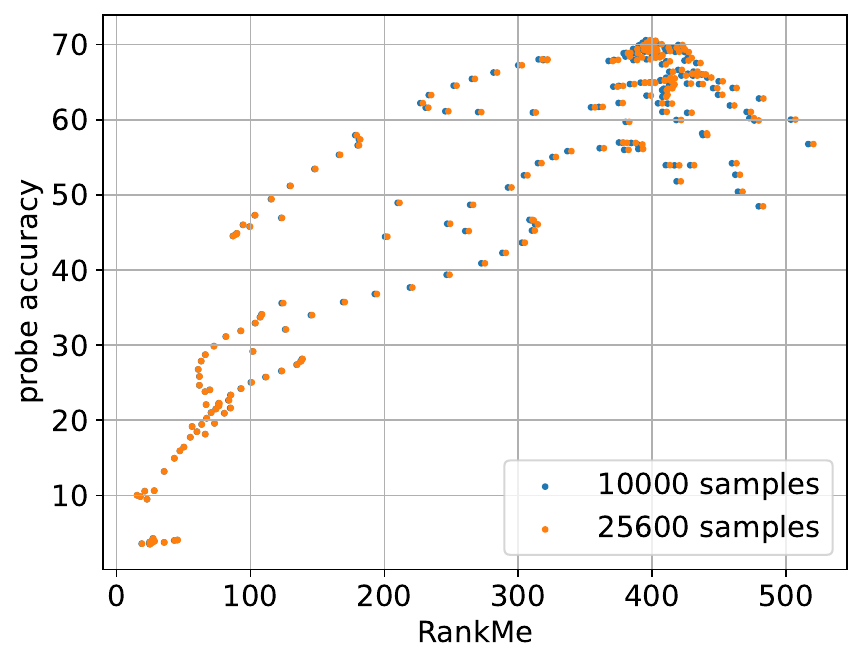}
    \caption{I-JEPA: Effect of the number of samples on \emph{RankMe}. Parameters set via grid search.}
    \label{fig:rankme:n_samples:ijepa:gridhp}
\end{figure}

\begin{figure}
    \centering
    \includegraphics[width=0.5\linewidth]{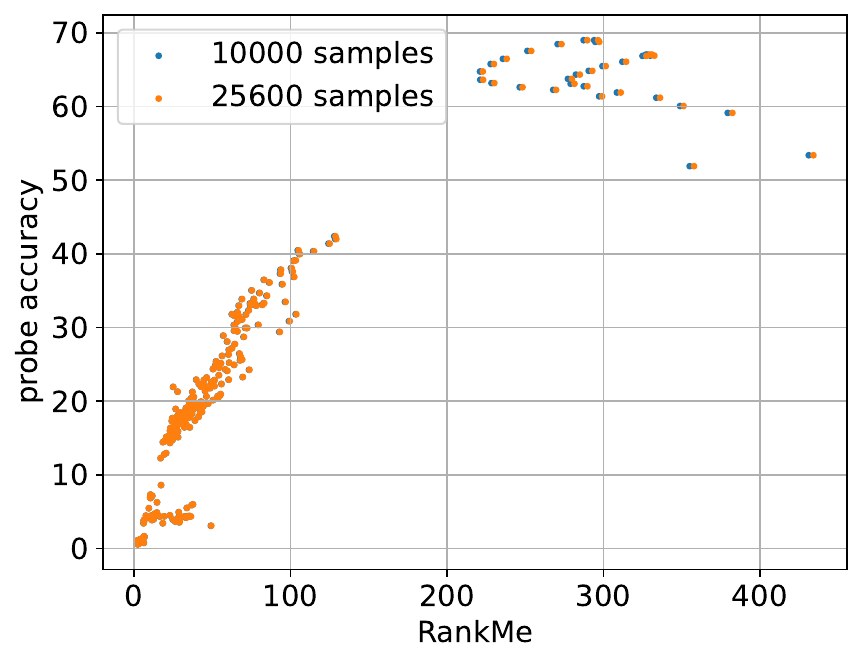}
    \caption{I-JEPA: Effect of the number of samples on \emph{RankMe}. Parameters set via random search.}
    \label{fig:rankme:n_samples:ijepa:randhp}
\end{figure}

\begin{figure}
    \centering
    \includegraphics[width=0.5\linewidth]{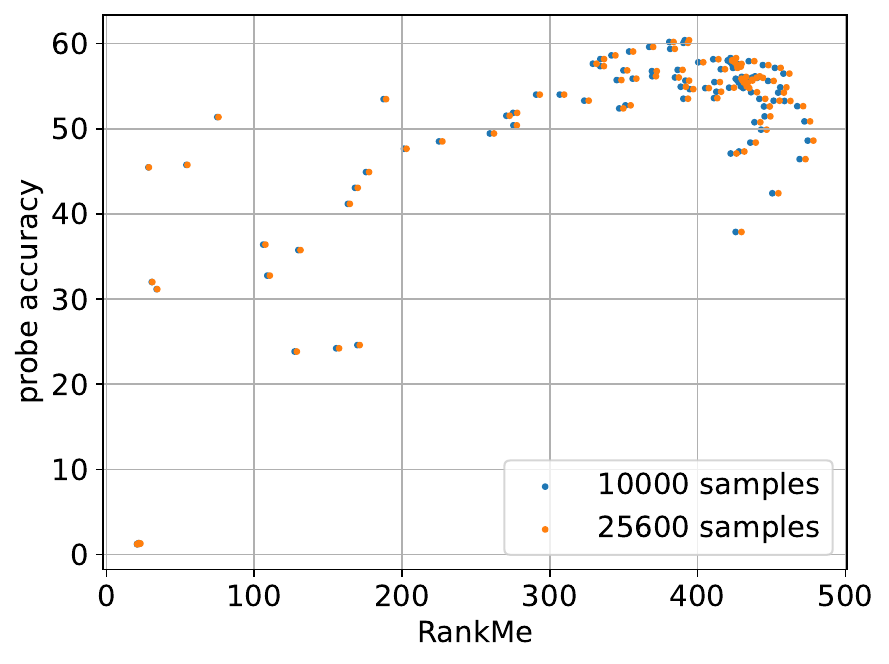}
    \caption{data2vec: Effect of the number of samples on \emph{RankMe}. Parameters set via grid search.}
    \label{fig:rankme:n_samples:data2vec:gridhp}
\end{figure}

\end{document}